\begin{document}

%

%

\twocolumn[

\aistatstitle{Approximate Cross-Validation in High Dimensions with Guarantees}

\aistatsauthor{ William T. Stephenson \And Tamara Broderick}

\aistatsaddress{ MIT CSAIL \And  MIT CSAIL}]

\begin{abstract}
Leave-one-out cross-validation (LOOCV) can be particularly accurate among cross-validation (CV) variants for machine learning assessment tasks -- e.g., assessing methods' error or variability. But it is expensive to re-fit a model $N$ times for a dataset of size $N$. Previous work has shown that approximations to LOOCV can be both fast and accurate -- when the unknown parameter is of small, fixed dimension. But these approximations incur a running time roughly cubic in dimension -- and we show that, besides computational issues, their accuracy dramatically deteriorates in high dimensions. 
Authors have suggested many potential and seemingly intuitive solutions, but these methods have not yet been systematically evaluated or compared. We find that all but one perform so poorly as to be unusable for approximating LOOCV. Crucially, though, we are able to show, both empirically and theoretically, that one approximation can perform well in high dimensions -- in cases where the high-dimensional parameter exhibits sparsity. Under interpretable assumptions, our theory demonstrates that the problem can be reduced to working within an empirically recovered (small) support. This procedure is straightforward to implement, and we prove that its running time and error depend on the (small) support size even when the full parameter dimension is large.

\end{abstract}

\section{Introduction} \label{introduction}

Assessing the performance of machine learning methods is an important task in medicine, genomics, and other applied fields. Experts in these areas are interested in understanding methods' error or variability and, for these purposes, often turn to cross validation (CV); see, e.g., \citet{saeb:2017:cvExample,powers:2019:cvExample,carrera:2009:cvExample,joshi:2009:cvExample,chandrasekaran:2011:cvExample,biswal:2001:jackknifeExample,roff:1994:jackknifeExample}. Even after decades of use \citep{stone:1974:earlyCV,geisser:1975:earlyCV}, CV remains relevant in modern high-dimensional and complex problems. In these cases, CV provides, for example, better out-of-sample error estimates than simple test error or training error \citep{stone:1974:earlyCV}. Moreover, among variants of CV, leave-one-out CV (LOOCV) offers to most closely capture performance on the dataset size of interest. For instance, LOOCV is particularly accurate for out-of-sample error estimation \citep[Sec.\ 5]{arlot:2010:cvOverview}.\footnote{In the case of linear regression, LOOCV provides the least biased and lowest variance estimate of out-of-sample error among other CV methods \citep{burman:1989:cvExperiments}.}

Modern datasets, though, pose computational challenges for CV. For instance, CV requires running a machine learning algorithm many times, especially in the case of LOOCV. This expense has led to recent proposals to \emph{approximate} LOOCV \citep{obuchi:2016:linearALOO,obuchi:2018:logisticALOO,beirami:2017:firstALOO, rad:2018:detailedALOO, wang:2018:primalDualALOO, giordano:2018:ourALOO,xu:2019:ALOOAMP}.
Theory and empirics demonstrate that these approximations are fast and accurate -- as long as the dimension $D$ of the unknown parameter in a problem is low. Unfortunately a number of issues arise in high dimensions, the exact case of modern interest. First, existing error bounds for LOOCV approximations either assume a fixed $D$ or suffer from poor error scaling when $D$ grows with $N$. One might wonder whether the theory could be improved, but our own experiments (see, e.g., \cref{fig-errorScalingExample}) confirm that LOOCV approximations can suffer considerable error degradation in high dimensions in practice. Second, even if the approximations were accurate in high dimensions, these approximations require solving a $D$-dimensional linear system, which incurs an $O(D^3)$ cost.

Previous authors have proposed a number of potential solutions for one or both of these problems, but these methods have not yet been carefully evaluated and compared. (\#1) \citet{koh:2017:influenceFunctions} use a randomized solver \citep{agarwal:2016:lissa} successfully for qualitative analyses similar to high-dimensional approximate CV, so it is natural to think the same technique might speed up approximate CV in high dimensions.
Another option is to consider that the unknown parameter may effectively exist in some subspace with much lower dimension that $D$. For instance, $\ell_1$ regularization offers an effective and popular means to recover a sparse parameter support.\footnote{Note that sparsity, induced by $\ell_1$ regularization, is typically paired with a focus on generalized linear models (GLMs) since these models simplify when many parameters are set to zero, are tractable to analyze with theory, and typically form the building blocks for even more complex models.}
Since existing approximate CV methods require twice differentiability of the regularizer, they cannot be applied directly with an $\ell_1$ penalty. (\#2) Thus, a second proposal -- due to \citet{rad:2018:detailedALOO,wang:2018:primalDualALOO} -- is to apply existing approximate CV methods to a smoothed version of the $\ell_1$ regularizer. (\#3) A third proposal -- made by, e.g., \citet{burman:1989:cvExperiments} -- is to ignore modern approximate CV methods, and speed up CV by uniform random subsampling of LOOCV folds.

We show that all three of these methods fail to address the issues of approximate CV in high dimensions. (\#4) A fourth proposal -- due to \citet{rad:2018:detailedALOO,wang:2018:primalDualALOO,obuchi:2016:linearALOO,obuchi:2018:logisticALOO,beirami:2017:firstALOO} -- is to again consider $\ell_1$ regularization for sparsity. But in this case, the plan is to fit the model once with the full dataset to find a sparse parameter subspace and then apply existing approximate CV methods to only this small subspace.

In what follows, we demonstrate with both empirics and theory that proposal \#4 is the only method that is fast and accurate for assessing out-of-sample error. We emphasize, moreover, its simplicity and ease of implementation.
On the theory side, we show in \cref{sec-theory} that proposal \#4 will work if exact LOOCV rounds recover a shared support. Our major theoretical contribution is to prove that, under mild and interpretable conditions, the recovered support is in fact shared across rounds of LOOCV with very high probability (\cref{sec-linearRegression,sec-logisticRegression}). \citet{obuchi:2016:linearALOO} have considered a similar setup and shown that the effect of the change in support is asymptotically negligible for $\ell_1$-regularized linear regression; however, they do not show the support is actually shared. Additionally, \citet{beirami:2017:firstALOO,obuchi:2018:logisticALOO} make the same approximation in the context of other GLMs but without theoretical justification. We justify such approximations by proving that the support is shared with high probability in the practical \emph{finite-data} setting -- even for the very high-dimensional case $D = o(e^N)$ -- for both linear and logistic regression (\cref{linearRegressionTheorem,logisticRegressionTheorem}). Our support stability result may be of independent interest and allows us to show that, with high probability under finite data, the error and time cost of proposal \#4 will depend on the support size -- typically much smaller than the full dimension -- rather than $D$. Our experiments in \cref{sec-experiments} on real and simulated data confirm these theoretical results.

\begin{figure}
  \centering
  \includegraphics[scale=.5]{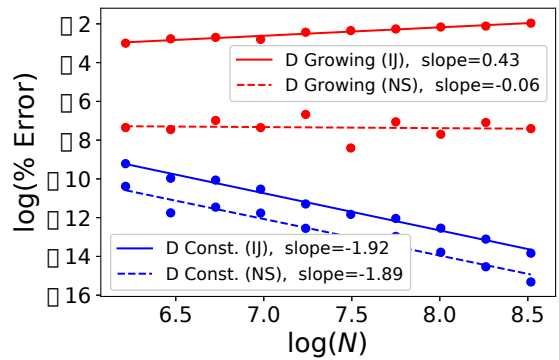}
  \caption{Log percent error (\cref{percentError}) of existing approximate LOOCV methods (``IJ'' and ``NS'') as a function of dataset size $N$ for $\ell_2$ regularized logistic regression. Dashed lines show \cref{modifiedApproximation} (``NS'') and solid show \cref{regularApproximation} (``IJ''). Blue lines have fixed data/parameter dimension $D$, while red lines have $D = N/10 $, although the true parameter has a fixed support size of $\deff = 2$ (see \cref{sec-approximation} for a full description). IJ and NS fail to capture this low ``effective dimension'' and suffer from substantially worse performance in high dimensions.}
   \label{fig-errorScalingExample}
\end{figure}

\textbf{Model assessment vs.\ selection.}
\citet{stone:1974:earlyCV,geisser:1975:earlyCV} distinguish at least two uses of CV: model assessment and model selection. Model assessment refers to estimating the performance of a single, fixed model. Model selection refers to choosing among a collection of competing models. We focus almost entirely on model assessment -- for two principal reasons. First, as discussed above, CV is widely used for model assessment in critical applied areas -- such as medicine and genetics. Before we can safely apply approximate CV for model assessment in these areas, we need to empirically and theoretically verify our methods. Second, historically, rigorous analysis of the properties of model selection even for \emph{exact} CV has required significant additional work beyond analyzing CV for model assessment. In fact, exact CV for model selection has only recently begun to be theoretically understood for $\ell_1$ regularized linear regression \citep{homrighausen:2013:l1Kfold,homrighausen:2014:l1LOO,chetverikov:2019:l1Kfold}. Our experiments in \cref{app-lambdaSelection} confirm that approximate CV for model selection exhibits complex behavior. We thus expect significant further work, outside the scope of the present paper, to be necessary to develop a theoretical understanding of approximate CV for model selection. Indeed, to the best of our knowledge, all existing theory for the accuracy of approximate CV applies only to model assessment \citep{beirami:2017:firstALOO,rad:2018:detailedALOO,giordano:2018:ourALOO,xu:2019:ALOOAMP,koh:2019:IJNSBounds}.

\section{Overview of Approximations} \label{sec-approximation}

Let $\theta \in \Theta \subseteq \R^{D}$ be an unknown parameter of interest.
Consider a dataset of size $N$, where $n \in [N] := \{1,2,\ldots,N\}$ indexes the data point.
Then a number of problems -- such as maximum likelihood, general M-estimation, and regularized loss minimization --
can be expressed as solving
\begin{equation}
	\hat\theta := \argmin_{\theta \in \Theta} \frac{1}{N} \sum_{n=1}^N f_n(\theta) + \lambda R(\theta),\label{ourOptProblem} \end{equation}
where $\lambda \geq 0$ is a constant, and $R: \Theta \to \R_+$ and $f_n: \Theta \to \R$ are functions. For instance, $f_n$ might be the loss associated with the $n$th data point, $R$ the regularizer, and $\lambda$ the amount of regularization. Consider a dataset where the $n$th data point has covariates $x_n \in \R^D$ and response $y_n \in \R$. In what follows, we will be interested in taking advantage of sparsity. With this in mind, we focus on generalized linear models (GLMs), where $f_n(\theta) = f(x_n^T \theta, y_n)$, as they offer a natural framework where sparsity can be expressed by choosing many parameter dimensions to be zero.

In LOOCV, we are interested in solutions of the same problem with the $n$th data point removed.\footnote{See \cref{app-CVReview} for a brief review of CV methods.} To that end,\footnote{Note our choice of $1/N$ scaling here -- instead of $1/(N-1)$. While we believe this choice is not of particular importance in the case of LOOCV, this issue does not seem to be settled in the literature; see \cref{app-LOOScaling}.} define $\thetan := \argmin_{\theta \in \Theta} \frac{1}{N} \sum_{m:\, m\neq n} f_m(\theta) + \lambda R(\theta)$. Computing $\thetan$ exactly across $n$ usually requires $N$ runs of an optimization procedure -- a prohibitive cost. Various approximations, detailed next, address this cost by solving \cref{ourOptProblem} only once.

\textbf{Two approximations.} Assume that $f$ and $R$ are twice differentiable functions of $\theta$. Let $F(\theta) := (1/N)\sum_n f(x_n^T \theta, y_n)$ be the unregularized objective, and let $H(\theta) := \nabla^2_{\theta} F(\theta) + \lambda\nabla^2_{\theta} R(\theta)$ be the Hessian matrix of the full objective. For the moment, we assume appropriate terms in each approximation below are invertible. \citet{beirami:2017:firstALOO,rad:2018:detailedALOO, wang:2018:primalDualALOO, koh:2019:IJNSBounds} approximate $\thetan$ by taking a Newton step (``NS'') on the objective $(1/N)\sum_{m:\, m\neq n} f_m + \lambda R$ starting from $\hat\theta$; see \cref{app-modifiedDerivation} for details. We thus call this approximation $\initNS$ for regularizer $R$:
\begin{equation}
	\initNS := \hat\theta + \frac{1}{N}\left(H(\hat\theta) - \frac{1}{N} \nabla^2_\theta f_n(\hat\theta)\right) \inv \nabla_\theta f_n(\hat\theta).
	\label{modifiedApproximation}
\end{equation}
In the case of GLMs, Theorem 8 of \cite{rad:2018:detailedALOO} gives conditions on $x_n$ and $f(\cdot, \cdot)$ that imply, for fixed $D$, the error of $\initNS$ averaged over $n$ is $o(1/N)$ as $N\to\infty$.

\citet{koh:2017:influenceFunctions,beirami:2017:firstALOO,giordano:2018:ourALOO,koh:2019:IJNSBounds} consider a second approximation. As their approximation is inspired by the \emph{infinitesimal jackknife} (``IJ'') \citep{jaeckel:1972:infinitesimal,efron:1982:jackknife}, we denote it by $\initIJ$; see \cref{app-regularDerivation}.
\begin{equation}
	 \initIJ := \hat\theta + \frac{1}{N} H(\hat\theta)\inv \nabla_\theta f_n(\hat\theta). 	\label{regularApproximation}
\end{equation}
\citet{giordano:2018:ourALOO} study the case of $\lambda = 0$, and, in their Corollary 1, show that the accuracy of \cref{regularApproximation} is bounded by $C/N$ in general or, in the case of bounded gradients $\|\nabla_\theta f(x_n^T\theta, y_n)\|_\infty \leq B$, by $C'B/N^2$. The constants $C,C'$ may depend on $D$ but not $N$. Our \cref{prop-swissArmy} in \cref{app-swissArmyProof} extends this result to the regularized case, $\lambda \geq 0$. Still, we are left with the fact that $C$ and $C'$ depend on $D$ in an unknown way.

In what follows, we consider both $\initNS$ and $\initIJ$, as they have complimentary strengths. Empirically, we find that $\initNS$ performs better in our LOOCV GLM experiments. But $\initIJ$ is computationally efficient beyond LOOCV and GLMs. E.g., for general models, computation of $\initNS$ requires inversion of a new Hessian for each $n$, whereas $\initIJ$ needs only the inversion of $H(\hat\theta)$ for all $n$. In terms of theory, $\initNS$ has a tighter error bound of $o(1/N)$ for GLMs. But the theory behind $\initIJ$ applies more generally, and, given a good bound on the gradients, may provide a tighter rate.

\section{Problems in high dimensions} \label{sec-highDProblems}
In the above discussion, we noted that there exists encouraging theory governing the behavior of $\initNS$ and $\initIJ$ when $D$ is fixed and $N$ grows large. We now describe issues with $\initNS$ and $\initIJ$ when $D$ is large relative to $N$. The first challenge for both approximations given large $D$ is computational. Since every variant of CV or approximate CV requires running the machine learning algorithm of interest at least once, we will focus on the cost of the approximations \emph{after} this single run. Given $\hat\theta$, both approximations require the inversion of a $D \times D$ matrix. Calculation of $\initIJ$ across $n \in [N]$ requires a single matrix inversion and $N$ matrix multiplications for a runtime in $O(D^3 + ND^2)$. In general, calculating $\initNS$ has runtime of $O(ND^3)$ due to needing an inversion for each $n$. In the case of GLMs, though, $\nabla^2_\theta f_n$ is a rank-one matrix, so standard rank-one updates give a runtime of $O(D^3 + ND^2)$ as well.

The second challenge for both approximations is the invertibility of $H(\hat\theta)$ and $H(\hat\theta) - (1/N)\nabla_\theta^2 f(x_n^T \theta, y_n)$ that was assumed in defining $\initNS$ and $\initIJ$. We note that, if $\nabla^2 R(\hat\theta)$ is only positive semidefinite, then invertibility of both matrices may be impossible when $D \geq N$; see \cref{app-invertibility} for more discussion.

The third and final challenge for both approximations is accuracy in high dimensions. Not only do existing error bounds behave poorly (or not exist) in high dimensions, but empirical performance degrades as well. To create \cref{fig-errorScalingExample}, we generated datasets from a sparse logistic regression model with $N$ ranging from 500 to 5,000. For the blue lines, we set $D=2$, and for the red lines we set $D=N/10$. In both cases, we see that error is much lower when $D$ is small and fixed.

 We recall that for large $N$ and small $D$, training error often provides a fine estimate of the out-of-sample error (e.g., see \citep{vapnik:1992:erm}). That is, CV is needed precisely in the high-dimensional regime, and this case is exactly where current approximations struggle both computationally and statistically. Thus, we wish to understand whether there are high-$D$ cases where approximate CV is useful. In what follows, we consider a number of options for tackling one or more of these issues and show that only one method is effective in high dimensions.

\textbf{Proposal \#1: Use randomized solvers to reduce computation.} Previously, \citet{koh:2017:influenceFunctions} have utilized $\initIJ$ for qualitative purposes, in which they are interested in its sign and relative magnitude across different $n$. They tackle the $O(D^3)$ scaling of $\initIJ$ by using the randomized solver from \citet{agarwal:2016:lissa}. While one might hope to replicate the success of \citet{koh:2017:influenceFunctions} in the context of approximate CV, we show in \cref{app-stochasticHessianNoGood} that this randomized solver performs poorly for approximating CV: while it can be faster than exactly solving the needed linear systems, it provides an approximation to exact CV that can be an order of magnitude less accurate.

\subsection{Sparsity via $\ell_1$ regularization.} Intuitively, if the exact $\thetan$'s have some low ``effective dimension'' $\deff \ll D$, we might expect approximate CV's accuracy to depend only on $\deff$. One way to achieve low $\deff$ is sparsity: i.e., we have $\hatdeff := \abs{\supp \hat\theta} \ll D$, where $\hat S := \supp \hat\theta$ collects the indices of the non-zero entries of $\hat\theta$. A way to achieve sparsity is choosing $R(\theta) = \| \theta\|_1$. However, note that $\initNS$ and $\initIJ$ cannot be applied directly in this case as $\| \theta\|_1$ is not twice-differentiable. \textbf{Proposal \#2}: \citet{rad:2018:detailedALOO,wang:2018:primalDualALOO} propose the use of a smoothed approximation to $\|\cdot\|_1$; however, as we show in \cref{sec-experiments}, this approach is often multiple orders of magnitude more inaccurate and slower than Proposal \#4 below.

\textbf{Proposal \#3: Subsample exact CV.} Another option is to bypass all the problems of approximate CV in high-$D$ by uniformly subsampling a small collection of LOOCV folds. This provides an unbiased estimate of exact CV and can be used with exact $\ell_1$ regularization. However, our experiments (\cref{sec-experiments}) show that, under a time budget, the results of this method are so variable that their error is often multiple orders of magnitude higher than Proposal \#4 below.

\textbf{Proposal \#4: Use the sparsity from $\hat\theta$.}
Instead, in what follows, we take the intuitive approach of approximating CV only on the dimensions in $\supp\hat\theta$. Unlike all previously discussed options, we show that this approximation is fast and accurate in high dimensions in both theory and practice.
For notation, let $X \in \R^{N \times D}$ be the covariate matrix, with rows $x_n$. For $S \subset [D]$, let $X_{\cdot, S}$ be the submatrix of $X$ with column indices in $S$; define $x_{nS}$ and $\theta_S$ similarly. Let $\dntwo := \left[ d^2 f(z, y_n) / dz^2 \right]_{z = x_n^T\hat\theta}$, and define the restricted Hessian evaluated at $\hat\theta$: $H_{\hat S \hat S} := X_{\cdot, \hat S}^T \mathrm{diag}\{ \dntwo \} X_{\cdot, \hat S}$. Further define the LOO restricted Hessian, $H^{\bn}_{\hat S \hat S} := H_{\hat S \hat S} - [\nabla^2_\theta f(x_n^T\hat\theta, y_n)]_{\hat S \hat S}$. Finally, without loss of generality, assume $\hat S = \{1, 2, \dots, \hatdeff \}$. We now define versions of $\initNS$ and $\initIJ$ restricted to the entries in $\supp\hat\theta$:
\begin{align}
&\NS := 
		\begin{pmatrix}
			\hat\theta_{\hat S} + (H^{\bn}_{\hat S \hat S})\inv \left[ \nabla_\theta f(x_n^T \hat\theta, y_n)\right]_{\hat S} \\ 	
			0
                \end{pmatrix} \\
&\IJ :=
		\begin{pmatrix}
			\hat\theta_{\hat S} + H_{\hat S \hat S}\inv \left[ \nabla_\theta f(x_n^T \hat\theta, y_n) \right]_{\hat S} \\
			0
		\end{pmatrix}.
	        \label{restrictedApproximations}
\end{align}
Other authors have previously considered $\NS$. \citet{rad:2018:detailedALOO,wang:2018:primalDualALOO} derive $\NS$ by considering a smooth approximation to $\ell_1$ and then taking the limit of $\initNS$ as the amount of smoothness goes to zero. In \cref{app-smoothing}, we show a similar argument can yield $\IJ$. Also, \citet{obuchi:2016:linearALOO,obuchi:2018:logisticALOO,beirami:2017:firstALOO} directly propose $\NS$ without using $\initNS$ as a starting point. We now show how $\NS$ and $\IJ$ avoid the three major high-dimensional challenges with $\initNS$ and $\initIJ$ we discussed above.

The first challenge was that compute time for $\initNS$ and $\initIJ$ scaled poorly with $D$. That $\NS$ and $\IJ$ do not share this issue is immediate from their definitions. 
\begin{prop} \label{prop-runtimes}
For general $f_n$, the time to compute $\NS$ or $\IJ$ scales with $\hatdeff$, rather than $D$. In particular, computing $\NS$ across all $n \in [N]$ takes $O(N \hatdeff^3)$ time, and computing $\IJ$ across all $n \in [N]$ takes $O(\hatdeff^3 + N \hatdeff^2)$ time. Furthermore, when $f_n$ takes the form of a GLM, computing $\NS$ across all $n \in [N]$ takes $O(\hatdeff^3 + N\hatdeff^2)$ time.
\end{prop}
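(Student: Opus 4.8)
The plan is a direct operation count that exploits the fact that every matrix appearing in $\NS$ and $\IJ$ is only $\hatdeff \times \hatdeff$ rather than $D \times D$, and every vector lives in $\R^{\hatdeff}$. First I would fix the two standard complexities I will invoke: factorizing (or inverting) a dense $\hatdeff \times \hatdeff$ matrix costs $O(\hatdeff^3)$, while a matrix-vector solve against a precomputed factorization costs $O(\hatdeff^2)$. Each claimed runtime then follows by counting how many of each operation the corresponding approximation requires as $n$ ranges over $[N]$.

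For $\IJ$, the key observation from \cref{restrictedApproximations} is that the matrix $H_{\hat S \hat S}$ does not depend on $n$. I would therefore factorize it once, at cost $O(\hatdeff^3)$, and reuse the factorization for every $n$: forming $H_{\hat S \hat S}\inv [\nabla_\theta f(x_n^T\hat\theta,y_n)]_{\hat S}$ is a single $O(\hatdeff^2)$ solve per data point, so summing over $n$ gives $O(\hatdeff^3 + N\hatdeff^2)$. For $\NS$ with general $f_n$, the obstruction is that the LOO restricted Hessian $H^{\bn}_{\hat S \hat S}$ genuinely changes with $n$, so the factorization cannot be shared; I would bound each $n$ by one $O(\hatdeff^3)$ inversion plus an $O(\hatdeff^2)$ solve, for a total of $O(N\hatdeff^3)$.

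The only step requiring a real idea is the improved GLM bound for $\NS$. Here I would exploit that, for a GLM, $[\nabla^2_\theta f(x_n^T\hat\theta,y_n)]_{\hat S\hat S} = \dntwo\, x_{n\hat S} x_{n\hat S}^T$ is rank one, so $H^{\bn}_{\hat S\hat S} = H_{\hat S\hat S} - \dntwo\, x_{n\hat S} x_{n\hat S}^T$ is a rank-one downdate of the single matrix $H_{\hat S\hat S}$. After computing $H_{\hat S\hat S}\inv$ once in $O(\hatdeff^3)$, the Sherman--Morrison identity expresses the product $(H^{\bn}_{\hat S\hat S})\inv [\nabla_\theta f(x_n^T\hat\theta,y_n)]_{\hat S}$ purely in terms of $H_{\hat S\hat S}\inv$ applied to the two vectors $x_{n\hat S}$ and $[\nabla_\theta f(x_n^T\hat\theta,y_n)]_{\hat S}$, together with a scalar denominator $1 - \dntwo\, x_{n\hat S}^T H_{\hat S\hat S}\inv x_{n\hat S}$. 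Each such application is $O(\hatdeff^2)$, so summing over $n$ again yields $O(\hatdeff^3 + N\hatdeff^2)$.

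I expect the main (mild) obstacle to be purely bookkeeping: being careful to compute only the matrix-vector products actually needed -- never materializing $(H^{\bn}_{\hat S\hat S})\inv$ as a full matrix -- and noting that the Sherman--Morrison denominator is nonzero exactly when $H^{\bn}_{\hat S\hat S}$ is invertible, which is already assumed in defining $\NS$. No genuinely hard estimate is involved; the substantive content is that the rank-one structure of the GLM Hessian collapses the general $O(N\hatdeff^3)$ cost of $\NS$ down to $O(\hatdeff^3 + N\hatdeff^2)$.
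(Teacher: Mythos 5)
Your proposal is correct and matches the paper's reasoning: the paper treats the claim as immediate from the definitions, with the same operation count (one shared factorization of $H_{\hat S \hat S}$ for $\IJ$, a per-$n$ inversion of $H^{\bn}_{\hat S \hat S}$ for $\NS$ in general) and the same Sherman--Morrison rank-one-update argument for the GLM case, which the paper spells out for the unrestricted approximations in its appendix on computation time. No gaps.
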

%
The second high-dimensional challenge was that $H$ and $H^{\setminus n}$ may not be invertible when $D \geq N$. 
Notice the relevant matrices in $\NS$ and $\IJ$ are of dimension $\hatdeff = \abs{\hat S}$. So we need only make the much less restrictive assumption that $\hatdeff < N$, rather than $D < N$. We address the third and final challenge of accuracy in the next section.


\section{Approximation quality in high dimensions} \label{sec-theory}

Recall that the accuracy of $\initNS$ and $\initIJ$ in general has a poor dependence on dimension $D$. 
We now show that the accuracy of $\NS$ and $\IJ$ depends on (the hopefully small) $\hatdeff$ rather than $D$. We start by assuming a ``true'' population parameter\footnote{This assumption may not be necessary to prove the dependence of $\NS$ and $\IJ$ on $\hatdeff$, but it allows us to invoke existing $\ell_1$ support results in our proofs.} $\theta^* \in \R^D$ that minimizes the population-level loss, $\theta^* := \argmin \mathbb{E}_{x,y}[f(x^T\theta, y)]$, where the expectation is over $x,y$ from some population distribution. Assume $\theta^*$ is sparse with $S := \supp\theta^*$ and $\deff := |S|$. Our parameter estimate would be faster and more accurate if an oracle told us $S$ in advance and we worked just over $S$:
\begin{equation}
	\hat\phi := \argmin_{\phi \in \R^{\deff}} \frac{1}{N} \sum_{n=1}^N f(x_{nS}^T \phi, y_n) + \lambda \n{\phi}_1.
	\label{phiRestrictedOptimization}
\end{equation}
We define $\phin$ as the leave-one-out variant of $\hat\phi$ (as $\thetan$ is to $\hat\theta$). Let $\restrictedNS$ and $\restrictedIJ$ be the result of applying the approximation in $\NS$ or $\IJ$ to the restricted problem in \cref{phiRestrictedOptimization}; note that $\restrictedNS$ and $\restrictedIJ$ have accuracy that scales with the (small) dimension $\deff$.

Our analysis of the accuracy of $\NS$ and $\IJ$ will depend on the idea that if, for all $n$, $\NS$, $\IJ$, and $\thetan$ run over the same $\deff$-dimensional subspace, then the accuracy of $\NS$ and $\IJ$ must be identical to that of $\restrictedNS$ and $\restrictedIJ$. In the case of $\ell_1$ regularization, this idea specializes to the following condition, under which our main result in \cref{mainResult} will be immediate.
\begin{condition} \label{condition:supportStable}
  For all $n \in [N]$, we have $\supp\IJ = \supp\NS = \supp\thetan = S$.
\end{condition}
\begin{thm} \label{mainResult}
  Assume \cref{condition:supportStable} holds. Then for all $n$, $\thetan$ and $\IJ$ are (1) zero outside the dimensions $S$ and (2) equal to their restricted counterparts from \cref{phiRestrictedOptimization}:
\begin{align}
	&\thetan
		= \begin{pmatrix}
			\thetan[,S] \\
			0
		\end{pmatrix}
		= \begin{pmatrix}
			\phin \\
			0
		\end{pmatrix}, \;\; \nonumber \\
	&\IJ
		= \begin{pmatrix}
			\IJ[,S] \\
			0
		\end{pmatrix}
		= \begin{pmatrix}
			\restrictedIJ \\
			0
		\end{pmatrix}.
	\label{thetanSupport}
\end{align}
It follows that the error is the same in the full problem as in the low-dimensional restricted problem:
$\| \thetan - \IJ\|_2 = \| \phin - \restrictedIJ \|_2$. The same results hold for $\IJ$ and $\restrictedIJ$ replaced by $\NS$ and $\restrictedNS$.
\end{thm}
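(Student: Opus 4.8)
The plan is to reduce everything to a single coordinate-embedding observation and then read off the claims. First I would set up the embedding $\iota\colon \R^{\deff} \to \R^D$ sending $\phi \mapsto (\phi; 0)$ onto the subspace $V := \{\theta \in \R^D : \supp\theta \subseteq S\}$. Because each $f_n$ is a GLM, $f_n(\iota(\phi)) = f(x_{nS}^T\phi, y_n)$ and $R(\iota(\phi)) = \n{\iota(\phi)}_1 = \n{\phi}_1$, so under $\iota$ the objective of \cref{ourOptProblem} (and its leave-$n$-out version) restricted to $V$ is \emph{exactly} the restricted objective of \cref{phiRestrictedOptimization}. Differentiating, the $S$-subvector of $\nabla_\theta f(x_n^T\iota(\phi), y_n)$ equals $\nabla_\phi f(x_{nS}^T\phi, y_n)$, and the $S\times S$ block of the full Hessian equals the restricted Hessian; hence, evaluated at a common point, the gradients, the restricted Hessian $H_{\hat S \hat S}$, and the LOO restricted Hessian $H^{\bn}_{\hat S \hat S}$ all agree with their restricted-problem analogues.

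Next I would handle the exact solutions. \cref{condition:supportStable} gives $\supp\thetan = S$, so $\thetan \in V$; since $\thetan$ globally minimizes the leave-$n$-out objective over $\R^D$, it a fortiori minimizes it over $V$, and by the embedding its preimage $\thetan[,S]$ minimizes the restricted objective. By uniqueness of the (convex) restricted minimizer this forces $\thetan[,S] = \phin$, i.e.\ $\thetan = (\phin; 0)$, which is both claim (1) and claim (2) for $\thetan$. Applying the identical argument to $\hat\theta$ yields $\hat\theta_S = \hat\phi$, a fact I will need for the approximations.

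For the approximations, note first that by definition $\IJ$ and $\NS$ vanish off $\hat S = \supp\hat\theta$, so $\supp\IJ = S$ from \cref{condition:supportStable} gives $S \subseteq \hat S$; combined with $\supp\hat\theta = \hat S$ this pins down $\hat S = S$ (an index in $\hat S\setminus S$ would require the update to cancel a nonzero $\hat\theta$-entry for \emph{every} $n$), which is precisely the recovered-support-equals-true-support situation the later theorems establish. With $\hat S = S$ and $\hat\theta_S = \hat\phi$ in hand, every ingredient of the formula \cref{restrictedApproximations} for $\IJ$ --- the base point $\hat\theta_{\hat S}$, the gradient $[\nabla_\theta f(x_n^T\hat\theta, y_n)]_{\hat S}$, and the inverse restricted Hessian $H_{\hat S \hat S}\inv$ --- coincides, via the embedding of Step 1 at $\hat\theta \leftrightarrow \hat\phi$, with the corresponding ingredient of $\restrictedIJ$ in the $S$-coordinates. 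Hence $\IJ[,S] = \restrictedIJ$, and since $\supp\IJ = S$ we conclude $\IJ = (\restrictedIJ; 0)$. Replacing $H_{\hat S \hat S}$ by $H^{\bn}_{\hat S \hat S}$ throughout gives the same conclusion for $\NS$ and $\restrictedNS$.

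Finally, since $\thetan$ and $\IJ$ both vanish outside $S$, so does their difference, giving $\n{\thetan - \IJ}_2 = \n{\thetan[,S] - \IJ[,S]}_2 = \n{\phin - \restrictedIJ}_2$, and likewise for $\NS$. The routine parts are the embedding bookkeeping and this norm identity. The main obstacle --- the only place real care is needed --- is the approximation step: I must ensure that computing the approximation on the a priori possibly-larger recovered support $\hat S$ and \emph{then} observing it is supported on $S$ genuinely reproduces the restricted approximation computed on $S$. This is exactly why $\hat S = S$ (not merely $S \subseteq \hat S$) is essential, since the $S\times S$ block of $H_{\hat S \hat S}\inv$ is not $H_{SS}\inv$ unless $\hat S = S$; extracting $\hat S = S$ cleanly from \cref{condition:supportStable} together with the definitions is the crux, after which the theorem is immediate.
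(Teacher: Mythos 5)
Your proposal is correct in substance and is essentially the fleshed-out version of the paper's argument: the paper offers no written proof of \cref{mainResult}, repeatedly calling it ``immediate'' from \cref{condition:supportStable}, and the intended reasoning is exactly your embedding-plus-matching-supports reduction. One point deserves scrutiny, and you have correctly isolated it as the crux: the step $\hat S = S$. As you note, the literal content of \cref{condition:supportStable} only gives $S \subseteq \hat S$ (since $\IJ$ vanishes off $\hat S$ by construction in \cref{restrictedApproximations}), and if $\hat S \supsetneq S$ the conclusion genuinely fails --- the $S \times S$ block of $H_{\hat S\hat S}\inv$ is not $H_{SS}\inv$, and one cannot even conclude $\hat\theta_S = \hat\phi$. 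Your parenthetical argument for excluding $\hat S \setminus S \neq \emptyset$ (that the Newton/IJ update would have to exactly cancel a nonzero entry of $\hat\theta$ for every $n$) is a genericity heuristic, not a proof; such cancellation is not logically impossible. The clean resolution, which the paper supplies implicitly, is that $\hat S = S$ is part of the intended content of \cref{condition:supportStable}: the machinery that establishes the condition (\cref{prop-supportRecovery}, via the assumptions of \citet{li:2015:sparsistency}) delivers $\supp\thetan = \hat S = S$ jointly, so $\hat S = S$ should be taken as given alongside the condition rather than derived from $\supp\IJ = S$. With that reading, every remaining step of your argument --- the minimization-over-the-subspace identification $\thetan[,S] = \phin$, the coordinate-matching of gradients and restricted Hessians, and the final norm identity --- is correct and matches the paper's intent.
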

Taking \cref{condition:supportStable} as a given, \cref{mainResult} tells us that for $\ell_1$ regularized problems, $\IJ$ and $\NS$ inherit the fixed-dimensional accuracy of $\initIJ$ and $\initNS$ shown empirically in \cref{fig-errorScalingExample} and described theoretically in the references from \cref{introduction}. Taking a step further, one could show that $\IJ$ and $\NS$ are accurate for model assessment tasks by using results on the accuracy of exact CV for assessment (e.g., \citep{moustafa:2018:cvWorks,steinberger:2018:cvIntervalsTheory,barber:2019:jackknife+}).

Again, \cref{mainResult} is immediate if one is willing to assume \cref{condition:supportStable}, but when does \cref{condition:supportStable} hold? There exist assumptions in the $\ell_1$ literature under which $\supp\hat\theta = S$ \citep{lee:2014:generalSparseRecovery,li:2015:sparsistency}. If one took these assumptions to hold for all $\objn := (1/N)\sum_{m:\, m\neq n} f_m$, then \cref{condition:supportStable} would directly follow. However, it is not immediate that any models of interest meet such assumptions. Rather than taking such uninterpretable assumptions or just taking \cref{condition:supportStable} as an assumption directly, we will give a set of more interpretable assumptions under which \cref{condition:supportStable} holds.

In fact, we need just four principal assumptions in the case of linear and logistic regression; we conjecture that similar results hold for other GLMs. The first assumption arises from the intuition that, if individual data points are very extreme, the support will certainly change for some $n$. To avoid these extremes with high probability, we assume that the covariates follow a \emph{sub-Gaussian} distribution:
\begin{defn} \label{defineSubGaussian} [e.g., \citet{vershynin:2017:hdpBook}]
For $c_x > 0$, a random variable $V$ is $c_x$-\emph{sub-Gaussian} if
$
  	\mathbb{E}\left[ \exp\left(V^2 / c_x^2 \right) \right] \leq 2.
$
\end{defn}
\begin{assumption} \label{assum:randomX}
Each $x_n \in \R^D$ has zero-mean i.i.d.\ $c_x$-sub-Gaussian entries with $\mathbb{E}[x_{nd}^2] = 1$.
\end{assumption}
We conjecture that the unit-variance part of the assumption is unnecessary.
Conditions on the distributions of the responses $y_n$ will be specific to linear and logistic regression and will be given in \cref{assum:linearRegressionY,assum:logisticRegressionY}, respectively. Our results below will hold with high probability under these distributions. Note there are reasons to expect we cannot do better than high-probability results. In particular, \citet{xu:2012:sparsityStability} show that there exist worst-case training datasets for which sparsity-inducing methods like $\ell_1$ regularization are not stable as each datapoint is left out.

Our second principal assumption is an \emph{incoherence} condition.
\begin{assumption} \label{assum:incoherenceAssumption}
  The incoherence condition holds with high probability over the full dataset:
  \begin{equation*} \Pr\left[ \n{\nabla F(\theta^*)_{S^c, S} \left( \nabla^2 F(\theta^*)_{SS}\right)\inv}_{\infty} < 1 - \alpha \right] \leq e^{-25}, \end{equation*}
\end{assumption}
Authors in the $\ell_1$ literature often assume that incoherence holds deterministically for a given design matrix $X$ -- starting from the introduction of incoherence by \citet{zhao:2006:incoherence} and continuing in more recent work \citep{lee:2014:generalSparseRecovery, li:2015:sparsistency}. Similarly, we will take our high probability version in \cref{assum:incoherenceAssumption} as given. But we note that \cref{assum:incoherenceAssumption} is at least known to hold for the case of linear regression with an i.i.d.\ Gaussian design matrix (e.g., see Exercise 11.5 of \citet{hastie:2015:sls}). We next place some restrictions on how quickly $D$ and $\deff$ grow as functions of $N$.
\begin{assumption} \label{assum:DDeffScaling}
  As functions of $N$, $D$ and $\deff$ satisfy: (1) $D = o(e^N)$, (2) $\deff = o([N / \log N]^{2/5})$, and (3) $\deff^{3/2} \sqrt{\log D} = o(N)$.
\end{assumption}
The constraints on $D$ here are particularly loose. While those on $\deff$ are tighter, we still allow polynomial growth of $\deff$ in $N$ for some lower powers of $N$. Our final assumption is on the smallest entry of $\theta^*_S$. Such conditions -- typically called \emph{beta-min conditions} -- are frequently used in the $\ell_1$ literature to ensure $\hat S = S$ \citep{wainwright:2009:linearRegressionLasso,lee:2014:generalSparseRecovery,li:2015:sparsistency}.
\begin{assumption} \label{assum:thetaMin}
$\theta^*$ satisfies
$
	\min_{s \in S} \abs{\theta^*_s} > \sqrt{\deff} T_{min} \lambda,
$
where $T_{min}$ is some constant relating to the objective function $f$; see \cref{assum:thetaMin-app} in \cref{app-liAssumptions} for an exact description.
\end{assumption}
\subsection{Linear regression} \label{sec-linearRegression}

We now give the distributional assumption on the responses $y_n$ in the case of linear regression and then show that \cref{condition:supportStable} holds.
\begin{assumption} \label{assum:linearRegressionY}
$\forall n, y_n = x_n^T\theta^* + \eps_n$, where the $\eps_n$ are i.i.d.\ $c_\eps$-sub-Gaussian random variables.
\end{assumption}
\begin{thm}[Linear Regression] \label{linearRegressionTheorem}
  Take \cref{assum:randomX,assum:incoherenceAssumption,assum:thetaMin,assum:linearRegressionY,assum:DDeffScaling}. Suppose the regularization parameter $\lambda$ satisfies
  \begin{align}
    \lambda \geq \frac{C}{\alpha - \MJLin} & \left( \sqrt{ \frac{c_x^2 c_\eps^2 \log D}{N} + \frac{25c_x^2 c_\eps^2}{N}}  \right. \nonumber \\
    & \left. \quad + \frac{4c_x c_\eps (\log(ND) + 26)}{N}\right)   \label{linearRegressionLambda},
   \end{align}
  where $C > 0$ is a constant in $N, D, \deff ,c_x$, and $c_\eps$, and $\MJLin$ is a scalar given by \cref{MJDef} in \cref{app-proofs} that satisfies, as $N \to \infty$, $\MJLin = o(1)$. Then for $N$ sufficiently large, \cref{condition:supportStable} holds with probability at least $1 - 26e^{-25}$.
\end{thm}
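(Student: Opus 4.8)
The plan is to establish \cref{condition:supportStable} by running a \emph{primal-dual witness} (PDW) argument (in the style of \citet{wainwright:2009:linearRegressionLasso}) not once but simultaneously for the full objective and for all $N$ leave-one-out objectives, and then transferring the conclusion to the approximations $\NS$ and $\IJ$ by construction. Because the linear-regression loss is quadratic, each of these problems is an ordinary Lasso: the restricted solution on $S$ is available in closed form and the subgradient optimality conditions are linear, which makes the PDW especially clean. For a single Lasso, PDW succeeds -- yielding a unique minimizer supported exactly on $S$ -- provided (i) \emph{strict dual feasibility}, i.e.\ the subgradient on $S^c$ has $\ell_\infty$ norm strictly below $1$, and (ii) \emph{sign/beta-min consistency}, i.e.\ the restricted solution keeps every coordinate in $S$ bounded away from zero with the correct sign.

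First I would carry out PDW for the full problem to obtain $\supp\hat\theta = S$, so that $\hat S = S$. Plugging the restricted solution into the dual variable on $S^c$, strict dual feasibility reduces to bounding an incoherence term plus a residual noise term: the incoherence term is at most $1 - \alpha$ by \cref{assum:incoherenceAssumption}, and the residual noise term has the form $(N\lambda)\inv\n{X_{\cdot, S^c}^T \Pi_S^{\perp}\eps}_\infty$, where $\Pi_S^\perp$ projects off the column span of $X_{\cdot, S}$. Under \cref{assum:randomX,assum:linearRegressionY} this quantity concentrates at the rate $\sqrt{c_x^2 c_\eps^2 \log D / N}$, and requiring $\lambda$ to dominate it is exactly the first term of \cref{linearRegressionLambda}; beta-min (\cref{assum:thetaMin}) then fixes the signs on $S$.

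The crux is doing this \emph{uniformly over} $n$. The leave-one-out design $X^{\setminus n}$ is $X$ with row $n$ deleted, so its restricted Gram and cross matrices differ from their full counterparts by the rank-one term $x_{nS}x_{nS}^T$; via a Sherman--Morrison analysis I would show the leave-one-out incoherence equals the full incoherence plus a perturbation bounded by $\MJLin$, which \cref{assum:DDeffScaling} forces to be $o(1)$ -- this is the source of the $(\alpha - \MJLin)$ denominator in \cref{linearRegressionLambda}. That perturbation scales with $\n{x_n}$ restricted to $S$ (sub-Gaussianity gives $\lesssim \sqrt{\deff \log N}$) and with the conditioning of $(\nabla^2 F)_{SS}$ (whose eigenvalues concentrate by \cref{assum:randomX}), which is precisely why the growth restrictions $\deff = o([N/\log N]^{2/5})$ and $\deff^{3/2}\sqrt{\log D} = o(N)$ are imposed. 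Likewise I would control the leave-one-out noise terms $\n{X^{\setminus n,T}_{\cdot, S^c}\Pi^{\perp,\setminus n}_S\eps^{\setminus n}}_\infty$ for all $n$ at once via a single deviation bound together with a union bound over the $N$ folds and the $D$ coordinates; this produces the $\log(ND)$ factor in the second term of \cref{linearRegressionLambda} and needs $D = o(e^N)$ so that $\log(ND) = o(N)$ and the union-bounded events remain summable.

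Finally I would assemble the pieces by a union bound over the $N+1$ problems and the coordinates in $S^c$: the incoherence event contributes at most $e^{-25}$ by \cref{assum:incoherenceAssumption}, and the design-concentration and noise events are arranged (through the additive $25$ and $\log(ND)+26$ terms in \cref{linearRegressionLambda}) to contribute a further $O(e^{-25})$, summing to at most $26e^{-25}$. On the complementary event, PDW succeeds for the full problem and for every fold, giving $\supp\hat\theta = \supp\thetan = S$ for all $n$. Since $\hat S = S$, the approximations $\NS$ and $\IJ$ from \cref{restrictedApproximations} are supported on $S$ by construction; each is an $O(1/N)$ correction of $\hat\theta_S$, and beta-min keeps the entries of $\hat\theta_S$ bounded below, so for $N$ large every coordinate on $S$ remains nonzero, yielding $\supp\NS = \supp\IJ = S$ and hence \cref{condition:supportStable}. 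The main obstacle is the uniform-in-$n$ step: controlling the incoherence and noise perturbations from deleting an arbitrary (sub-Gaussian, hence occasionally large) data point tightly enough that the union bound over all $N$ folds still closes under only $D = o(e^N)$ and the stated $\deff$ growth.
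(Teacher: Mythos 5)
Your proposal is correct and follows essentially the same route as the paper: the paper also establishes \cref{condition:supportStable} by verifying, uniformly over the $N$ leave-one-out problems, an incoherence condition (transferred from \cref{assum:incoherenceAssumption} via exactly the Sherman--Morrison rank-one perturbation you describe, which is the source of $\MJLin$ and the $(\alpha-\MJLin)$ denominator), a restricted minimum-eigenvalue bound, a gradient/noise bound with the $\log(ND)$ union-bound factor, and the beta-min condition, then union-bounds the failure probabilities to get $1-26e^{-25}$. The only substantive difference is packaging -- the paper checks these as the hypotheses of the general M-estimator sparsistency theorem of \citet{li:2015:sparsistency} applied to each $\objn$ (which for linear regression, where the LSSC constant is $K=0$, degenerates to the classical Lasso primal-dual witness you run directly, with the dual-feasibility noise stated as $\n{\nabla\objn(\theta^*)}_\infty$ rather than the projected form $\n{X_{\cdot,S^c}^T\Pi_S^\perp\eps}_\infty$) -- and your explicit final step confirming $\supp\NS=\supp\IJ=S$ from the $O(1/N)$ size of the correction is a detail the paper leaves implicit.
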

A full statement and proof of \cref{linearRegressionTheorem}, including the exact value of $\MJLin$, appears in \cref{app-proofs}.
A corollary of \cref{mainResult} and \cref{linearRegressionTheorem} together is that, under \cref{assum:thetaMin,assum:randomX,assum:incoherenceAssumption,assum:DDeffScaling,assum:linearRegressionY}, the LOOCV approximations $\IJ$ and $\NS$ have accuracy that depends on (the ideally small) $\deff$ rather than (the potentially large) $D$.

It is worth considering how the allowed values of $\lambda$ in \cref{linearRegressionLambda} compare to previous results in the $\ell_1$ literature for the support recovery of $\hat\theta$. We will talk about a sequence of choices of $\lambda$ scaling with $N$ denoted by $\lambda_N$. Theorem 11.3 of \citet{hastie:2015:sls} shows that $\lambda_N \geq c \sqrt{\log (D) / N}$ (for some constant $c$ in $D$ and $N$) is sufficient for ensuring that $\supp\hat\theta \subseteq S$ with high probability in the case of linear regression. Thus, we ought to set $\lambda_N \geq c \sqrt{\log(D) / N}$ to ensure support recovery of $\hat\theta$. Compare this constraint on $\lambda_N$ to the constraint implied by \cref{linearRegressionLambda}. We have that $\MJLin = o(1)$ as $N\to\infty$, so that, for large $N$, the bound in \cref{linearRegressionLambda} becomes $\lambda_N \geq c' \sqrt{\log(D) / N}$ for some constant $c'$. Thus, the sequence of $\lambda_N$ satisfying \cref{linearRegressionLambda} scales at exactly the same rate as those that ensure $\supp\hat\theta \subseteq S$. The scaling of $\lambda_N$ is important, as the error in $\hat\theta$, $\| \hat\theta - \theta^*\|_2^2$, is typically proportional to $\lambda_N$. The fact that we have not increased the asymptotic scaling of $\lambda_N$ therefore means that we can enjoy the same decay of $\|\hat\theta - \theta^*\|_2^2$ while ensuring $\supp\thetan = S$ for all $n$.

\subsection{Logistic regression} \label{sec-logisticRegression}
We now give the distributional assumption on the responses $y_n$ in the case of logistic regression.
\begin{assumption} \label{assum:logisticRegressionY}
$\forall n,$ we have $y_n \in \set{\pm 1}$ with $\Pr\left[y_n = 1\right] = 1/(1 + e^{-x_n^T\theta^*})$.
\end{assumption}
We will also need a condition on the minimum eigenvalue of the Hessian.
\begin{assumption} \label{assum:logisticRegressionLambdaMin}
Assume for some scalar $L_{min}$ that may depend on $N, \deff,$ and $c_x$, we have
$$ \Pr\left[ \lambda_{min} \left( \nabla_\theta^2 F(\theta^*)_{SS} \right) \leq L_{min} \right] \leq e^{-25}. $$
Furthermore, assume the scaling of $L_{min}$ in $N$ and $\deff$ is such that, under \cref{assum:DDeffScaling} and for sufficiently large $N$, $L_{min} \geq C N$ for some constant $C$ that may depend on $c_x$.
\end{assumption}
In the case of linear regression, we did not need an analogue of \cref{assum:logisticRegressionLambdaMin}, as standard matrix concentration results tell us that its Hessian satisfies \cref{assum:logisticRegressionLambdaMin} with $L_{min} = N - Cc_x^2 \sqrt{N\deff}$ (see \cref{linearRegressionLambdaMin} in \cref{app-proofs}). The Hessian for logistic regression is significantly more complicated, and it is typical in the $\ell_1$ literature to make some kind of assumption about its eigenvalues \citep{bach:2010:concordantLogistic,li:2015:sparsistency}. Empirically, \cref{assum:logisticRegressionLambdaMin} is satisfied when \cref{assum:randomX,assum:logisticRegressionY} hold; however we are unaware of any results in the literature showing this is the case.
\begin{thm}[Logistic Regression] \label{logisticRegressionTheorem}
  Take \cref{assum:randomX,assum:incoherenceAssumption,assum:thetaMin,assum:logisticRegressionY,assum:DDeffScaling,assum:logisticRegressionLambdaMin}. Suppose the regularization parameter $\lambda$ satisfies:
  \begin{align}
    \lambda \geq \frac{C}{\alpha - \MJLogr} & \left( \sqrt{c_x^2 \frac{25 + \log D}{N}} \right. \nonumber \\
    & \quad \left. + \frac{\sqrt{2c_x^2 \log(ND)} + \sqrt{50c_x^2}}{N} \right),
   \label{logisticRegressionLambda}
   \end{align}
  where $C,C'$ are constants in $N, D, \deff,$ and $c_x$, and $\MJLogr$ is a scalar given by \cref{MJLogr}, that, as $N\to\infty$, satisfies $\MJLogr = o(1)$. Then for $N$ sufficiently large, \cref{condition:supportStable} is satisfied with probability at least $1 - 43e^{-25}$.
\end{thm}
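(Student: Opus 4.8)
The plan is to establish \cref{condition:supportStable} by the primal--dual witness (PDW) construction of \citet{wainwright:2009:linearRegressionLasso,li:2015:sparsistency}, applied uniformly over all $N$ leave-one-out problems, paralleling the argument behind \cref{linearRegressionTheorem}. Fix $n$ and consider the restricted leave-one-out solution $\phin$; pad it with zeros to form the candidate $(\phin, 0)$. By PDW, this candidate is the true minimizer of $\objn + \lambda\n{\cdot}_1$, with support exactly $S$, provided (i) \emph{strict dual feasibility} holds, i.e.\ the subgradient of the unregularized LOO loss evaluated at $(\phin,0)$ has $\ell_\infty$ norm strictly below $\lambda$ on the off-support coordinates $S^c$, and (ii) a beta-min bound guarantees every coordinate of $\phin$ indexed by $S$ is nonzero. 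Part (ii) follows from \cref{assum:thetaMin} once $\n{\phin - \theta^*_S}_\infty$ is controlled, and part (i) follows from the incoherence \cref{assum:incoherenceAssumption} once the empirical gradient and Hessian are shown close to their population versions.

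First I would do the logistic-specific concentration. The logistic loss has a bounded gradient weight $\abs{f'(x_n^T\theta, y_n)} \leq 1$ and Hessian weight $f''(x_n^T\theta, y_n) \in [0, 1/4]$, so under the sub-Gaussian \cref{assum:randomX} I would bound the off-support gradient $\n{\nabla F(\theta^*)_{S^c}}_\infty$ and the restricted-Hessian deviation $\n{\nabla^2 F(\theta^*)_{SS} - \mathbb{E}[\nabla^2 F(\theta^*)_{SS}]}$ by sub-Gaussian / sub-exponential tail bounds. These produce the two pieces inside \cref{logisticRegressionLambda}: a $\sqrt{(25 + \log D)/N}$ term from the Gaussian tail of the gradient and a $\sqrt{\log(ND)}/N$ correction from the sub-exponential regime. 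Unlike linear regression, the restricted Hessian is not automatically well conditioned, so here I would invoke \cref{assum:logisticRegressionLambdaMin} to get $\lambda_{min}(\nabla^2 F(\theta^*)_{SS}) \geq L_{min} \geq CN$; inverting this stably is what lets me bound $\n{\phin - \theta^*_S}_2$ and hence close part (ii).

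Next I would pass from a single $n$ to all $n$ simultaneously, as \cref{condition:supportStable} demands. Deleting the $n$th point perturbs every leave-one-out gradient and Hessian from its full-data value by a single rank-one term of order $1/N$, so the incoherence, gradient, and eigenvalue bounds degrade only by amounts absorbed into the scalar $\MJLogr$, which I would show is $o(1)$ under \cref{assum:DDeffScaling}; this is exactly why the effective incoherence margin in \cref{logisticRegressionLambda} is $\alpha - \MJLogr$ rather than $\alpha$. A union bound over the $N$ LOO problems (and the up-to-$D$ off-support coordinates within each) then yields the overall $1 - 43e^{-25}$ guarantee; this is why the tail terms carry $\log(ND)$ and why \cref{assum:DDeffScaling} must cap both $D = o(e^N)$ and the growth of $\deff$ so that $\MJLogr \to 0$. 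Finally, since $\IJ$ and $\NS$ are built on $\hat S$, establishing $\hat S = S$ (the full-data PDW case, which is anyway the single fit of Proposal \#4) makes them vanish off $S$ by construction, and their one-step correction to $\hat\theta_S$ is of order $1/N$, so beta-min keeps every on-support coordinate nonzero, giving $\supp\IJ = \supp\NS = S$ as well.

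The hard part will be the circular dependence intrinsic to logistic regression: the Hessian that must be well conditioned to control $\phin$ itself depends on $\phin$ through the weights $f''(x_n^T\phin, y_n)$, whereas in linear regression the Hessian is the fixed $X_{\cdot,S}^T X_{\cdot,S}$. Breaking this loop --- bootstrapping from the \emph{population} eigenvalue lower bound of \cref{assum:logisticRegressionLambdaMin} to a uniform-over-$n$ \emph{empirical} lower bound on a neighborhood of $\theta^*$, while keeping the beta-min and dual-feasibility margins from being eroded by the $N$-fold union bound --- is the delicate step, and it is precisely what forces both the eigenvalue assumption and the slow-growth constraints on $\deff$ in \cref{assum:DDeffScaling}.
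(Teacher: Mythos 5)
Your overall strategy is the paper's: reduce \cref{condition:supportStable} to verifying, simultaneously for all $N$ leave-one-out objectives $\objn$, the standard sparsistency conditions (restricted-Hessian eigenvalue lower bound, incoherence with margin, gradient bound, beta-min), where each leave-one-out quantity is the full-data quantity plus a rank-one $O(1/N)$ perturbation controlled by sub-Gaussian/sub-exponential concentration, the degradation of the incoherence margin is collected into $\MJLogr = o(1)$, and a union bound gives the $1-43e^{-25}$ constant. The paper packages this by checking the hypotheses of Theorem~5.1 of \citet{li:2015:sparsistency} for every $\objn$ (via \cref{prop-supportRecovery}) rather than re-running the primal--dual witness construction from scratch, but that is a presentational difference, not a mathematical one.

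The genuine gap is in the step you yourself flag as ``the hard part'' and then leave unresolved: the dependence of the logistic Hessian on the parameter. Your proposal says one must bootstrap from the population eigenvalue bound of \cref{assum:logisticRegressionLambdaMin} to an empirical bound ``on a neighborhood of $\theta^*$,'' but gives no mechanism for doing so, and this is precisely the ingredient that distinguishes the logistic proof from the linear one. The paper's mechanism is a third-derivative smoothness condition (the LSSC, \cref{assum:LSSC}): for logistic regression the LSSC holds with the \emph{random} constant $K = \tfrac14 (\max_n \n{x_n}_\infty)(\max_n \n{x_{nS}}_2^2)$ (\cref{logisticRegressionLSSC}), which is shown to be at most roughly $\deff \log(N)\sqrt{\log(ND)}$ with high probability (\cref{logisticRegressionKBound}). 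Support recovery then additionally requires the ``$\lambda$ sufficiently small'' condition $\lambda < L_{min}^2\gamma/(4(\gamma+4)^2\deff K)$ (\cref{assum:lambdaSmall}); verifying that the prescribed $\lambda$, which decays like $\sqrt{\log(D)/N}$, sits \emph{below} this threshold --- whose scale is $N^2/(\deff\log N\sqrt{\log(ND)})$ under \cref{assum:logisticRegressionLambdaMin,assum:DDeffScaling} --- is a separate lemma in the paper and is exactly where the growth constraints on $\deff$ bite. Without some quantitative control of the third derivative (or an equivalent self-concordance-type argument), your dual-feasibility and beta-min steps cannot be closed, because the Taylor remainder of the leave-one-out gradient at the candidate solution is not controlled. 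A secondary, smaller looseness: your claim that $\supp\IJ=\supp\NS=S$ follows because the one-step corrections are $O(1/N)$ is asserted rather than derived, though the paper is similarly terse on this point.
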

A restatement and proof of \cref{logisticRegressionTheorem} are given as \cref{logisticRegressionTheorem-app} in \cref{app-proofs}. Similar to the remarks after \cref{linearRegressionTheorem}, \cref{logisticRegressionTheorem} implies that when applied to logistic regression, $\IJ$ and $\NS$ have accuracy that depends on (the ideally small) $\deff$ rather than (the potentially large) $D$, even when $D = o(e^N)$. 

\cref{logisticRegressionTheorem} has implications for the work of \citet{obuchi:2018:logisticALOO}, who conjecture that, as $N\to\infty$, the change in support of $\ell_1$ regularized logistic regression becomes negligible as each datapoint is left out; this assumption is used to derive a version of $\NS$ for logistic regression. Our \cref{logisticRegressionTheorem} confirms this conjecture by proving the stronger fact that the support is unchanged with high probability for finite data.

\begin{figure}
    \centering
    \includegraphics[scale=.4]{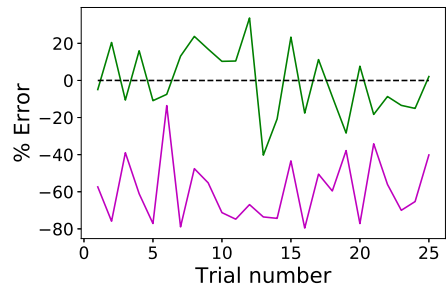}
    \caption{Error (\cref{percentError}) across approximations for $\ell_1$ LOOCV (legend shared with \cref{manyNtrialsTimings}). The error for $\IJ$ (black dashed) is too small to see, but nonzero; it varies between $-0.06\%$ and $0.04\%$.} \label{manyNtrialsAccuracy}
\end{figure}

\begin{figure}
  \centering
  \includegraphics[scale=.4]{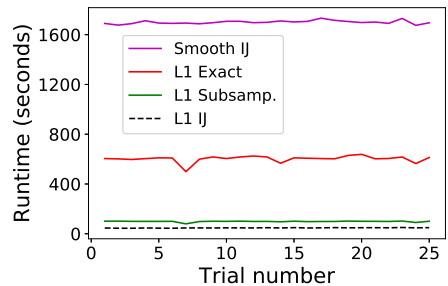}
  \caption{Runtimes for the experiments in \cref{manyNtrialsAccuracy} with exact CV (red) included for comparison. The $D \times D$ matrix inversion in the smoothed problem is so slow that even exact CV with an efficient $\ell_1$ solver is faster.} \label{manyNtrialsTimings}
\end{figure}

\begin{figure}
  \centering
    \includegraphics[scale=0.4]{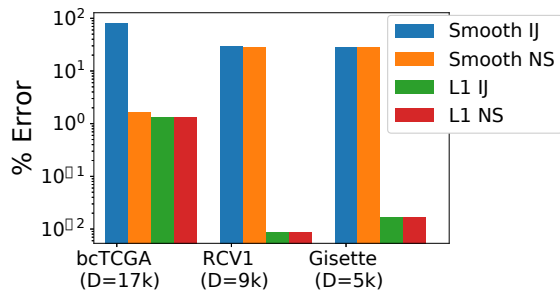}

\caption{Log percent accuracy (\cref{percentError}) for real data experiments. For each dataset, we give the accuracy of approximate CV compared to exact CV using both a smoothed approximation to $\ell_1$ and the $\IJ$, $\NS$ approximations. For the bcTCGA dataset (linear regression), the nearly quadratic objective seems to be extremely well approximated by one Newton step, making $\initNS[R^\eta]$ significantly more accurate than $\initIJ[R^\eta]$; see the note at the end of \cref{app-modifiedDerivation} about the exactness of $\initNS$ on quadratic objectives. } \label{realExperimentsAccuracy}
\end{figure}

\section{Experiments} \label{sec-experiments}

We now empirically verify the good behavior of $\NS$ and $\IJ$ (i.e.\ proposal \#4) and show that it far outperforms \#2 (smoothing $\ell_1$) and \#3 (subsampling) in our high-dimensional regime of interest. All the code to run our experiments here is available online.\footnote{\url{https://bitbucket.org/wtstephe/sparse_appx_cv/}} We focus comparisons in this section on proposals \#2--\#4, as they all directly address $\ell_1$-regularized problems. For an illustration of the failings of proposal \#1, see \cref{app-stochasticHessianNoGood}. To illustrate \#2, we consider the smooth approximation given by \citet{rad:2018:detailedALOO}:
$
R^\eta(\theta) := \sum_{d=1}^D \frac{1}{\eta}\big( \log(1 + e^{\eta\theta_d}) + \log(1 + e^{-\eta\theta_d}) \big).
$
While $\lim_{\eta\to\infty} R^\eta(\theta) = \n{\theta}_1$, we found that this approximation became numerically unstable for 
optimization when $\eta$ was much larger than 100, so we set $\eta = 100$ in our experiments. 

\textbf{Simulated experiments.}  First, we trained logistic regression models on twenty-five random datasets in which $x_{nd} \overset{i.i.d.}{\sim} N(0,1)$ with $N=500$ and $D=40{,}000$. We set $\lambda = 1.5\sqrt{\log(D)/N}$ to mimic our condition in \cref{logisticRegressionLambda}. The true $\theta^*$ was supported on its first five entries. We evaluate our approximations by comparing the CV estimate of out-of-sample error (``$\mathrm{LOO}$'') to the approximation $\mathrm{ALOO} := \frac{1}{N} \sum_{n=1}^N f(x_n^T \IJ, y_n).$ We report percent error:
\begin{equation}
\abs{\mathrm{ALOO} - \mathrm{LOO}} / \mathrm{LOO}. \label{percentError}
\end{equation}
%

\cref{manyNtrialsAccuracy} compares the accuracy and run times of proposals \#2 and \#3 versus $\IJ$. We chose the number of subsamples so that subsampling CV would have about the same runtime as computing $\IJ$ for all $n$.\footnote{Specifically, we computed 41 different $\thetan$ for each trial in order to roughly match the time cost of computing $\IJ$ for all $N=500$ datapoints.} We see that subsampling usually has much worse accuracy than $\IJ$. Using $\initIJ$ with $R^{100}(\theta)$ as a regularizer is even worse, as we approximate over all $D$ dimensions; the resulting approximation is slower and less accurate -- by multiple orders of magnitude -- across all trials.

\paragraph{The importance of setting $\lambda$.} Our theoretical results heavily depend on particular settings of $\lambda$ to obtain the fixed-dimensional error scaling shown in blue in \cref{fig-errorScalingExample}. One might wonder if such a condition on $\lambda$ is necessary for approximate CV to be accurate. We offer evidence in \cref{app-supportRecoveryExperiments} that this scaling is necessary by empirically showing that when $\lambda$ violates our condition, the error in $\IJ$ grows with $N$.

\begin{figure}
  \centering
  \includegraphics[scale=0.4]{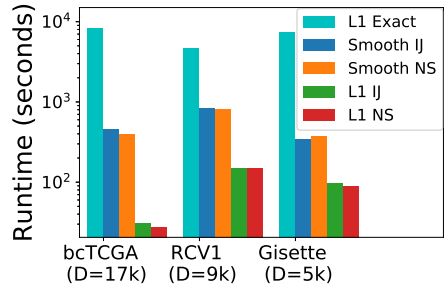}
  \caption{Log runtimes for experiments in \cref{realExperimentsAccuracy}, with exact CV included for comparison.} \label{realExperimentsRuntime}
\end{figure}



\textbf{Real data experiments.} 
We next study how dependent our results are on the particular distributional assumptions \cref{linearRegressionTheorem,logisticRegressionTheorem}. We explore this question with a number of publicly available datasets \citep{bcTCGA, rcv1, gisette}. We chose these datasets because they have a high enough dimension to observe the effect of our results, yet are not so large that running exact CV for comparison is prohibitively expensive; see \cref{app-realExperiments} for details (including our settings of $\lambda$). For each dataset, we approximate CV for the $\ell_1$ regularized model using $\IJ$ and $\NS$. For comparison, we report the accuracy of $\initIJ[R^\eta]$ and $\initNS[R^\eta]$ with $\eta = 100$. Our results in \cref{realExperimentsAccuracy} show that $\IJ$ is significantly faster and more accurate than exact CV or smoothing.

To demonstrate the scalability of our approximations, we re-ran our RCV1 experiment on a larger version of the dataset with $N=20{,}242$ and $D = 30{,}000$. Based on the time to compute exact LOOCV for twenty datapoints, we estimate exact LOOCV would have taken over two weeks to complete, whereas computing both $\NS$ and $\IJ$ for \emph{all} $n$ took three minutes.


\section{Conclusions and future work}

We have provided the first analysis of when CV can be approximated quickly \emph{and} accurately in high dimensions with guarantees on quality. We have seen that, out of a number of proposals in the literature, running approximate CV on the recovered support (i.e., $\NS$ and $\IJ$) forms the only proposal that reaches these goals both theoretically and empirically. We hope this analysis will serve as a starting point for further understanding of when approximate CV methods work for high-dimensional problems.
\\\\
We see three interesting directions for future work. First, this work has focused entirely on approximate CV for model assessment. In \cref{app-lambdaSelection}, we show that approximate CV for model \emph{selection} can have unexpected and undesirable behavior; we believe understanding this behavior is one of the most important future directions in this area. Second, one could extend our results to results to the higher order infinitesimal jackknife presented in \citet{giordano:2019:HOIJ}. Finally, it would be interesting to consider our approximations as a starting point for subsampling estimators, as proposed in \citet{magnusson:2019:scalableAki}.


\subsection*{Acknowledgements}

This research is
supported in part by DARPA, the CSAIL-MSR Trustworthy AI Initiative,
an NSF CAREER Award, an ARO YIP Award, and ONR.

\bibliography{references}
\bibliographystyle{plainnat}

\newpage
\appendix
\onecolumn

\section{Cross-validation methods} \label{app-CVReview}

In this appendix, we review standard cross-validation (CV) for optimization problems of the form:
$$ \argmin_{\theta \in \Theta} \sum_{n=1}^N f_n(\theta) + \lambda R(\theta),$$
where $\Theta \subseteq \R^D$. By leave-one-out cross-validation (LOOCV), we mean the process of repeatedly computing:
$$ \thetan := \sum_{m:\, m\neq n} f_m(\theta) + \lambda R(\theta).$$
The parameter estimates $\{\thetan\}_{n=1}^N$ can then be used to produce an estimate of variability or out-of-sample error; e.g., to estimate the out-of-sample error, one computes $(1/N) \sum_n f_n(\thetan)$. By $K$-fold cross-validation, we mean the process of splitting up the dataset into $K$ disjoint folds, $S_1, \dots, S_K$ with $S_1 \cup \dots \cup S_K = [N]$. One then estimates the parameters:
$$ \hat\theta_{\backslash S_k} := \argmin_{\theta\in\Theta} \sum_{n:\, n \not\in S_k} f_n(\theta) + \lambda R(\theta).$$
The parameter estimates $\{\hat\theta_{\backslash S_k}\}_{k=1}^K$ can then be used to produce an estimate of variability or out-of-sample error.

\section{Scaling of the leave-one-out objective} \label{app-LOOScaling}

We defined $\thetan$ as the solution to the following optimization problem:
$$ \thetan := \argmin_{\theta\in\Theta} \frac{1}{N} \sum_{m:\, m\neq n} f_m(\theta) + \lambda R(\theta).$$
An alternative would be to use the objective $1/(N-1) \sum_{m:\, m \neq n} f_m + \lambda R$ in order to keep the scaling between the regularizer and the objective the same as in the full-data problem. Indeed, all existing theory that we are aware of for CV applied to $\ell_1$ regularized problems seems to follow the $1/(N-1)$ scaling \citep{homrighausen:2014:l1LOO,homrighausen:2013:l1Kfold, miolane:2018:uniformControl, chetverikov:2019:l1Kfold}. On the other hand, all existing approaches to approximate LOOCV for regularized problems have used the $1/N$ scaling that we have given \citep{beirami:2017:firstALOO,rad:2018:detailedALOO,wang:2018:primalDualALOO,xu:2019:ALOOAMP,obuchi:2016:linearALOO,obuchi:2018:logisticALOO}. Note that the scaling is not relevant in \citet{giordano:2018:ourALOO}, as they do not consider the regularized case. As our work is aimed at identifying when existing approximations work well in high dimensions, we have followed the $1/N$ choice from the literature on approximate LOOCV. The different results from using the two scalings may be insignificant when leaving only one datapoint out. But one might expect the difference to be substantial for, e.g., $K$-fold CV. We leave an understanding of what the effect of this scaling is (if any) to future work.

\section{Approximately solving $\initIJ$ and $\initNS$} \label{app-stochasticHessianNoGood}

\begin{figure}
  \centering
  \begin{tabular}{cc}
    \includegraphics[scale=0.5]{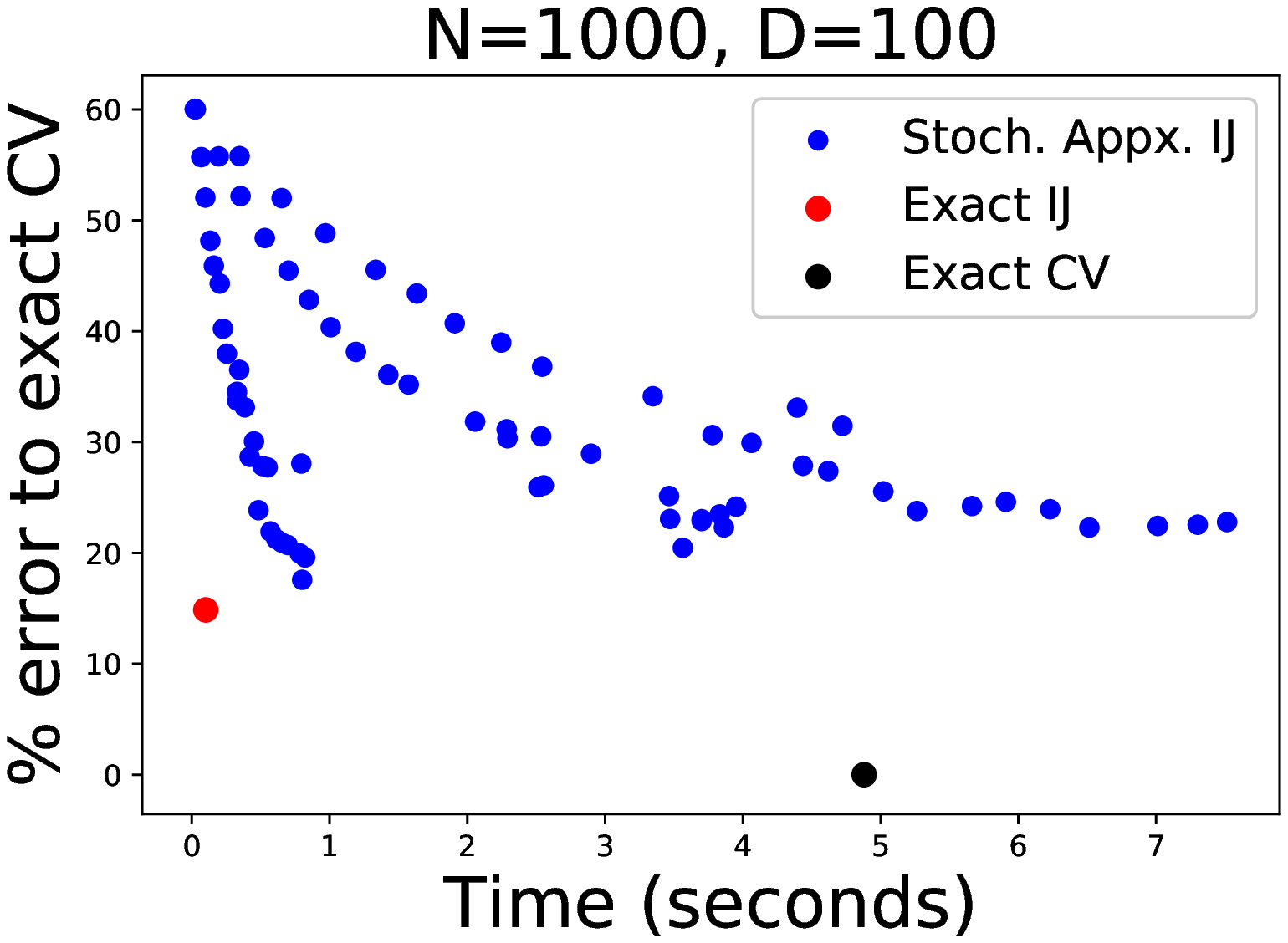} &
    \includegraphics[scale=0.5]{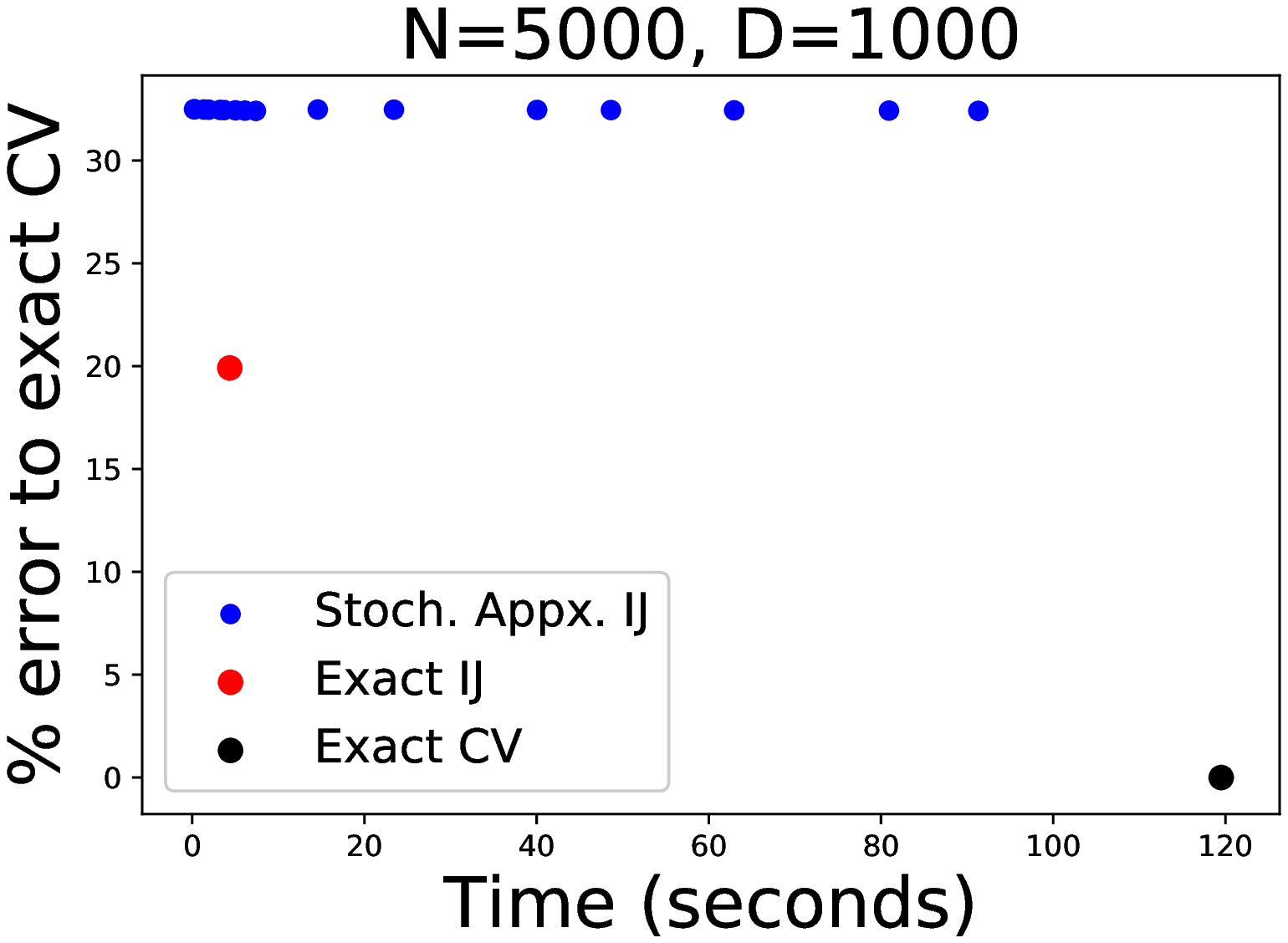}
  \end{tabular}
\caption{Stochastic Hessian experiments from \cref{app-stochasticHessianNoGood}. We show percent error of approximation versus compute time for two different dataset sizes. We show three techniques for computing CV: exactly computing CV (black dot, which naturally has 0\% error), $\initIJ$ with exactly computing the needed linear systems (red dot), and $\initIJ$ with the stochastic solves described in \cref{app-stochasticHessianNoGood} (blue dots, one for each setting of the parameters $K$ and $M$). Values of $M$ and $K$ used are described in \cref{app-stochasticHessianNoGood}; we use an extended range of settings for the smaller dataset to more extensively illustrate the approximation's behavior. Settings of $K$ and $M$ for which the stochastic solves are roughly as fast as exactly computing $\initIJ$ result in a significantly less accurate approximation of CV.}
\label{stochasticHessianNoGood}
\end{figure}

We have seen $\initIJ$ and $\initNS$ are in general not accurate for high-dimensional problems. Even worse, they can become prohibitively costly to compute due to the $O(D^3)$ cost required to solve the needed linear systems. One idea to at least alleviate this computational burden, proposed by \citet{koh:2017:influenceFunctions} in a slightly different context, is to use a stochastic inverse Hessian-vector-product from \citet{agarwal:2016:lissa} to approximately compute $\initIJ$ and $\initNS$. Although this method works well for the purposes of \citet{koh:2017:influenceFunctions}, we will see that in the context of approximate CV, it adds a large amount of extra error on top of the already inaccurate $\initNS$ and $\initIJ$. 

We first describe this stochastic inverse Hessian-vector-product technique and argue that it is not suitable for approximating cross-validation. The main idea from \citet{agarwal:2016:lissa} is to use the series:
$$ H\inv = \sum_{k=0}^\infty (I - H)^k ,$$
which holds for any positive definite $H$ with $\|H\|_{op} \leq 1$. Now, we can both truncate this series at some level $K$ and write it recursively as:
$$ H\inv \approx H\inv_K := I + (I - H)H\inv_{K-1},$$
where $H\inv_0 = I$. Next, to avoid computing $H$ explicitly, we can note that if $A_k$ is some random variable with $E[A_k] = H$, we can instead just sample a new $A_k$ at each iteration to define:
$$ \bar H\inv_k := I + (I - A_k) \bar H\inv_{K-1} .$$
In our case, we pick a random $n_k \in [N]$ and set $A_k = \nabla^2_\theta f(x_{n_k}^T \theta, y_{n_k}) + (1/N) \lambda \nabla^2 R(\theta)$. Finally, \citet{agarwal:2016:lissa} suggest taking $M$ samples of $\bar H\inv_K$ and averaging the results to lower the variance of the estimator. This leaves us with two parameters to tune: $M$ and $K$. Increasing either will make the estimate more accurate and more expensive to compute. \citet{koh:2017:influenceFunctions} use this approximation to compute $\initIJ$ for high dimensional models such as neural networks; however, we remark that their interest lies in the qualitative properties of $\initIJ$, such as signs and relative magnitudes across various values of $n$. It remains to be seen whether this stochastic solver can be successfully used to approximate CV.

To test the application of this approximation to approximate CV, we generated a synthetic logistic regression dataset with covariates $x_{nd} \overset{i.i.d.}{\sim} N(0,1)$. We use $R(\theta) = \| \theta \|_2^2$. In \cref{stochasticHessianNoGood}, we show that for two settings of $N$ and $D$ there are no settings of $M$ and $K$ for which using $\bar H_K\inv$ to compute $\initIJ$ provides a both fast and accurate approximation to CV. Specifically, we range $K \in \set{1,20,30,50,60,80,100,120}$ and $M \in \set{2,25}$ and see that when the stochastic approximation is faster, it provides only a marginal speedup while providing a significantly worse approximation error.

\section{Further details of \cref{modifiedApproximation} and \cref{regularApproximation}} \label{app-approximation}

In \cref{sec-approximation}, we briefly outlined the approximations $\initNS$ and $\initIJ$ to $\thetan$; we give more details about these approximations and their derivations here. Recall that we defined $H(\hat\theta) := (1/N) \sum_{n=1}^N \nabla_\theta^2 f(x_n^T\hat\theta, y_n) + \lambda \nabla^2_\theta R(\hat\theta)$. We first restate the ``infinitesimal jackknife'' approximation from the main text, which was derived by the same approach taken by \citet{giordano:2018:ourALOO}:

\begin{equation} \thetan \approx \initIJ := \hat\theta + \frac{1}{N}H(\hat\theta)\inv \nabla_\theta f(x_n^T\hat\theta, y_n). \label{app-regularApproximation} \end{equation}

The ``Newton step'' approximation, similar to the approach in \citet{beirami:2017:firstALOO} and identical to the approximation in \citet{rad:2018:detailedALOO, wang:2018:primalDualALOO}, is:
\begin{equation} \thetan \approx \initNS := \hat\theta + \frac{1}{N}\bigg( H(\hat\theta) - \frac{1}{N} \nabla^2_\theta f(x_n^T\hat\theta, y_n) \bigg) \inv \nabla_\theta f(x_n^T\hat\theta, y_n). \label{app-modifiedApproximation} \end{equation}

\subsection{Derivation of $\initIJ$} \label{app-regularDerivation}

We will see in \cref{app-swissArmyProof} that, after some creative algebra, $\initIJ$ is an instance of $\hat\theta_{IJ}$ from Definition 2 of \citet{giordano:2018:ourALOO}. However, this somewhat obscures the motivation for considering \cref{app-regularApproximation}. As an alternative to jamming our problem setup into that considered by \citet{giordano:2018:ourALOO}, we can more directly obtain the approximation in \cref{app-regularApproximation} by a derivation only slightly different from that in \citet{giordano:2018:ourALOO}. We begin by defining $\thetaw$ as the solution to a weighted optimization problem with weights $w_n \in \R$:
\begin{equation} \thetaw := \argmin_{\theta \in \Theta} G(w, \theta) :=\argmin_{\theta \in \Theta} \frac{1}{N} \sum_{n=1}^N w_n f(x_n^T\theta, y_n) + \lambda R(\theta), \label{weightedOptProblem} \end{equation}
where we assume $G$ to be twice continuously differentiable with an invertible Hessian at $\thetaone$ (where $\thetaone$ is the solution in \cref{weightedOptProblem} with all $w_n=1$). For example, we have that $\thetan = \thetaw$ if $w$ is the $N$-dimensional vector of all ones but with a zero in the $n$th coordinate. We will form a linear approximation to $\thetaw$ as a function of $w$. To do so, we will need to compute the derivatives $d\thetaw / dw_n$ for each $n$. To compute these derivatives, we begin with the first order optimality condition of \cref{weightedOptProblem} and take a total derivative with respect to $w_n$:
\begin{align*}
&\p{G}{\theta}\at{w=1, \theta=\thetaone} = 0 \\
&\implies \frac{d}{dw_n} \p{G}{\theta}\at{w=1, \theta=\thetaone} = \frac{\partial^2 G}{\partial \theta \partial w_n}\at{w=1,\theta=\thetaone} \frac{dw_n}{dw_n} + \frac{\partial^2 G}{\partial\theta^2} \at{w=1,\theta=\thetaone} \frac{d\thetaw}{d w_n}\at{w=1} = 0.
\end{align*}
Re-arranging, defining $H(\thetaone) := \nabla^2_\theta G(w, \thetaone)$,  and using the assumed invertibility of $H(\thetaone)$ gives:
\begin{equation} \frac{d\hat\theta}{dw_n}\at{w = 1} = -\left(\frac{\partial^2 G}{\partial\theta^2}\at{w=1, \theta=\thetaone} \right)\inv \frac{\partial^2 G}{\partial w_n \partial\theta}\at{w = 1, \theta=\thetaone} = -\frac{1}{N}H(\hat\theta)\inv \nabla_\theta f(x_n^T \hat\theta, y_n). \end{equation}
In the final equality, we used the fact that $\thetaone = \hat\theta$. Now, by a first order Taylor expansion around $w = (1, 1, \dots, 1)$, we can write:
\begin{align}
  \thetaw &\approx \hat\theta + \sum_{n=1}^N \frac{d\hat\theta}{dw_n}\at{w = 1} (w_n - 1) \\
  & = \hat\theta - \frac{1}{N} \sum_{n=1}^N H(\hat\theta)\inv \nabla_\theta f(x_n^T \hat\theta, y_n) (w_n - 1).
\end{align}
For the special case of $w$ being the vector of all ones with a zero in the $n$th coordinate (i.e., the weighting for LOOCV), we recover \cref{app-regularApproximation}.

\subsection{Invertibility in the definition of $\initNS$ and $\initIJ$} \label{app-invertibility}

In writing \cref{modifiedApproximation,regularApproximation} we have assumed the invertibility of $H(\hat\theta)$ and $H(\hat\theta) - (1/N)\nabla_\theta f(x_n^T\theta, y_n)$. We here note a number of common cases where this invertibility holds. First, if $\nabla^2 R$ is positive definite for all $\theta$ (as in the case of $R = \| \cdot\|_2^2$), then these matrices are always invertible. If $R$ is merely convex, $H(\hat\theta) - (1/N)\nabla_\theta f(x_n^T\theta, y_n)$ is invertible if $\mathrm{Span} \set{x_m}_{m:\, m \neq n} = \R^D$. This condition on the span holds almost surely if the $x_n$ are sampled from a continuous distribution and $D \leq N$.

\subsection{Accuracy of $\initIJ$ for regularized problems} \label{app-swissArmyProof}

As noted in the main text, \citet{giordano:2018:ourALOO} show that the error of $\initIJ$ is bounded by $C/N$ for some $C$ that is constant in $N$. However, their results apply only to the unregularized case (i.e., $\lambda = 0$). We show here that their results can be extended to the case of $\lambda > 0$ with mild additional assumptions; the proof of \cref{prop-swissArmy} appears below.
\begin{prop} \label{prop-swissArmy}
  Assume that the conditions for Corollary 1 of \citet{giordano:2018:ourALOO} are satisfied by $F(\theta)$. Furthermore, assume that we are restricted to $\theta$ in some compact subset $\Theta$ of $\R^D$, $\lambda = O(1/\sqrt{N})$, $F + \lambda R$ is twice continuously differentiable for all $\theta$, and that $\nabla^2R(\theta)$ is positive definite for all $\theta\in\Theta$. Then $\initIJ$ can be seen as an application of the approximation in Definition 2 of \citet{giordano:2018:ourALOO}. Furthermore, the assumptions of their Corollary 1 are met, which implies:
  \begin{equation} \max_{n \in [N]} \|\initTildethetan_{IJ} - \thetan\|_2 \leq  \frac{C'}{N^2} \sup_{\theta\in\Theta} \max_{n\in [N]} \n{\nabla_\theta f(x_n^T\theta, y_n)}_\infty \leq  \frac{C}{N}, \end{equation}
  where $C$ and $C'$ are problem-specific constants independent of $N$ that may depend on $D$.
\end{prop}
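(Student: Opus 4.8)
The plan is to borrow the $C/N$ error guarantee of Corollary 1 of \citet{giordano:2018:ourALOO}, which is stated for \emph{unregularized} weighted M-estimation, by recasting the regularized problem so that it fits their framework. The creative algebra is to absorb a full copy of the regularizer into each per-datapoint term: set $g_n(\theta) := f(x_n^T\theta, y_n) + \lambda R(\theta)$, so that the unweighted average $\frac{1}{N}\sum_n g_n = F + \lambda R$ and hence $\hat\theta$ is exactly the full-data solution in their sense, while the Jacobian of the stacked estimating equations at $\hat\theta$ is $\frac{1}{N}\sum_n \nabla^2_\theta g_n(\hat\theta) = \nabla^2_\theta F(\hat\theta) + \lambda\nabla^2_\theta R(\hat\theta) = H(\hat\theta)$ -- precisely the matrix inverted in $\initIJ$. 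With this identification, the infinitesimal-jackknife weight vector that zeros out the $n$th term produces the Definition~2 approximation $\hat\theta + \tfrac{1}{N}H(\hat\theta)\inv\nabla_\theta g_n(\hat\theta)$, which agrees with $\initIJ$ up to the single term $\tfrac{\lambda}{N}H(\hat\theta)\inv\nabla_\theta R(\hat\theta)$.

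Second, I would verify the hypotheses of their Corollary~1 for the functions $g_n$. The corollary requires uniform (in $n$ and in $\theta\in\Theta$) bounds on the relevant derivatives, a uniformly nonsingular Jacobian $H(\hat\theta)$, and that the resulting constants be independent of $N$. Each transfers from the corresponding property of $F$ assumed in the statement: compactness of $\Theta$ together with twice-continuous-differentiability of $F + \lambda R$ gives uniform bounds on gradients and Hessians (and the higher-order smoothness the corollary needs); positive-definiteness of $\nabla^2 R$ means adding $\lambda\nabla^2_\theta R$ can only improve the conditioning of $H(\hat\theta)$, so it inherits the uniform invertibility of $\nabla^2_\theta F$ supplied by Giordano et al.'s conditions; and $\lambda = O(1/\sqrt N)$ guarantees the regularizer's contribution to every one of these constants stays bounded as $N\to\infty$, so no constant acquires $N$-dependence. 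Feeding this into Corollary~1 bounds the distance between the Definition~2 approximation and the weighted solution by $\frac{C'}{N^2}\sup_{\theta\in\Theta}\max_n\|\nabla_\theta g_n(\theta)\|_\infty$, and bounding $\|\nabla_\theta g_n\|$ by $\|\nabla_\theta f_n\| + O(\lambda)$ on $\Theta$ produces the stated gradient-dependent form.

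The main obstacle -- and the reason the conclusion is only $C/N$ rather than an exact identity -- is reconciling the two weighting conventions. Giordano et al.\ reweight \emph{every} term, so their weighted solution is the minimizer with a $\tfrac{N-1}{N}\lambda$ regularizer, whereas the paper's $\thetan$ keeps the full $\lambda R$; likewise the approximation they produce differs from $\initIJ$ by the extra $\tfrac{\lambda}{N}H(\hat\theta)\inv\nabla_\theta R(\hat\theta)$ noted above. Both gaps have size $O(\lambda/N) = O(N^{-3/2})$ under $\lambda = O(1/\sqrt N)$, hence are $o(1/N)$; I would close them with a perturbation bound for a strongly convex stationary point (its displacement is controlled by the $O(\lambda/N)$ change in the objective's gradient) and the triangle inequality, absorbing both into the constant $C$. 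The genuinely delicate checking is therefore not any single inequality but confirming that \emph{every} constant handed back by Corollary~1 remains uniform in $N$ once the $\lambda R$ perturbation is present -- which is exactly what the compactness and $\lambda = O(1/\sqrt N)$ assumptions are there to ensure.
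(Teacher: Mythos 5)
Your reduction is genuinely different from the paper's, and the difference matters for one of the two claims. The paper absorbs the regularizer as a \emph{single extra term}: it rewrites the estimating equation as a weighted sum of $N+1$ terms, $\frac{1}{N+1} w_0 (N+1)\lambda \nabla R(\theta) + \frac{1}{N+1}\sum_{n=1}^N w_n \frac{N+1}{N}\nabla f(x_n^T\theta, y_n)$, and only ever sets one of $w_1,\dots,w_N$ to zero while keeping $w_0 = 1$. With this construction the weighted solution is \emph{exactly} $\thetan$ (the full $\lambda R$ is retained) and the Definition~2 approximation is \emph{exactly} $\initIJ$, so the proposition's first assertion -- that $\initIJ$ \emph{is} an instance of their approximation -- holds with no slack, and the error bound follows directly from their Corollary~1 once Assumptions 1--5 are checked (which the paper does using essentially the same ingredients you list: compactness of $\Theta$, eigenvalue monotonicity for the PSD regularizer Hessian, and $\lambda = O(1/\sqrt{N})$ to tame the $\lambda(N+1)\nabla R$ and $\lambda(N+1)\nabla^2 R$ terms). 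Your reduction instead distributes $\lambda R$ across all $N$ terms via $g_n = f_n + \lambda R$, which buys a slightly simpler verification of the norm conditions but creates two $O(\lambda/N)$ mismatches -- the framework's leave-one-out target carries regularization $\frac{N-1}{N}\lambda$ rather than $\lambda$, and its Definition~2 output is $\initIJ + \frac{\lambda}{N}H(\hat\theta)\inv\nabla_\theta R(\hat\theta)$ rather than $\initIJ$ -- that you then have to close with perturbation arguments. You identify both gaps correctly, and since both are $O(N^{-3/2})$ under $\lambda = O(1/\sqrt{N})$ they can indeed be absorbed into the $C/N$ bound (the perturbation step for the weighted minimizer needs uniform positive-definiteness of the leave-one-out Hessians, which follows from Giordano et al.'s bounded-inverse-Hessian assumption plus the PSD regularizer, so it is more bookkeeping than obstacle). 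The one thing your route does \emph{not} deliver is the proposition's literal claim that $\initIJ$ is an application of Definition~2: under your identification it is only an $o(1/N)$ perturbation of one. If you want that statement exactly, you need the paper's trick of treating the regularizer as an $(N{+}1)$th data point whose weight is never perturbed.
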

\cref{prop-swissArmy} provides two bounds on the error $\| \initIJ - \thetan\|_2$: either $C'/N^2$ times the maximum of the gradient or just $C/N$. One bound or the other may be easier to use, depending on the specific problem. It is worth discussing the conditions of \cref{prop-swissArmy} before going into its proof. The first major assumption is that $\theta$ is restricted to some compact set $\Theta$. Although this assumption may not be satisfied by problems of interest, one may be willing to assume that $\theta$ lives in some bounded set in practice. In any case, such an assumption seems necessary to apply the results of \citet{giordano:2018:ourALOO} to most unregularized problems, as they, for example, require $\sup_{\theta \in \Theta} F(\theta)$ to be bounded. We will require the compactness of $\Theta$ to show that $\sup_{\theta\in\Theta} F(\theta) + \lambda R(\theta)$ is bounded.
\\\\
The second major assumption of \cref{prop-swissArmy} is that $\lambda = O(1/\sqrt{N})$. We need this assumption to ensure that the term $\lambda R(\theta)$ is sufficiently well behaved. In practice this assumption may be somewhat limiting; however, we note that for fixed D, such a scaling is usually assumed -- and in some situations is necessary -- to obtain standard theoretical results for $\ell_1$ regularization (e.g., \citet{wainwright:2009:linearRegressionLasso} gives the standard scaling for linear regression, $\lambda = \Omega (\sqrt{\log(D)/N})$). Our \cref{linearRegressionTheorem,logisticRegressionTheorem} also satisfy such a scaling when D is fixed. In any case, we stress that this assumption -- as well as the assumption on compactness -- are needed only to prove \cref{prop-swissArmy}, and not any of our other results. We prove \cref{prop-swissArmy} to demonstrate the baseline results that exist in the literature so that we can then show how our results build on these baselines.
\begin{proof}
We proceed by showing that the regularized optimization problem in our \cref{ourOptProblem} can be written in the framework of Eq. (1) of \citet{giordano:2018:ourALOO} and then showing that the re-written problem satisfies the assumptions of their Corollary 1. First, the framework of \citet{giordano:2018:ourALOO} applies to weighted optimization problems of the form:
\begin{equation} \thetaw := \theta \in \Theta \; s.t. \; \frac{1}{N} \sum_{n=1}^N w_n g_n(\theta) = 0. \end{equation}
In order to match this form, we will rewrite the gradient of the objective in \cref{ourOptProblem} as a weighted sum with $N+1$ terms, where the first term, with weight $w_0 = 1$, will correspond to $R(\theta)$:
\begin{equation} \frac{1}{N+1} w_0 (N+1) \lambda \nabla R(\theta) + \frac{1}{N+1} \sum_{n=1}^N w_n \frac{N+1}{N} \nabla f(x_n^T\theta, y_n). \label{swissArmyObjective}\end{equation}
We will also need a set of weight vectors $W \subseteq \R^{N+1}$ for which we are interested in evaluating $\thetaw$. We choose this set as follows. In the set, we include each weight vector that is equal to one everywhere except $w_n=0$ for exactly one of $n \in \{1,\ldots,N\}$. Thus, for each $n$, there is a $w \in W$ such that $\thetaw = \thetan$. Finally, then, we can apply Definition 2 of \citet{giordano:2018:ourALOO} to find the approximation $\theta_{IJ}(w)$ for the $w$ that corresponds to leaving out $n$. We see that $\theta_{IJ}(w)$ in this case is exactly equal to $\initIJ$ in our notation here.

Now that we know our approximation is actually an instance of $\theta_{IJ}(w)$, we need to check that \cref{swissArmyObjective} meets the assumptions of Corollary 1 of \citet{giordano:2018:ourALOO} to apply their theoretical analysis to our problem. We check these below, first stating the assumption from \citet{giordano:2018:ourALOO} and then covering why it holds for our problem.

\begin{enumerate}
\item (\emph{Assumption 1}): \emph{for all $\theta \in \Theta$, each $g_n$ is continuously differentiable in $\theta$.}
\\ For our problem, by assumption, $R(\theta)$ and $f(x_n^T\theta, y_n)$ are twice continuously differentiable functions of $\theta$, so this assumption holds.
\item (\emph{Assumption 2}): \emph{for all $\theta \in \Theta$, the Hessian matrix, $H(\theta, 1) := (1/N) \sum_n \partial g_n(\theta) / \partial \theta^T$ is invertible and satisfies $\sup_{\theta\in\Theta} \| H\inv (\theta, 1) \|_{op} \leq C_{op} < \infty$ for some constant $C_{op}$, where $\| \cdot \|_{op}$ denotes the operator norm on matrices with respect to the $\ell_2$ norm (i.e., the maximum eigenvalue of the matrix).}
\\For our problem, by assumption, the inverse matrix $(\nabla^2 F(\theta))\inv $ exists and has bounded maximum eigenvalue for all $\theta\in\Theta$. Also by assumption, $R$ has a positive semidefinite Hessian for all $\theta$, which implies:
  $$ \sup_{\theta\in\Theta} \n{H\inv (\theta,1)}_{op} = \sup_{\theta\in\Theta} \n{\left(\nabla^2 F(\theta) + \lambda\nabla^2 R(\theta)\right)\inv }_{op} \leq \sup_{\theta\in\Theta} \n{\left(\nabla^2 F(\theta)\right)\inv}_{op}. $$
To see that the inequality holds, first note that for a positive semi-definite (PSD) matrix $A$, $\|A\inv\|_{op} = 1/ \lambda_{min}(A)$. The inequality would then follow if $\lambda_{min}(\nabla^2 F(\theta) + \lambda \nabla^2 R(\theta)) \geq \lambda_{min}(\nabla^2 F(\theta))$. To see that this holds, take any two $D \times D$ PSD matrices $A$ and $B$. Let $\lambda_d(\cdot)$ be the $d$th eigenvalue of a matrix with $\lambda_1 = \lambda_{min}$. Then:
$$ \lambda_d(A+B) = \min_{\substack{E \subseteq \R^D \\ \dim{E} = d}} \max_{\substack{x \in E \\ \n{x}_2 = 1}} x^T (A+B) x \geq \min_{\substack{E \subseteq \R^D \\ \dim{E} = d}} \max_{\substack{x \in E \\ \n{x}_2 = 1}} x^T A x = \lambda_d(A),$$
where the inequality holds because $B$ is PSD. So, $\lambda_{min}(A+B) \geq \lambda_{min}(A)$, which finishes the proof. We have thus showed that the operator norm of $H\inv (\theta,1)$ is bounded by that of $\nabla^2 F(\theta)\inv$ for all $\theta\in\Theta$.
\item (\emph{Assumption 3}): \emph{Let $g(\theta)$ and $h(\theta)$ be the $(N+1) \times D$ stack of gradients and $(N+1) \times D \times D$ stack of Hessians, respectively. That is, $g(\theta)_{nd} := (\nabla_\theta f(x_n^T\theta, y_n))_d$ for $n=1,\dots,N$ and $g(\theta)_{N+1,d} := (\nabla_\theta R(\theta))_d$, with $h$ defined similarly. Let $\n{g(\theta)}_2$ be the $\ell_2$ norm of $g$ flattened into a vector with $\n{h(\theta)}_2$ defined similarly. Then assume that there exist constants $C_g$ and $C_h$ such that:
\begin{align*}
  &\sup_{\theta\in\Theta} \frac{1}{\sqrt{N+1}} \n{g(\theta)}_2 \leq C_g < \infty \\
  &\sup_{\theta\in\Theta} \frac{1}{\sqrt{N+1}} \n{h(\theta)}_2 \leq C_h < \infty
\end{align*}
}
 To see that this holds for our problem, we have that:
 \begin{align*}
   \n{g(\theta)}_2 &:= \left[ \sum_{d=1}^D \left( (\lambda (N+1) \nabla  R(\theta)_d)^2 + \sum_{n=1}^N \left(\frac{N+1}{N}\right)^2 (\nabla f(x_n^T \theta, y_n)_d)^2 \right) \right]^{1/2} \\
   &\leq \lambda (N+1) \n{\nabla R(\theta)}_2 + \frac{N+1}{N} \left[ \sum_{d=1}^D \sum_{n=1}^N (\nabla f(x_n^T \theta, y_n)_d)^2 \right]^{1/2}.
 \end{align*}
We need to show this is bounded by $\sqrt{N+1}C_g$ for some constant $C_g$. By assumption in the statement of \cref{prop-swissArmy}, we have $\frac{1}{\sqrt{N+1}} \n{\nabla F(\theta)}_2 \leq \frac{1}{\sqrt{N}} \n{\nabla F(\theta)}_2 \leq C_g^{(1)}$ for some constant $C_g^{(1)}$. Because $\lambda$ is $O(1/\sqrt{N})$, the first term is equal to $O(\sqrt{N}) \| \nabla R(\theta)\|_2$. The compactness of $\Theta$ and the continuity of $\nabla R(\theta)$ imply that $\|\nabla R(\theta)\|_2$ is bounded by a constant for all $\theta\in\Theta$. So, we know that $\frac{O(\sqrt{N})}{\sqrt{N+1}} \| \nabla R(\theta)\|_2 \leq C_g^{(2)}$ for some constant $C_g^{(2)}$. Thus, we have that the assumption on $\| g(\theta)\|_2$ holds with $C_g = \frac{(N+1)}{N} C_g^{(1)} + C_g^{(2)}$. That the condition on $\| h(\theta) \|_2$ holds follows by the same reasoning.
\item (\emph{Assumption 4}): \emph{There exists some $\Delta_\theta > 0$ and $L_h < \infty$ such that if $\n{\theta - \hat\theta}_2 \leq \Delta_\theta$, then $\frac{1}{\sqrt{N+1}}\n{h(\theta) - h(\hat\theta)}_2 \leq L_h \n{\theta - \hat\theta}_2$.}
\\ We can show this holds for our problem by:
  \begin{align*}
    \n{h(\theta) - h(\hat\theta)}_2 &:= \\
    & \n{\frac{N+1}{N} \nabla^2 F(\theta) + \lambda (N+1) \nabla^2 R(\theta) - \frac{N+1}{N} \nabla^2 F(\hat\theta) - \lambda (N+1) \nabla^2 R(\hat\theta)}_2 \\
    & \leq (N+1)\lambda \n{\nabla^2 R(\theta) - \nabla^2 R(\hat\theta)}_2 + \frac{N+1}{N} \n{\nabla^2 F(\theta) - \nabla^2 F(\hat\theta)}_2 ,
  \end{align*}
  where we have abused notation to denote $\n{\nabla^2 F(\theta)}_2 := \sqrt{\sum_{i,j=1}^D \sum_{n=1}^N \nabla^2_\theta (f(x_n^T \theta, y_n)_{ij})^2}$. Now, we want to show that this quantity divided by $\sqrt{N+1}$ is bounded by $L_h \|\theta - \hat\theta\|_2$ for some constant $L_h$. By assumption in the statement of \cref{prop-swissArmy}, we have that Assumption 4 holds for $F$; this implies that $\frac{N+1}{(\sqrt{N+1})(N)} \n{\nabla^2 F(\theta) - \nabla^2 F(\hat\theta)}_2 \leq L_h^{(1)} \| \theta - \hat\theta\|_2$ for some constant $L_h^{(1)}$. As $R$ is twice continuously differentiable and the condition of Assumption 4 needs only to hold over a compact set of $\theta$'s, we know that $\nabla^2 R(\theta)$ is Lipschitz over this domain. Using this along with the assumption that $\lambda$ is $O(1/\sqrt{N})$, we have that:
  \begin{align*}
    \frac{\lambda (N+1)}{\sqrt{N+1}} \n{\nabla^2 R(\theta) - \nabla^2 R(\hat\theta)}_2 &= O(1) \n{\nabla^2 R(\theta) - \nabla^2 R(\hat\theta)}_2 \\
    & \leq L_h^{(2)} \n{\theta - \hat\theta}_2 ,
  \end{align*}
  for some constant $L_h^{(2)}$. So, Assumption 4 holds with constant $L_h = L_h^{(1)} + L_h^{(2)}$.
\item (\emph{Assumption 5): For all $w \in W$, we have $\frac{1}{\sqrt{N+1}} \n{w}_2 \leq C_w$ for some constant $C_w$.} This is immediately true for our definition of $W$, which, for all $w \in W$, has $\|w\|_2 = \sqrt{N}$.
\end{enumerate}

\end{proof}

\subsection{Derivation of $\initNS$} \label{app-modifiedDerivation}

\citet{wang:2018:primalDualALOO} and \citet{rad:2018:detailedALOO} derive $\initNS$ in \cref{app-modifiedApproximation} by taking a single Newton step on the objective $\objn + \lambda R$ starting at the point $\hat\theta$. For completeness, we include a derivation here. Recall that the objective with one datapoint left out is:
\begin{equation} \objn(\theta) + \lambda R(\theta) := \frac{1}{N} \sum_{m=1}^N f(x_m^T\theta, y_m) - \frac{1}{N} f(x_n^T \theta, y_n) + \lambda R(\theta), \end{equation}
which has $H(\theta) - (1/N)\nabla_\theta^2 f(x_n^T\theta, y_n)$ as its Hessian. Now consider approximating $\thetan$ by performing a single Newton step on $\objn$ starting from $\hat\theta$:
\begin{equation} \thetan \approx \hat\theta - \left(H(\hat\theta) - \frac{1}{N}\nabla_\theta^2 f(x_n^T\hat\theta, y_n)\right)\inv \left( \frac{1}{N}\sum_{m=1}^N \nabla_\theta f(x_m^T \hat\theta, y_m) - \frac{1}{N}\nabla_\theta f(x_n^T\hat\theta, y_n) + \lambda \nabla R(\hat\theta) \right). \end{equation}
Using the fact that, by definition of $\hat\theta$, $(1/N)\sum_{n=1}^N \nabla_\theta f(x_n^T \hat\theta, y_n) + \lambda \nabla R(\hat\theta) = 0$, we have that this simplifies to:
\begin{equation} \thetan \approx \hat\theta + \frac{1}{N}\left(H(\hat\theta) - \frac{1}{N}\nabla_\theta^2 f(x_n^T\hat\theta, y_n)\right)\inv  \nabla_\theta f(x_n^T\hat\theta, y_n), \end{equation}
which is exactly $\initNS$.

As $\initNS$ can be interpreted as a single Newton step on the objective $\objn + \lambda R$, it follows that $\initNS$ is exactly equal to $\thetan$ in the case that $\objn + \lambda R$ is a quadratic, as noted by \citet{beirami:2017:firstALOO}. For example, $\ell_2$ regularized linear regression has $\initNS = \thetan$ for all $n$. We further note that somewhat similar behavior can hold for $\ell_1$ regularized linear regression. Specifically, when $\mathrm{sign}\hat\theta = \mathrm{sign}\thetan$, we have that the objective $\objn + \lambda\n{\cdot}_1$ is a quadratic when restricted to the dimensions in $\hat S$. In this case, $\NS$ can be interpreted as taking a Newton step on $\objn + \lambda\n{\cdot}_1$ restricted to the dimensions in $\hat S$. It follows that $\NS = \thetan$ when $\mathrm{sign}\hat\theta = \mathrm{sign}\thetan$ for $\ell_1$ regularized linear regression.

\subsection{Computation time of approximations}

There is a major computational difference between \cref{app-modifiedApproximation} and \cref{app-regularApproximation}: the former requires the inversion of a $D \times D$ matrix for \emph{each} $\thetan$ approximated, while the latter requires a single $D \times D$ matrix inversion for $\emph{all}$ $\thetan$ inverted, which incurs a cost of $O(ND^3)$ versus a cost of $O(D^3)$. Even for small $D$, this is a significant additional expense. 

However, as noted by \citet{rad:2018:detailedALOO, wang:2018:primalDualALOO}, \cref{app-modifiedApproximation} is much cheaper when considering the special case of generalized linear models. In this case, $\nabla_\theta^2 f_n$ is some scalar times $x_n x_n^T$ -- a rank one matrix. The Sherman-Morrison formula then allows us to cheaply compute the needed inverse in \cref{app-modifiedApproximation} given only $H\inv$; this is how Equation 8 in \citet{rad:2018:detailedALOO} and Equation 21 in \citet{wang:2018:primalDualALOO} are derived. Even though we only consider GLMs in this work, we still study \cref{app-regularApproximation} with the hope of retaining scalability in more general problems.

\section{Derivation of $\IJ$ and $\NS$ via smoothed approximations} \label{app-smoothing}

As noted in \cref{sec-approximation}, \citet{rad:2018:detailedALOO,wang:2018:primalDualALOO} derive the $\NS$ approximation by considering $\initNS[R^\eta]$ with $R^\eta$ being some smoothed approximation to the $\ell_1$ norm, and then taking the limit of $\initNS[R^\eta]$ as the amount of smoothness goes to zero. We review this approach and then state our \cref{prop-restrictedApproximation}, which says that the same technique can be used to derive $\IJ$.

We first give two possible ways to smooth the $\ell_1$ norm. The first is given by \citet{rad:2018:detailedALOO}:
\begin{equation}
	\n{\theta}_1 \approx R^\eta(\theta) := \sum_{d=1}^D \frac{1}{\eta}\bigg( \log(1 + e^{\eta\theta_d}) + \log(1 + e^{-\eta\theta_d}) \bigg),
	\label{radSmoother}
\end{equation}
The second option is to use the more general smoothing framework described by \citet{wang:2018:primalDualALOO}. They allow selection of a function $q: \R \to \R$ satisfying: (1) $q$ has compact support, (2) $\int q(u) \; du = 1$, $q(0) > 0$, and $q \geq 0$, and (3) $q$ is symmetric around 0 and twice continuously differentiable on its domain, and then define a smoothed approximation:
\begin{equation}
	R^\eta (\theta) := \eta \sum_{d=1}^D \int_{-\infty}^\infty \abs{u} q\big( \eta (\theta_d - u)\big) du,  	\label{wangSmoother}
\end{equation}
In both \cref{radSmoother,wangSmoother}, we have $\lim_{\eta\to\infty} = \|\theta\|_1$. Notice that either choice of $R^\eta$ is twice differentiable for any $\eta < \infty$, so one can consider the approximations $\initNS[R^\eta], \initIJ[R^\eta]$. We now state two assumptions, both of which are given by \citet{rad:2018:detailedALOO,wang:2018:primalDualALOO}, under which one can show the limits of these approximations as $\eta\to\infty$ are equal to $\NS$ and $\IJ$.

\begin{assumption} \label{assum:subgradient}
  For any element $\hat z \in \R^D$ of the subdifferential $\partial \n{\theta}_1$ evaluated at $\hat\theta$ such that $\nabla F(\hat\theta) + \lambda \hat z = 0$, we have $\n{\hat z_{\hat S^c}}_\infty < 1$.
\end{assumption}
\begin{assumption} \label{assum:continuouslyDifferentiable}
  For any $y_n \in \R$, $f(z,y_n)$ is a twice continuously differentiable function as a function of $z \in \R$.
\end{assumption}
\begin{prop}[Theorem 1 of \citet{rad:2018:detailedALOO}; Theorem 4.2 of \citet{wang:2018:primalDualALOO}] \label{prop-restrictedApproximation-modified} Take \cref{assum:subgradient,assum:continuouslyDifferentiable}. Suppose $H_{\hat S \hat S}$ has strictly positive eigenvalues. Let $H^{\bn}_{\hat S \hat S} := H_{\hat S \hat S} - [\nabla^2_\theta f(x_n^T\hat\theta, y_n)]_{\hat S \hat S}$, and suppose that, for all $n$, $H^{\bn}_{\hat S \hat S}$ is invertible.
Then, for $R^\eta$ as in \cref{radSmoother} or \cref{wangSmoother},
\begin{equation}
\NS := \lim_{\eta\to\infty} \initNS[R^\eta] =
		\begin{pmatrix}
			\hat\theta_{\hat S} + (H^{\bn}_{\hat S \hat S})\inv \left[ \nabla_\theta f(x_n^T \hat\theta, y_n)\right]_{\hat S} \\ 			
		\end{pmatrix}. 
\end{equation}
\end{prop}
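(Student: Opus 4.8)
The plan is to express $\initNS[R^\eta]$ through the smoothed minimizer $\hat\theta^\eta$ (the solution of \cref{ourOptProblem} with $R$ replaced by $R^\eta$) and then pass to the limit $\eta\to\infty$, controlling separately the base point $\hat\theta^\eta\to\hat\theta$ and the singular limit of the regularizer Hessian $\nabla^2 R^\eta(\hat\theta^\eta)$. Writing $H^\eta := \nabla^2 F(\hat\theta^\eta) + \lambda\nabla^2 R^\eta(\hat\theta^\eta)$ and $M^\eta := H^\eta - \frac{1}{N}\nabla^2_\theta f(x_n^T\hat\theta^\eta,y_n)$,
\begin{equation*}
\initNS[R^\eta] = \hat\theta^\eta + \frac{1}{N} (M^\eta)\inv \nabla_\theta f(x_n^T\hat\theta^\eta, y_n),
\end{equation*}
so everything reduces to understanding the limit of $(M^\eta)\inv$ applied to a gradient that, by \cref{assum:continuouslyDifferentiable}, converges to $\nabla_\theta f(x_n^T\hat\theta,y_n)$. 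First I would establish the prerequisite convergence $\hat\theta^\eta\to\hat\theta$, a standard consequence of $R^\eta\to\n{\cdot}_1$ that is treated in the smoothing analyses of \citet{rad:2018:detailedALOO,wang:2018:primalDualALOO}.

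Next I would read off the two-speed behavior of $\nabla^2 R^\eta(\hat\theta^\eta)$, which is diagonal because both smoothers in \cref{radSmoother,wangSmoother} are separable. For $d\in\hat S$, $\hat\theta^\eta_d\to\hat\theta_d\neq 0$, so the corresponding diagonal entry (e.g.\ $2\eta\sigma'(\eta\hat\theta^\eta_d)$ for \cref{radSmoother}, with $\sigma$ the sigmoid) decays to $0$ exponentially in $\eta$. For $d\in\hat S^c$, the stationarity condition $2\sigma(\eta\hat\theta^\eta_d)-1 = -\nabla F(\hat\theta^\eta)_d/\lambda$ together with \cref{assum:subgradient} forces the right-hand side to a limit strictly inside $(-1,1)$, which pins $\eta\hat\theta^\eta_d$ to a finite limit $\ell_d$; hence $\hat\theta^\eta_d\to 0$ while the diagonal entry grows like $2\eta\sigma'(\ell_d)\to\infty$. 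Strictness of $\n{\hat z_{\hat S^c}}_\infty<1$ is exactly what gives a uniform positive lower bound $\sigma'(\ell_d)\geq (1-\n{\hat z_{\hat S^c}}_\infty^2)/4$ across the large but finite set $\hat S^c$, so these entries diverge uniformly.

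The crux, and where I expect the real work, is the resulting singular block-inverse limit. Partitioning $M^\eta$ along $\hat S,\hat S^c$: the off-diagonal blocks come only from the loss Hessian and stay bounded in operator norm (so the argument is insensitive to the size of $\hat S^c$); the $\hat S\hat S$ block converges to $\frac{1}{N} H^{\bn}_{\hat S\hat S}$ as the regularizer contribution vanishes; and the $\hat S^c\hat S^c$ block $D^\eta$ has $\lambda_{min}(D^\eta)\to\infty$, the diverging diagonal dominating the bounded loss part. Because $M^\eta$ becomes arbitrarily ill-conditioned, I cannot pass the inverse through the limit directly; instead I would use the Schur-complement form of the block inverse. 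Since $(D^\eta)\inv\to 0$, the Schur complement of the $\hat S\hat S$ block converges to $\frac{1}{N} H^{\bn}_{\hat S\hat S}$ and every block of $(M^\eta)\inv$ carrying a factor $(D^\eta)\inv$ vanishes, yielding
\begin{equation*}
(M^\eta)\inv \nabla_\theta f(x_n^T\hat\theta^\eta, y_n) \longrightarrow \begin{pmatrix} N (H^{\bn}_{\hat S\hat S})\inv [\nabla_\theta f(x_n^T\hat\theta, y_n)]_{\hat S} \\ 0 \end{pmatrix}.
\end{equation*}

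Here the scaling bookkeeping needs care: $H^{\bn}_{\hat S\hat S}$ is defined without the $1/N$ present in $H^\eta$, so the factor of $N$ above cancels the leading $1/N$ in $\initNS[R^\eta]$. Combining with $\hat\theta^\eta\to\hat\theta=(\hat\theta_{\hat S},0)$ gives
\begin{equation*}
\lim_{\eta\to\infty}\initNS[R^\eta] = \begin{pmatrix} \hat\theta_{\hat S} + (H^{\bn}_{\hat S\hat S})\inv [\nabla_\theta f(x_n^T\hat\theta, y_n)]_{\hat S} \\ 0 \end{pmatrix} = \NS,
\end{equation*}
where the hypotheses that $H_{\hat S\hat S}$ is positive definite and each $H^{\bn}_{\hat S\hat S}$ is invertible make the limit well defined. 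The identical argument with the $-\frac{1}{N}\nabla^2_\theta f(x_n^T\hat\theta^\eta,y_n)$ term dropped (i.e.\ replacing $H^{\bn}_{\hat S\hat S}$ by $H_{\hat S\hat S}$) yields $\IJ$ from $\initIJ[R^\eta]$.
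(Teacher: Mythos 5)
Your argument is correct and follows essentially the same route as the paper: this proposition is attributed to \citet{rad:2018:detailedALOO,wang:2018:primalDualALOO}, and the paper's own adaptation for the $\IJ$ case likewise partitions the smoothed Hessian into blocks along $\hat S$ versus $\hat S^c$ and takes the Schur-complement block-inverse limit, with the off-support diagonal of $\lambda\nabla^2 R^\eta(\hat\theta^\eta)$ diverging and the on-support part vanishing. The only difference is that you re-derive from the stationarity condition and \cref{assum:subgradient} the facts about the finite limit of $\eta\hat\theta^\eta_d$ on $\hat S^c$ and the two-speed behavior of $\nabla^2 R^\eta(\hat\theta^\eta)$, which the paper simply imports from the cited works, and your scaling bookkeeping (the leading $1/N$ in $\initNS$ cancelling the $N$ from inverting a block converging to $\frac{1}{N}H^{\bn}_{\hat S \hat S}$) is right.
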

As noted in the main text, we show that a very similar result holds for the limit of $\initIJ[R^\eta]$:
\begin{prop} \label{prop-restrictedApproximation}
Take \cref{assum:subgradient,assum:continuouslyDifferentiable}. Suppose $H_{\hat S \hat S}$ is invertible. Then for $R^\eta$ as in \cref{radSmoother} or \cref{wangSmoother}:
\begin{equation}
\IJ := \lim_{\eta\to\infty} \initIJ[R^\eta] =		
         \begin{pmatrix}
			\hat\theta_{\hat S} + H_{\hat S \hat S}\inv \left[ \nabla_\theta f(x_n^T \hat\theta, y_n)\right]_{\hat S} \\
			0
		\end{pmatrix}.
\end{equation}
\end{prop}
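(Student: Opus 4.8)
The plan is to mirror the limiting argument behind \cref{prop-restrictedApproximation-modified}, exploiting the fact that $\initIJ[R^\eta]$ differs from $\initNS[R^\eta]$ only in inverting the full smoothed Hessian $H^\eta(\hat\theta^\eta) := \nabla^2_\theta F(\hat\theta^\eta) + \lambda\nabla^2_\theta R^\eta(\hat\theta^\eta)$ rather than its leave-one-out correction $H^\eta(\hat\theta^\eta) - (1/N)\nabla^2_\theta f(x_n^T\hat\theta^\eta, y_n)$. Writing $\hat\theta^\eta$ for the minimizer of the smoothed objective and $g^\eta := \nabla_\theta f(x_n^T\hat\theta^\eta, y_n)$, we have $\initIJ[R^\eta] = \hat\theta^\eta + (1/N)\, H^\eta(\hat\theta^\eta)\inv g^\eta$, and the goal is to compute its $\eta\to\infty$ limit.

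First I would assemble the three ingredients that the smoothing proof supplies, all of which hold verbatim here since they concern only $\hat\theta^\eta$, $R^\eta$, and $F$, not the leave-one-out structure. (i) The smoothed minimizers converge, $\hat\theta^\eta\to\hat\theta$; combined with \cref{assum:continuouslyDifferentiable} and continuity of $\nabla_\theta f$, this gives $g^\eta\to\nabla_\theta f(x_n^T\hat\theta, y_n) =: g$. (ii) Because $R^\eta$ is separable, $\nabla^2_\theta R^\eta$ is diagonal; on the support coordinates $d\in\hat S$ (where $\hat\theta_d\neq 0$) its entries vanish as $\eta\to\infty$, whereas on $d\in\hat S^c$ (where $\hat\theta_d = 0$) they diverge. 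The content of \cref{assum:subgradient} — strict dual feasibility, $\n{\hat z_{\hat S^c}}_\infty < 1$ — is precisely what forces $\eta\hat\theta^\eta_d$ to remain bounded on $\hat S^c$ and hence guarantees divergence rather than collapse of those entries. (iii) Consequently, after permuting coordinates so that $\hat S$ comes first, $H^\eta(\hat\theta^\eta)$ has block form with a top-left block converging to $(\nabla^2_\theta F(\hat\theta))_{\hat S\hat S}$, off-diagonal blocks that stay bounded (they arise only from $\nabla^2_\theta F$), and a bottom-right block whose smallest eigenvalue diverges.

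With these in hand, the remaining step is clean linear algebra. Write $H^\eta(\hat\theta^\eta)$ in the permuted $2\times 2$ block form with top-left $A_\eta$, off-diagonal $B_\eta := (\nabla^2_\theta F(\hat\theta^\eta))_{\hat S\hat S^c}$, and bottom-right $D_\eta$; by (iii), $A_\eta\to(\nabla^2_\theta F(\hat\theta))_{\hat S\hat S}$, $B_\eta$ stays bounded, and $D_\eta\inv\to 0$. The block-inverse (Schur complement) formula then yields a Schur complement $A_\eta - B_\eta D_\eta\inv B_\eta^T \to (\nabla^2_\theta F(\hat\theta))_{\hat S\hat S}$, so $H^\eta(\hat\theta^\eta)\inv$ converges to the matrix with top-left block $((\nabla^2_\theta F(\hat\theta))_{\hat S\hat S})\inv$ and all other blocks zero; the assumed invertibility of $H_{\hat S\hat S}$ guarantees this limit is well defined. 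Multiplying by $(1/N)g^\eta$ and passing to the limit retains only the $\hat S$ entries of $g$ and annihilates those on $\hat S^c$. Finally I would reconcile scalings: since the restricted Hessian $H_{\hat S\hat S} = X_{\cdot,\hat S}^T\mathrm{diag}\{\dntwo\}X_{\cdot,\hat S}$ carries no $1/N$ factor, we have $(\nabla^2_\theta F(\hat\theta))_{\hat S\hat S} = (1/N)H_{\hat S\hat S}$, the stray $1/N$ cancels, and the limit of $(1/N)H^\eta(\hat\theta^\eta)\inv g^\eta$ is $(H_{\hat S\hat S}\inv g_{\hat S},\, 0)$. Adding $\hat\theta$, which already vanishes off $\hat S$, produces exactly the claimed expression for $\IJ$.

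The main obstacle is ingredient (ii): showing that the off-support diagonal entries of $\lambda\nabla^2_\theta R^\eta(\hat\theta^\eta)$ genuinely diverge, which is where strict dual feasibility is indispensable and the only place the two smoothers must be treated on their own terms. I expect to discharge it by citing the corresponding step in the proof of \cref{prop-restrictedApproximation-modified}, since that argument is agnostic to whether one inverts $H^\eta$ or its leave-one-out variant; once the divergence is granted, the block-inverse limit and the $1/N$ bookkeeping are routine.
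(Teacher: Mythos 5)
Your proposal is correct and follows essentially the same route as the paper: both adapt the block-decomposition limit argument from the proof of \cref{prop-restrictedApproximation-modified}, citing \citet{rad:2018:detailedALOO} (and \citet{wang:2018:primalDualALOO} for the second smoother) for the convergence $\hat\theta^\eta\to\hat\theta$ and for the divergence of the off-support blocks under strict dual feasibility, then pass to the limit of the block inverse and multiply by the gradient. Your explicit tracking of the $1/N$ cancellation between $\nabla^2 F$ and $H_{\hat S\hat S}$ is a welcome bit of care that the paper's proof glosses over.
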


The proof of \cref{prop-restrictedApproximation} is a straightforward adaptation of the proof of \cref{prop-restrictedApproximation-modified}. We prove it separately for the two different forms of $R^\eta$ in the next two subsections.

\subsection{Proof of \cref{prop-restrictedApproximation} using \cref{radSmoother}}

This proof is almost identical to the proof of Theorem 1 from \citet{rad:2018:detailedALOO}. First we will need some notation. Let $\hat\theta^\eta$ be the solution to \cref{ourOptProblem} using $R^\eta$ from \cref{radSmoother} as the regularizer. Let $\hat S_\eta := \setc{i}{\abs{\hat\theta^\eta} > c / \eta}$ for some constant $c$. We know from the arguments in Appendix A.2 of \citet{rad:2018:detailedALOO} that for an appropriately chosen $c$ and $\eta > C$ for some large constant $C > 0$, we have $S^\eta = \hat S =: \supp\hat\theta$. Next, define the scalars $\dnoneeta$ and $\dntwoeta$ as the derivatives of $f$ evaluated at $\hat\theta^\eta$:
\begin{equation} \dnoneeta := \frac{d f(z,y_n)}{dz}\at{z = x_n^T \hat\theta^\eta} \; , \quad \dntwoeta := \frac{d^2 f(z,y_n)}{dz^2}\at{z = x_n^T \hat\theta^\eta}. \end{equation}
Finally, divide the Hessian of the smoothed problem up into blocks by defining:
\begin{align*}
  &A := X_{\cdot, \hat S_\eta}^T \mathrm{diag}\set{\dntwoeta} X_{\cdot, \hat S_\eta} + \lambda \nabla^2 R^\eta (\hat\theta^\eta), \quad\quad B:= X_{\cdot, \hat S_\eta^c}^T \mathrm{diag}\set{\dntwoeta} X_{\cdot, \hat S_\eta} + \lambda \nabla^2 R^\eta (\hat\theta^\eta) \\
  & C := X_{\cdot, \hat S_\eta^c}^T \mathrm{diag}\set{\dntwoeta} X_{\cdot, \hat S_\eta^c} + \lambda \nabla^2 R^\eta (\hat\theta^\eta), \quad\quad D := (A - BC\inv B^T)\inv
\end{align*}
We can then compute the block inverse of the Hessian of the smoothed problem, $H_\eta\inv$ as:
\begin{equation} H_\eta\inv = \begin{pmatrix} A & B \\ B^T & C \end{pmatrix}\inv = \begin{pmatrix} D & -DBC\inv \\ -C\inv B^T D & A\inv + A\inv BDB^T A\inv \end{pmatrix}.\end{equation}
\citet{rad:2018:detailedALOO} show that all blocks of $H_\eta\inv$ converge to zero as $\eta \to \infty$ except for the upper left, which has $D \to X_{\cdot, \hat S}^T \mathrm{diag}\set{\dntwo}X_{\cdot,\hat S}$. So, we have that the limit of $\initIJ[R^\eta]$ is:
\begin{equation} \lim_{\eta\to\infty} \initIJ[R^\eta] = \lim_{\eta\to\infty} H_\eta\inv \dnoneeta x_n = \dnone \begin{pmatrix} (X_{\cdot, \hat S}^T \mathrm{diag}\set{\dntwo} X_{\cdot,\hat S})\inv & 0 \\ 0 & 0 \end{pmatrix} \begin{pmatrix} x_{n \hat S} \\ x_{n \hat S^c} \end{pmatrix}, \label{limThetaEta} \end{equation}
where we used that $\hat\theta_\eta \to \hat\theta$ by Lemma 15 of \citet{rad:2018:detailedALOO}, which gives that $\dnoneeta \to \dnone$ by \cref{assum:continuouslyDifferentiable}. The resulting approximation is exactly that given in the statement of \cref{prop-restrictedApproximation} by noting that $\dnone x_{n\hat S} = [\nabla_\theta f(x_n^T \hat\theta, y_n)]_{\hat S}$.

\subsection{\cref{prop-restrictedApproximation} using \cref{wangSmoother}}

This proof proceeds along the exact same direction as when using \cref{radSmoother}. In their proof of their Theorem 4.2, \citet{wang:2018:primalDualALOO} provide essentially all the same ingredients that \citet{rad:2018:detailedALOO} do, except for the general class of smoothed approximations given by \cref{wangSmoother}. This allows the same argument of taking the limit of each block of the Hessian individually and finishing by taking the limit as in \cref{limThetaEta}.

\section{The importance of correct support recovery} \label{app-supportRecoveryExperiments}
\cref{mainResult} shows that each $\thetan$ having correct support (i.e., $\supp\thetan = \supp\theta^*$) is a sufficient condition for obtaining the fixed-dimensional error scaling shown in blue in \cref{fig-errorScalingExample}. Here, we give some brief empirical evidence that this condition is necessary in the case of linear regression when using $\IJ$ as an approximation. For values of $N$ ranging from 1,000 to 8,000, we set $D = N/10$ and generate a design matrix with i.i.d.\ $N(0,1)$ entries. The true $\theta^*$ is supported on its first five entries, with the rest set to zero. We then generate observations $y_n = x_n^T \theta^* + \eps_n$, for $\eps_n \overset{i.i.d.}{\sim} N(0,1)$.

To examine what happens when the recovered supports are and are not correct, we use slightly different values of the regularization parameter $\lambda$. Specifically, the results of \citet{wainwright:2009:linearRegressionLasso} (especially their Theorem 1) tell us that the support recovery of $\ell_1$ regularized linear regression will change sharply around $ \lambda \approx 4 \sqrt{\log(D) / N},$ where lower values of $\lambda$ will fail to correctly recover the support. With this in mind, we choose two settings of $\lambda$: $1.0 \sqrt{\log (D) / N}$ and $10.0 \sqrt{\log(D) / N}$. As expected, the righthand side of \cref{supportScalingExperiment} shows that the accuracy of $\IJ$ is drastically different in these two situations. The lefthand plot of \cref{supportScalingExperiment} offers an explanation for this observation: the support of $\supp\thetan$ grows with $N$ under the lower value of $\lambda$, whereas the larger value of $\lambda$ ensures that $\abs{\supp\thetan} = \abs{\supp\theta^*} = \mathrm{const}$. Empirically, these results suggest that, for high-dimensional problems, approximate CV methods are accurate estimates of exact CV only when taking advantage of some kind of low ``effective dimensional'' structure.

\begin{figure*}[!ht]
  \centering
  \begin{tabular}{cc}
    \includegraphics[scale=.4, trim={1.0cm, 0, 0, 0}]{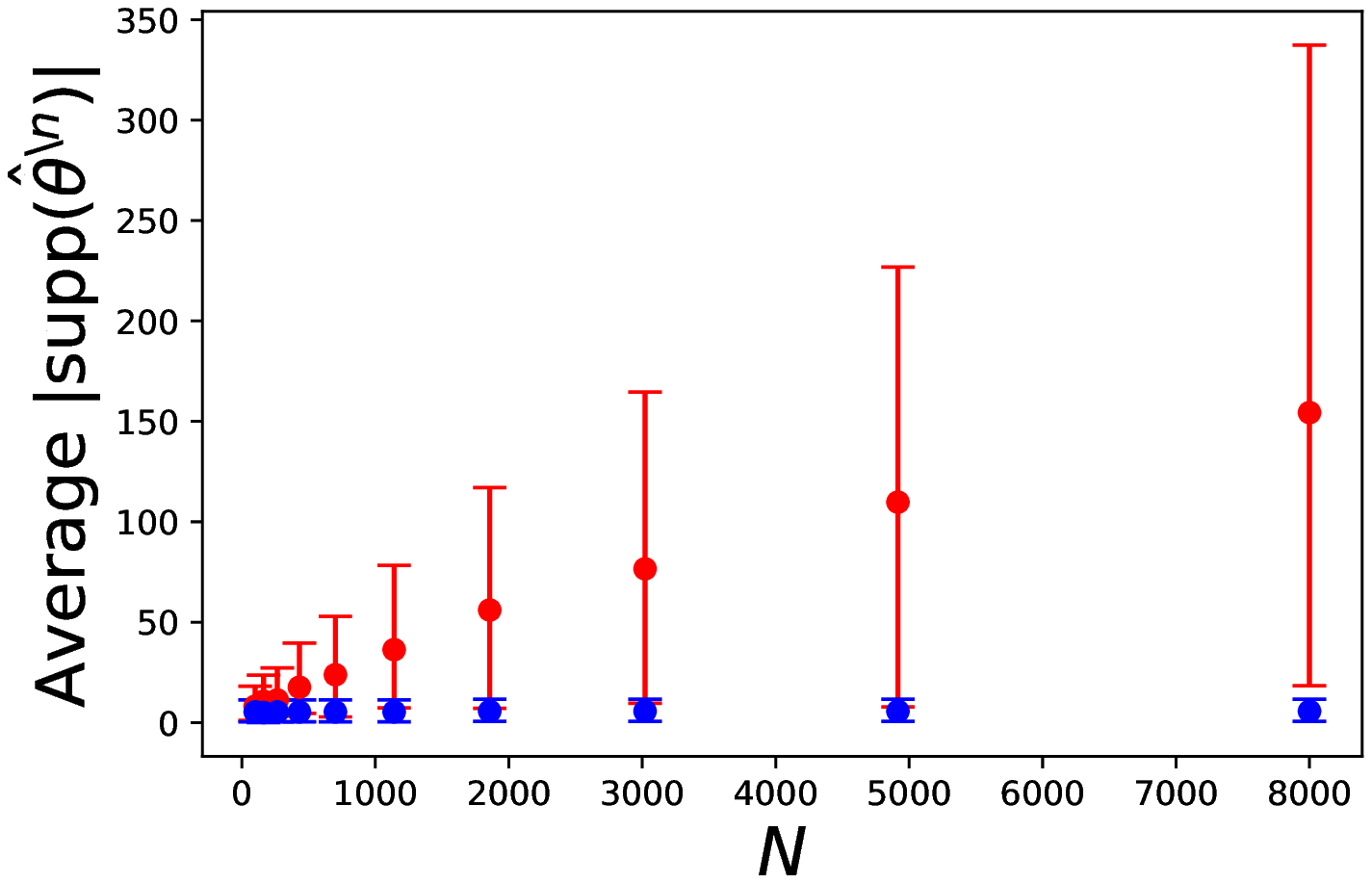} &
    \includegraphics[scale=.4, trim={1.0cm, 0, 0, 0}]{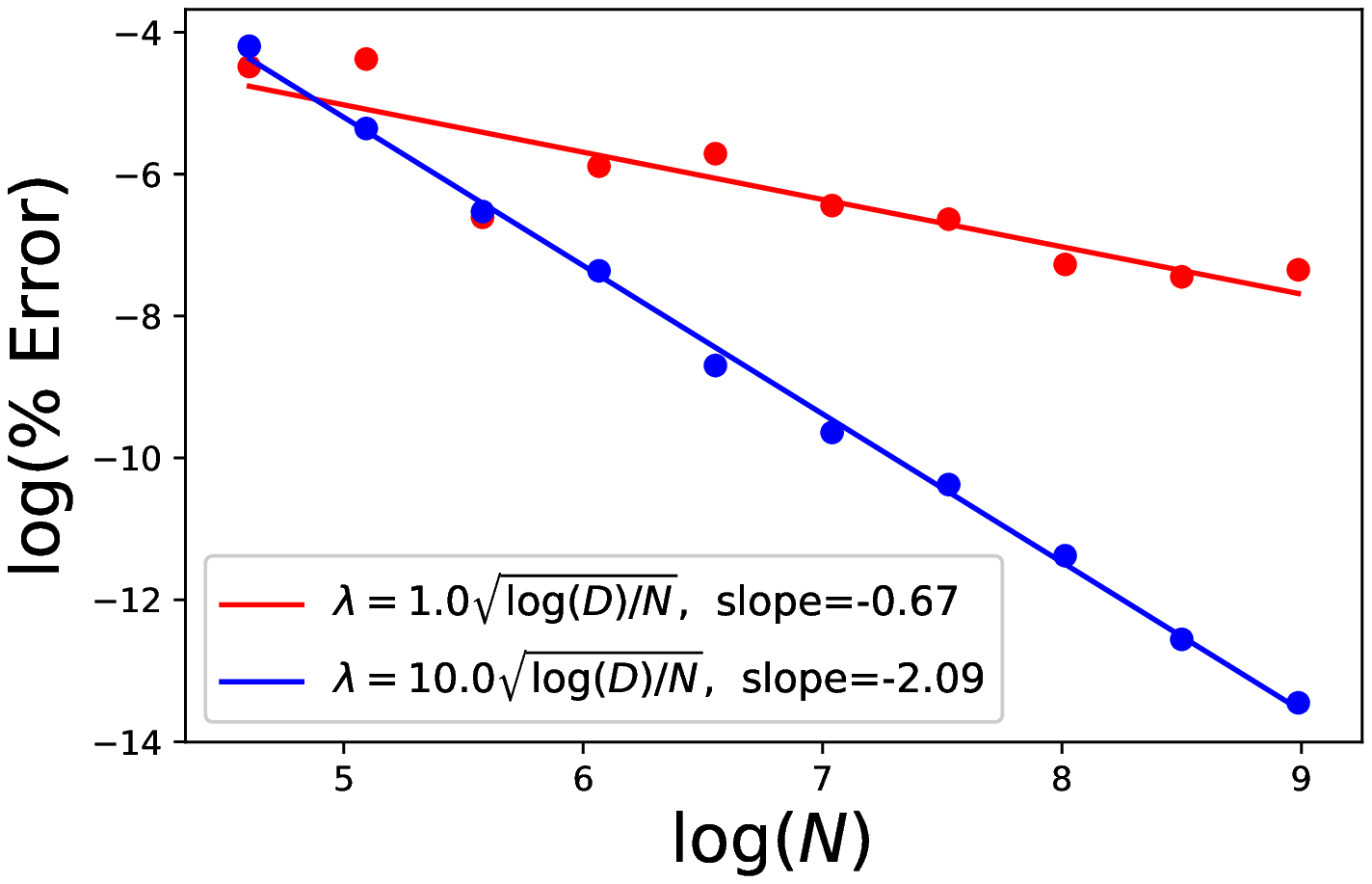}
  \end{tabular}
  \caption{Illustration of the role of support recovery in the accuracy of $\IJ$ in the case of linear regression. \emph{Left}: Points show the average of $\abs{\supp\thetan}$ over random values of $n$. Error bars show the min and max $\abs{\supp\thetan}$ over these $n$. For $\lambda = 10.0 \sqrt{\log (D) / N}$ (blue), the mean recovered support is constant with $N$. For $\lambda = 1.0 \sqrt{\log (D) / N}$ (red), $\abs{\supp\thetan}$ grows with $N$, and varies dramatically for different values of $n$. \emph{Right}: Percent error (\cref{percentError}) as $D$ scales with $N$. When the support recovery is constant, we recover an error scaling of roughly $1/N^2$, whereas a growing support results in a much slower decay.}
\label{supportScalingExperiment}
\end{figure*}

That the approximation quality relies so heavily on the exact setting of $\lambda$ is somewhat concerning. However, we emphasize that sensitivity exists for $\ell_1$ regularization in general; as previously noted, \citet{wainwright:2009:linearRegressionLasso} demonstrated similarly drastic behavior of $\supp\hat\theta$ in the same exact linear regression setup that we use here. On the other hand, \citet{homrighausen:2014:l1LOO} do show that using exact LOOCV to select $\lambda$ for $\ell_1$ regularized linear regression gives reasonable results. In \cref{app-lambdaSelection}, we empirically show this is sometimes, but not always, the case for our and other approximate CV methods.

\paragraph{Accuracy of approximate CV by optimization error.} In early experiments, we used the Python bindings for the glmnet package \citep{friedman:2009:glmnetPaper} to solve our $\ell_1$ regularized problems. However, we found that both $\IJ$ and $\NS$ failed to recover the roughly $1/N^2$ scaling present in fixed-dimensional problems (e.g. as shown in \cref{fig-errorScalingExample} of \cref{introduction}) that we would expect given our theoretical results. We found that this was due to the relatively loose convergence tolerance with which glmnet is implemented (e.g. parameter changes of $\leq 1\times 10^{-4}$ between iterations), which seems to be an issue for approximate CV methods and related approximations \citep{giordano:2018:ourALOO,giordano:2015:LRVB}. We implemented our own $\ell_1$ solver in Python using many of the speed-ups proposed in \citet{friedman:2009:glmnetPaper} and set a convergence theshold of $1\times 10^{-10}$ for the initial fit of $\hat\theta$. This solver was used to produce all of our results, including \cref{supportScalingExperiment}, which shows the expected roughly $1/N^2$ accuracy of $\IJ$ in blue.

\section{Details of real experiments} \label{app-realExperiments}

We use three publicly available datasets for our real-data experiments in \cref{sec-experiments}:
\begin{enumerate}
\item The ``Gisette'' dataset \cite{gisette} is available from the UCI repository at \url{https://archive.ics.uci.edu/ml/datasets/Gisette}. The dataset is constructed from the MNIST handwritten digits dataset. Specifically, the task is to differentiate between handwritten images of either ``4'' or ``9.'' There are $N=6,000$ training examples, each of which has $D=5,000$ features, some of which are junk ``distractor features'' added to make the problem more difficult.
\item The ``bcTCGA'' \cite{bcTCGA} is a dataset of breast cancer samples from The Cancer Genome Atlas, which we downloaded from \url{http://myweb.uiowa.edu/pbreheny/data/bcTCGA.html}. The dataset consists of $N=536$ samples of tumors, each of which has the real-valued expression levels of $D = 17,322$ genes. The task is to predict the real-valued expression level of the BRCA1 gene, which is known to correlate with breast cancer.
\item The ``RCV1'' dataset \cite{rcv1} is a dataset of Reuters' news articles given one of four categorical labels according to their subject: ``Corporate/Industrial,'' ``Economics,'' ``Government/Social,'' and ``Markets.'' We use a pre-processed binarized version from \url{https://www.csie.ntu.edu.tw/~cjlin/libsvmtools/datasets/binary.html}, which combines the first two categories into a ``positive'' label and the latter two into a ``negative'' label. The full dataset contains $N=20,242$ articles, each of which has $D=47,236$ features. Running exact CV on this dataset would have been prohibitively slow, so we created a smaller dataset. First, the covariate matrix $X$ is extremely sparse (i.e., most entries are zero), so we selected the top 10,000 most common features and threw away the rest. We then randomly chose 5,000 documents to keep as our training set. After throwing away any of the 10,000 features that were now not observed in this subset, we were left with a dataset of size $N = 5,000$ and $D = 9,836$.
\end{enumerate}

In order to run $\ell_1$ regularized regression on each of these datasets, we first needed to select a value of $\lambda$. Since all of these datasets are fairly high dimensional, our experiments in \cref{app-lambdaSelection} suggests our approximation will be inaccurate for values of $\lambda$ that are ``too small.'' In an attempt to get the order of magnitude for $\lambda$ correct, we used the theoretically motivated value of $\lambda = C \sqrt{\log(D) / N}$ for some constant $C$ (e.g., \citet{li:2015:sparsistency} shows this scaling of $\lambda$ will recover the correct support for both linear and logistic regression). \cref{sec-experiments} suggests that the constant $C$ can be very important for the accuracy of our approximation, and our experiments there suggest that inaccuracy is caused by too large a recovered support size $\abs{\supp\hat\theta}$. For the RCV1 and Gisette datasets, both run with logistic regression, we guessed a value of $C=1.5$, as this sits roughly in the range of values that give support recovery for logistic regression on synthetic datasets. After confirming that $\abs{\supp\hat\theta}$ was not too large (i.e., of size ten or twenty), we proceeded with these experiments. Although we found linear regression on synthetic data typically needed a larger value of $C$ than logistic regression on synthetic data, we found that $C = 1.5$ also produced reasonable results for the bcTCGA dataset.


\section{Selection of $\lambda$} \label{app-lambdaSelection}

Our work in this paper is almost exclusively focused on approximating CV for model assessment. However, this is not the only use-case of CV. CV is also commonly used for model selection, which, as a special case, contains hyperparameter tuning. Previous authors have used approximate CV methods for hyperparameter tuning in the way one might expect: for various values of $\lambda$, compute $\hat\theta$ and then use approximate CV to compute the out-of-sample error of each $\hat\theta$; the $\lambda$ leading to the lowest out-of-sample error is then selected \citep{obuchi:2016:linearALOO,obuchi:2018:logisticALOO,beirami:2017:firstALOO,rad:2018:detailedALOO,wang:2018:primalDualALOO,giordano:2018:ourALOO}. While many of these authors theoretically study the accuracy of approximate CV, we note that they only do so in the context of model \emph{assessment} and only empirically study approximate CV for hyperparameter tuning. In this appendix, we add to these experiments by showing that approximate CV can exhibit previously undemonstrated complex behavior when used for hyperparameter tuning.

We generate two synthetic $\ell_1$ regularized logistic regression problems with $N = 300$ observations and $D = \{75, 150\}$ dimensions. The matrix of covariates $X$ has i.i.d.\ $N(0,1)$ entries, and the true $\theta^*$ has its first five entries drawn i.i.d.\ as $N(0,1)$ with the rest set to zero. As a measure of the true out of sample error, we construct a test set with ten thousand observations. For a range of values of $\lambda$, we find $\hat\theta$, and measure the train, test, exact LOOCV, and approximate LOOCV errors via both $\NS$ and $\IJ$; the results are plotted in \cref{lambdaSelectionResults}. $\NS$ (blue dashed curve) is an extremely close approximation to exact CV (red curve) in both datasets and selects a $\lambda$ that gives a test error very close to the $\lambda$ selected by exact CV. On the other hand, $\IJ$ (solid blue curve) performs very differently on the two datasets. For $D = 75$, it selects a somewhat reasonable value for $\lambda$; however, for $D=150$, $\IJ$ goes disastrously wrong by selecting the obviously incorrect value of $\lambda = 0$. While the results in \cref{lambdaSelectionResults} come from using our $\IJ$ to approximate CV for an $\ell_1$ regularized problem, we note that this issue is not specific to the current work; we observed similar behavior when using $\ell_2$ regularization and the pre-existing $\initIJ[\ell_2]$.

While $\NS$ performs far better than $\IJ$ in the experiments here, it too has a limitation when $D > N$. In particular, when $\lambda$ is small enough, we will eventually recover $\abs{\hat S} = N$. At this point, the matrix we need to invert in the definition of $\NS$ in \cref{restrictedApproximations} will be a $N \times N$ matrix that is the sum of $N-1$ rank-one matrices. As such, it will not be invertible, meaning that we cannot compute $\NS$ for small $\lambda$ when $D > N$. Even when $D$ is less than -- but still close to -- $N$, we have observed numerical issues in computing $\NS$ when $\lambda$ is sufficiently small; typically, these issues show up as enormously large values for ALOO for small values of $\lambda$.

Given the above discussion, we believe that an understanding of the behavior of $\IJ$ and $\NS$ for the purposes of hyperparameter tuning is a very important direction for future work.


\begin{figure*}[h]
  \centering
  \includegraphics[scale=.5]{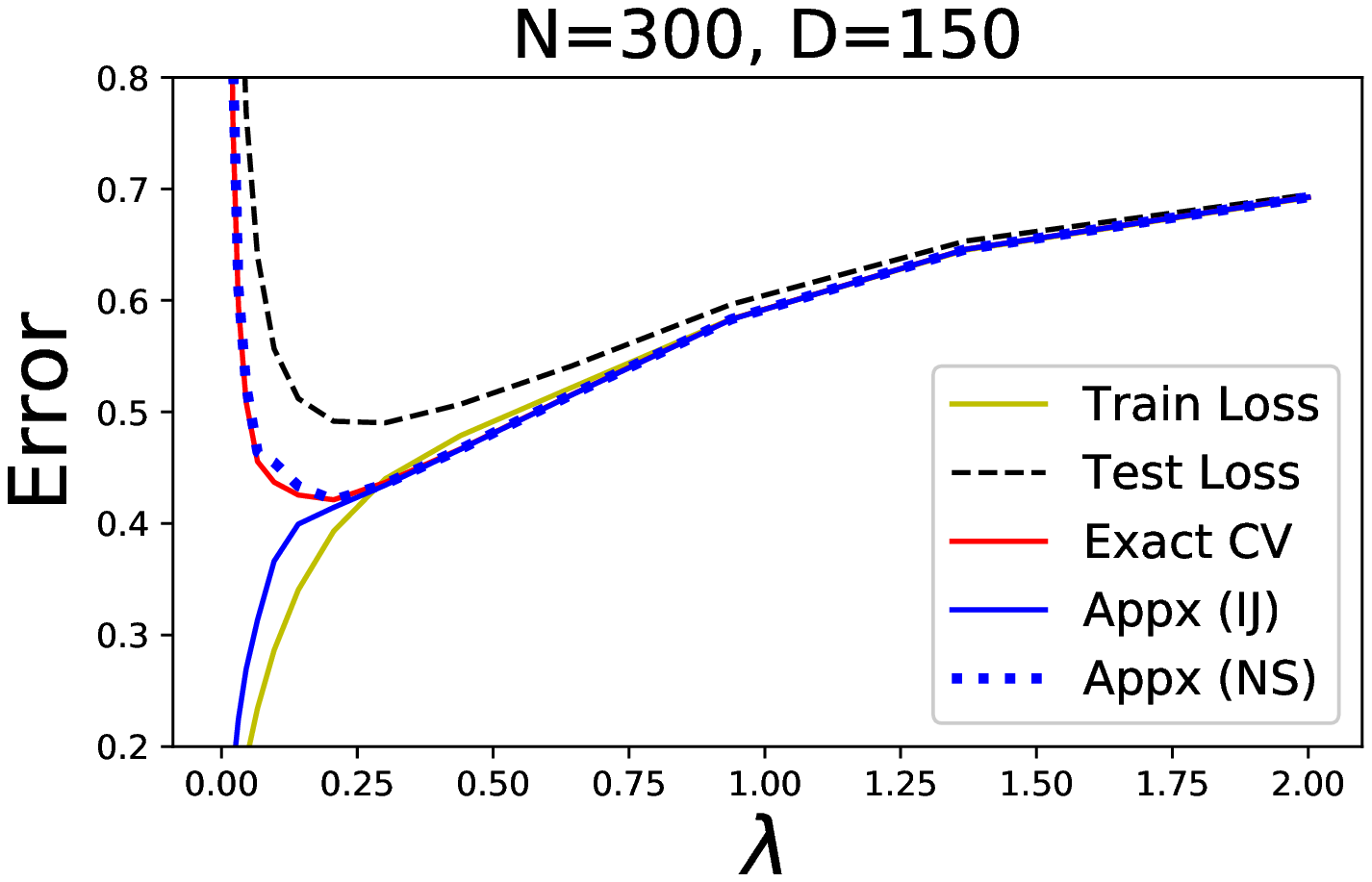}
  \includegraphics[scale=.5]{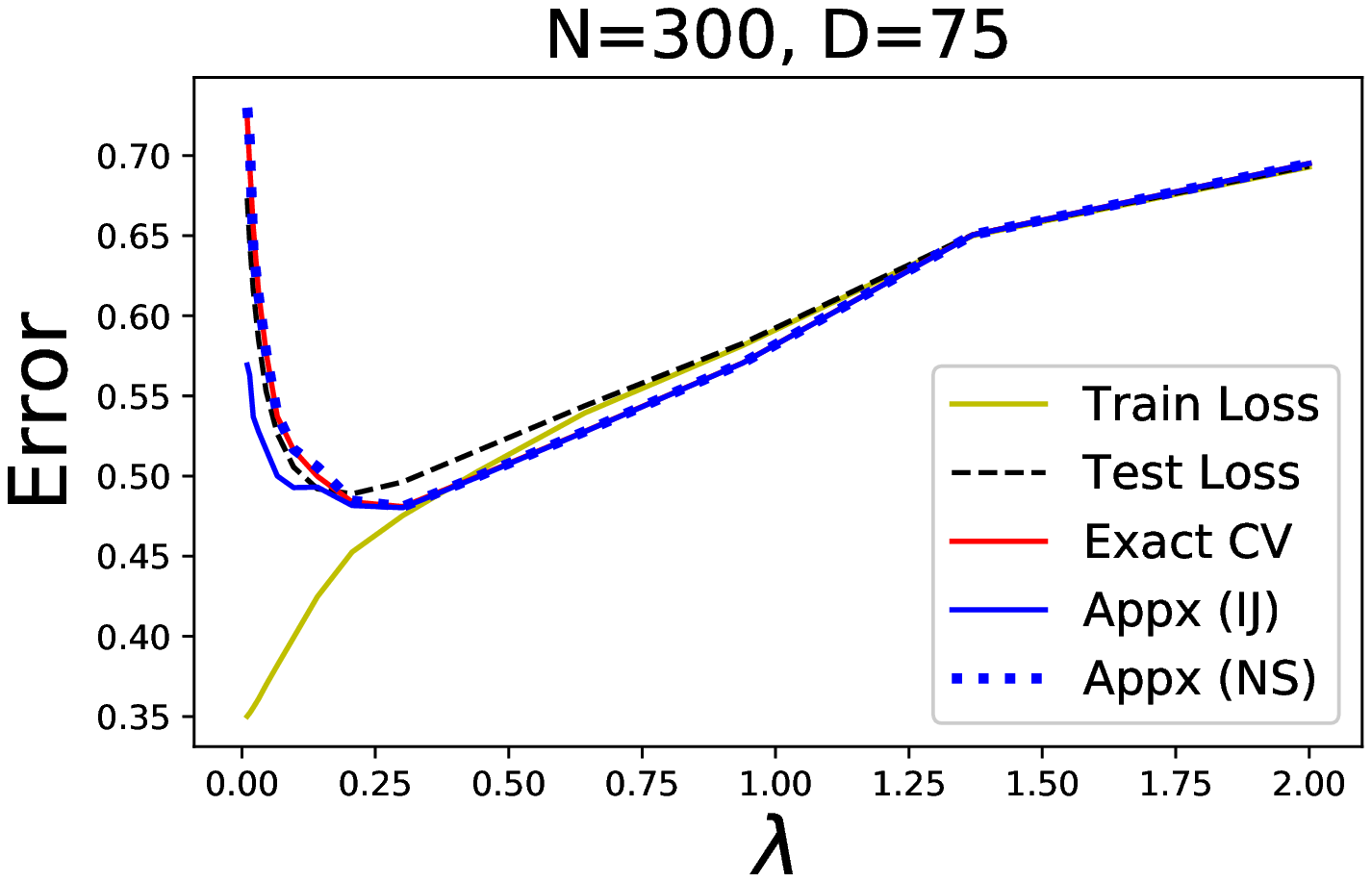}
  \caption{Experiment for selecting $\lambda$ from \cref{app-lambdaSelection}. (\emph{Top:}) Despite being very accurate for higher values of $\lambda$, the degredation of the accuracy of $\IJ$ for lower values of $\lambda$ (which corresponds to a larger $\hat S$) causes the selection of a $\lambda$ that is far from optimal in terms of test loss. (\emph{Bottom:}) For a lower dimensional problem, the curve constructed by $\IJ$ much more closely mirrors that of exact CV for all values of $\lambda$. In both cases, $\NS$ performs well.} \label{lambdaSelectionResults}
\end{figure*}

After the initial posting of this work, \citet{wilson:2020:modelSelectionALOO} provided a more thorough investigation of model selection using $\NS$ and $\IJ$. Their work gives an analytical example (as opposed to our empirical example here) showing that $\IJ$ can fail for model selection in sparse models. They further propose a modification to $\IJ$ based on proximal operators that avoids this issue in both theory and practice.

\section{Proofs from \cref{sec-theory}} \label{app-proofs}

As mentioned in the main text, there exist somewhat general assumptions in the $\ell_1$ literature under which $\supp\hat\theta = S$ \citep{lee:2014:generalSparseRecovery,li:2015:sparsistency}. By taking these assumptions for all leave-one-out problems, we immediately get that $\supp\thetan = S$ for all $n$. Our method for proving \cref{linearRegressionTheorem,logisticRegressionTheorem} will be to show that the assumptions of those theorems imply those from the $\ell_1$ literature for all leave-one-out problems.

\subsection{Assumptions from \citet{li:2015:sparsistency}} \label{app-liAssumptions}

We choose to use the conditions from \citet{li:2015:sparsistency}, as we find them easier to work with for our problem. \citet{li:2015:sparsistency} gives conditions on $F = (1/N) \sum_n f(x_n^T\theta, y_n)$ under which $\supp\hat\theta = S$. We are interested in $\supp\thetan$, so we state versions of these conditions for $\objn := (1/N) \sum_{m:\, m\neq n} f(x_m^T\theta, y_n)$.

\begin{assumption}[LSSC] \label{assum:LSSC}
  $\forall n$, $\objn$ satisfies the $(\theta^*, \R^D)$ locally structured smoothness condition (LSSC)\footnote{Readers familiar with the LSSC may see choosing the neighborhood of $\theta^*$ as $\R^D$ to be too restrictive. This choice is not necessary for our results; we state \cref{assum:LSSC} this way only for simplicity. See \cref{app-LSSC} for an explanation.} with constant $K$. We recall this condition, due to \citet{li:2015:sparsistency}, in \cref{app-LSSC}.
\end{assumption}
\begin{assumption}[Strong convexity] \label{assum:lambdaMin}
For a matrix $A$, let $\lambda_{min}(A)$ be the smallest eigenvalue of $A$. Then, $\forall n$ and for some constant $L_{min}$, the Hessian of $\objn$ is positive definite at $\theta^*$ when restricted to the dimensions in $S$:
$
	\lambda_{min}\left( \nabla^2 \objn(\theta^*)_{SS}\right) \geq L_{min} > 0.
$
\end{assumption}
\begin{assumption}[Incoherence] \label{assum:incoherence}
$\forall n$ and for some $\gamma > 0$,
  \begin{equation} \n{\nabla^2 \objn(\theta^*)_{S^c, S} \left( \nabla^2 \objn(\theta^*)_{SS}\right)\inv}_{\infty} < 1 - \gamma. \label{defIncoherence} \end{equation}
\end{assumption}
%
%
\begin{assumption}[Bounded gradient] \label{assum:boundedGradient}
For $\gamma$ from \cref{assum:incoherence}, $\forall n$, the gradient of $\objn$ evaluated at the true parameters $\theta^*$ is small relative to the amount of regularization:
$
	\n{\nabla \objn(\theta^*)}_\infty  \leq (\gamma / 4) \lambda.
$
\end{assumption}
\begin{assumption}[$\lambda$ sufficiently small] \label{assum:lambdaSmall}
  For $K,L_{min}$ and $\gamma$ as in \cref{assum:LSSC,assum:lambdaMin,assum:incoherence}, the regularization parameter is sufficiently small:
$
	\lambda <  L^2_{min} \gamma / ( 4(\gamma + 4)^2 \deff K ), 
$
 	where there is no constraint on $\lambda$ if $K = 0$.
\end{assumption}
We see in \cref{app-supportInclusion} that a minor adaptation of Theorem 5.1 from \citet{li:2015:sparsistency} tells us that \cref{assum:LSSC,assum:lambdaMin,assum:incoherence,assum:boundedGradient,assum:lambdaSmall} imply $\forall n, \supp\thetan \subseteq S$. To prove the accuracy of $\NS$ and $\IJ$, though, we further need that $\supp\thetan \subseteq \hat S$ so that all LOOCV problems run over the same low-dimensional space as the full-data problem. It will be easier to state conditions for a stronger result, that $\supp\thetan = \hat S = S$. This will follow from an assumption on the smallest entry of $\theta^*_S$, which we stated as \cref{assum:thetaMin} in the main text. We stated \cref{assum:thetaMin} using the quantity $T_{min}$ to avoid stating \cref{assum:lambdaMin,assum:incoherence} in the main text. We can now state its full version.
\begin{assumption}[full version of \cref{assum:thetaMin}]  \label{assum:thetaMin-app}
For $L_{min}$ and $\gamma$ from \cref{assum:lambdaMin,assum:incoherence}, 
$
	\min_{s \in S} \abs{\theta^*_s} > ( \sqrt{\deff}(\gamma + 4) / L_{min} ) \lambda.
$
\end{assumption}
\begin{prop} \label{prop-supportRecovery}
  If \cref{assum:LSSC,assum:lambdaMin,assum:incoherence,assum:boundedGradient,assum:lambdaSmall,assum:thetaMin-app} hold, then $\forall n, \supp\thetan = \hat S = S$.
\end{prop}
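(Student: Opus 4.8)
The plan is to upgrade the support-inclusion guarantee already available for each leave-one-out problem into \emph{exact} support recovery by ruling out false exclusions with the beta-min condition of \cref{assum:thetaMin-app}. Concretely, I would (i) invoke the inclusion $\supp\thetan \subseteq S$, (ii) extract an $\ell_2$ error bound on $\thetan[,S] - \theta^*_S$ valid on the true support, and (iii) compare this bound to $\min_{s\in S}\abs{\theta^*_s}$ to conclude that no coordinate of $\theta^*_S$ is sent to zero, giving $S \subseteq \supp\thetan$ and hence equality.

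First I would apply the primal-dual witness argument of \cref{app-supportInclusion} --- the minor adaptation of Theorem 5.1 of \citet{li:2015:sparsistency} --- which, under \cref{assum:LSSC,assum:lambdaMin,assum:incoherence,assum:boundedGradient,assum:lambdaSmall}, yields $\supp\thetan \subseteq S$ for every $n$. That construction solves the problem restricted to $S$ and then certifies that the dual variable on $S^c$ is strictly feasible; strict feasibility is exactly where \cref{assum:incoherence} (incoherence) and \cref{assum:boundedGradient} (gradient small relative to $\lambda$) enter. Because the solution is thus supported on $S$, $\thetan$ coincides with the solution of the $\deff$-dimensional restricted problem, and I can work entirely with $\objn$ restricted to $S$ going forward.

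The crux is the second ingredient: an error bound of the form $\n{\thetan[,S] - \theta^*_S}_2 \leq (\sqrt{\deff}(\gamma+4)/L_{min})\,\lambda$, which is the companion conclusion of the same theorem. Here the strong-convexity lower bound $L_{min}$ from \cref{assum:lambdaMin} controls the curvature of $\objn$ restricted to $S$, the LSSC constant $K$ from \cref{assum:LSSC} controls how quickly that curvature can degrade away from $\theta^*$, and \cref{assum:boundedGradient} bounds the first-order term $\n{\nabla\objn(\theta^*)_S}_\infty$. The role of \cref{assum:lambdaSmall} is precisely to keep the restricted estimator inside the neighborhood of $\theta^*$ on which the LSSC-perturbed Hessian still has eigenvalues bounded below (roughly by $L_{min}/2$), so that the local strong convexity used to convert the optimality conditions into an $\ell_2$ bound remains valid. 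I expect this step --- combining the local curvature control from the LSSC with the smallness of $\lambda$ to certify that the estimator never leaves the strongly convex region --- to be the main obstacle, since it is the only place where the nonlinearity of the loss must be tracked quantitatively rather than through linearized optimality conditions alone.

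Finally, I would combine the error bound with \cref{assum:thetaMin-app}. Since $\min_{s\in S}\abs{\theta^*_s} > (\sqrt{\deff}(\gamma+4)/L_{min})\lambda \geq \n{\thetan[,S] - \theta^*_S}_2 \geq \n{\thetan[,S] - \theta^*_S}_\infty$, every coordinate of $\theta^*_S$ exceeds, in magnitude, its deviation from the corresponding coordinate of $\thetan[,S]$; hence no entry of $\thetan[,S]$ can vanish and $S \subseteq \supp\thetan$. Together with the inclusion $\supp\thetan \subseteq S$ from the first step, this gives $\supp\thetan = S$ for all $n$. Running the identical argument on the full-data objective $F$, which satisfies the analogous conditions, yields $\supp\hat\theta = \hat S = S$, completing the claim $\supp\thetan = \hat S = S$.
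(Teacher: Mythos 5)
Your proposal is correct and follows essentially the same route as the paper, which simply observes that \cref{assum:LSSC,assum:lambdaMin,assum:incoherence,assum:boundedGradient,assum:lambdaSmall,assum:thetaMin-app} are exactly the hypotheses of Theorem 5.1 of \citet{li:2015:sparsistency} applied to each $\objn$ (and to $F$), and cites that theorem as a black box. You have merely unpacked its internals --- the primal-dual witness step giving $\supp\thetan \subseteq S$ and the $\ell_2$ error bound $(\sqrt{\deff}(\gamma+4)/L_{min})\lambda$ compared against the beta-min condition to rule out false exclusions --- which is precisely the two-stage structure the paper itself describes in \cref{app-supportInclusion}.
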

\begin{proof}
  This is immediate from Theorem 5.1 of \citet{li:2015:sparsistency}.
\end{proof}

\subsection{Local structured smoothness condition (LSSC)} \label{app-LSSC}

We now define the local structured smoothness condition (LSSC). The LSSC was introduced by \citet{li:2015:sparsistency} for the purpose of extending proof techniques for the support recovery of $\ell_1$ regularized linear regression to more general $\ell_1$ regularized $M$-estimators. Essentially, it provides a condition on the smoothness of the third derivatives of the objective $F(\theta)$ near the true sparse $\theta^*$. One can then analyze a second order Taylor expansion of the loss and use the LSSC to show that the remainder in this expansion is not too large. To formalize the LSSC, we need to define the third order derivative of $F$ evaluated along a direction $u \in \R^D$:
$$ D^3 F(\theta)[u] := \lim_{t \to 0} \frac{\nabla^2 F(\theta + tu) - \nabla^2 F(\theta)}{t}.$$
In the cases considered in this paper, this is just a $D \times D$ matrix. We can then naturally define the scalar $D^3F(\theta) [u,v,w]$ as an outer product on this matrix:
$$ D^3[u,v,w] :=  v^T \big( D^3F(\theta)[u] \big) w $$
\begin{defn}[LSSC] Let $F: \R^D \to \R$ be a continuously three-times differentiable function. For $\theta^* \in \R^D$ and $N_{\theta^*} \subseteq \R^D$, the function $F$ satisfies the $(\theta^*, N_{\theta^*})$ LSSC with constant $K \geq 0$ if for any $u \in \R^D$:
  \begin{equation} \abs{D^3 f(\theta^* + \delta)[u,u,e_j]} \leq K\n{u}_2^2, \end{equation}
  where $e_j \in \R^D$ is the $j$th coordinate vector, and $\delta \in \R^D$ is any vector such that $\theta^* + \delta \in N_{\theta^*}$.
\end{defn}

We note that this definition is actually different from the original definition given in \citet{li:2015:sparsistency}, who prove the two to be equivalent in their Proposition 3.1. \citet{li:2015:sparsistency} go on to prove bounds on the LSSC constants for linear and logistic regression, which we state as \cref{linearRegressionLSSC} and \cref{logisticRegressionLSSC} below.

Note that \cref{assum:LSSC} in the main text states that the LSSC holds with $N_{\theta^*} = \R^D$. We state \cref{assum:LSSC} in this form purely for conciseness; we will only consider checking \cref{assum:LSSC} for linear and logistic regression, both of which satisfy the LSSC with $N_{\theta^*} = \R^D$. Going beyond these cases, it is easily possible to state a version of our results with $N_{\theta^*} \neq \R^D$; however, this will require an extra assumption along the lines of Condition 7 of Theorem 5.1 in \citet{li:2015:sparsistency}, which is trivially satisfied when $N_{\theta^*} = \R^D$. In order to avoid stating an extra assumption that is trivially satisfied in the cases we consider, we chose to simply state the LSSC with $N_{\theta^*} = \R^D$.

\subsection{\cref{assum:LSSC,assum:lambdaMin,assum:incoherence,assum:boundedGradient,assum:lambdaSmall} imply $\supp\thetan \subseteq S$ for all $n$} \label{app-supportInclusion}

Theorem 5.1 of \citet{li:2015:sparsistency} gives conditions on $F$ under which $\supp\hat\theta = S$. So, if these conditions hold for all $\objn$, then we have $\supp\thetan = S$ for all $n$. Their Theorem 5.1 actually has two extra assumptions beyond \cref{assum:LSSC,assum:lambdaMin,assum:incoherence,assum:boundedGradient,assum:lambdaSmall}. The first is their Assumption 7; however, this is immediately implied by the fact that we assume the LSSC holds with $N_{\theta^*} = \R^D$. The second is their analogue of our \cref{assum:thetaMin-app}; however, they use this condition to imply that $\hat\theta = S$ after having shown that $\hat\theta \subseteq S$.

\subsection{Useful results for proving \cref{linearRegressionTheorem,logisticRegressionTheorem}}

Before going on to \cref{linearRegressionTheorem,logisticRegressionTheorem}, we will give a few useful results. We first define a sub-Exponential random variable:
\begin{defn}[\cite{vershynin:2017:hdpBook}]
  A random variable $V$ is $c_x$-sub-Exponential if $E[\exp(V / c_x)] \leq 2$.
\end{defn}
We will frequently use the fact that if $X$ is $c_x$-sub-Gaussian, then $X^2$ is $c_x^2$-sub-Exponential. Now we state a few existing results about the maxima of sub-Gaussian and sub-Exponential random variables that will be useful in our proofs.

\begin{lm}[Lemma 5.2 from \citet{handel:2016:lectureNotes}] \label{vhGeneralMax}
  Suppose that we have real valued random variables $Z_1, \dots, Z_N$ that satisfy $\log E[e^{\lambda Z_n }] \leq \psi(\lambda)$ for all $n=1,\dots, N$ and all $\lambda \geq 0$ for some convex function $\psi : \R \to \R$ with $\psi(0) = \psi'(0) = 0$. Then for any $u \geq 0$ :
  $$ \Pr\left[ \max_{n=1,\dots,N} Z_n \geq \psi^{*-1}(\log N + u) \right] \leq e^{-u}. $$
  where $\psi^{*-1}$ is the inverse of the Legendre dual of $\psi$.
\end{lm}

Remembering the definition of a sub-Gaussian random variable from \cref{defineSubGaussian}, \cref{vhGeneralMax} can be used to show the following:
\begin{cor} \label{subEsubGMax}
  Let $Z_1, \dots, Z_N$ be i.i.d.\ sub-Gaussian random variables with parameter $c_x$. Then:
  \begin{align}
     &\Pr \left[ \max_{n=1, \dots, N} Z_n \geq E[Z_n] + \sqrt{2Cc_x^2 \log N} + u \right] \leq e^{- C\frac{u^2}{2c_x^2}} \\
    &\Pr \left[ \max_{n=1, \dots, N} Z_n^2 \geq E[Z_n^2] + Cc_x^2 (\log N + 1 + u) \right] \leq e^{-u}
  \end{align}
\end{cor}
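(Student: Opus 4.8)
The final statement to prove is \cref{subEsubGMax}, whose two parts bound the maximum of $N$ i.i.d.\ sub-Gaussian variables and the maximum of their squares. The strategy is to derive both parts as direct consequences of \cref{vhGeneralMax} (Lemma 5.2 of \citet{handel:2016:lectureNotes}). The entire task reduces to identifying, for each case, an appropriate convex bound $\psi$ on the cumulant generating function (CGF) with $\psi(0)=\psi'(0)=0$, computing its Legendre dual $\psi^*$ and the inverse $\psi^{*-1}$, and then substituting into the conclusion of \cref{vhGeneralMax}.

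\textbf{First part (sub-Gaussian maximum).} For the first inequality, I would first center the variables: let $\tilde Z_n := Z_n - \mathbb{E}[Z_n]$, so that bounding $\max_n Z_n - \mathbb{E}[Z_n]$ is the same as bounding $\max_n \tilde Z_n$. The key input is the standard fact that a $c_x$-sub-Gaussian variable (in the Orlicz sense of \cref{defineSubGaussian}) satisfies a CGF bound of the form $\log \mathbb{E}[e^{\lambda \tilde Z_n}] \leq \psi(\lambda)$ with $\psi(\lambda) = C c_x^2 \lambda^2 / 2$ for a suitable absolute constant $C$; this quadratic $\psi$ manifestly satisfies $\psi(0)=\psi'(0)=0$ and is convex. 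I would then compute the Legendre dual $\psi^*(x) = x^2 / (2 C c_x^2)$ and invert: $\psi^{*-1}(t) = \sqrt{2 C c_x^2\, t}$. Plugging $t = \log N + u$ into \cref{vhGeneralMax} gives the threshold $\sqrt{2 C c_x^2 (\log N + u)}$ at failure probability $e^{-u}$. To land exactly on the stated form $\sqrt{2 C c_x^2 \log N} + u$ with probability $e^{-Cu^2/(2c_x^2)}$, I would reparametrize by using subadditivity of the square root, $\sqrt{2 C c_x^2 (\log N + u')} \leq \sqrt{2 C c_x^2 \log N} + \sqrt{2 C c_x^2 u'}$, and then set $u = \sqrt{2 C c_x^2 u'}$ (equivalently $u' = C u^2/(2 c_x^2)$) so that the deviation term becomes a bare $u$ and the probability becomes $e^{-u'} = e^{-C u^2/(2 c_x^2)}$.

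\textbf{Second part (squared maximum).} For the second inequality, I would use that $Z_n$ being $c_x$-sub-Gaussian implies $Z_n^2$ is $c_x^2$-sub-Exponential, a fact the paper records just before the corollary. The relevant CGF bound for a centered sub-Exponential variable is the Bernstein-type one, $\log \mathbb{E}[e^{\lambda(Z_n^2 - \mathbb{E}[Z_n^2])}] \leq \psi(\lambda)$ valid on $0 \leq \lambda < 1/(c' c_x^2)$, again with $\psi(0)=\psi'(0)=0$. Its Legendre dual is (up to constants) linear in the tail regime, giving an inverse $\psi^{*-1}(t)$ that scales linearly in $t$ for large $t$; this produces the additive $C c_x^2(\log N + 1 + u)$ form rather than a square-root form, matching the statement. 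I would instantiate $t = \log N + u$ in \cref{vhGeneralMax} and read off the bound with probability $e^{-u}$.

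\textbf{Main obstacle.} The two applications are structurally routine once the $\psi$'s are in hand, so the real work is bookkeeping the absolute constants: translating the Orlicz-norm definition in \cref{defineSubGaussian} into an explicit quadratic (resp.\ sub-exponential) CGF bound, and tracking how those constants pass through the Legendre transform, its inverse, and the final reparametrization so that the published constants ($C$, the $+1$, the $2$'s) come out exactly. I expect the sub-Exponential case to be the more delicate of the two, because its CGF bound holds only on a bounded interval of $\lambda$, so I must verify that the optimizing $\lambda$ in the Legendre dual stays within the admissible range for the relevant deviations—otherwise the linear (rather than quadratic) branch of the dual is the correct one to invert. The sub-Gaussian case has no such restriction and should go through cleanly.
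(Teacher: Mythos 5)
Your proposal matches the paper's proof essentially step for step: both parts are obtained by instantiating \cref{vhGeneralMax} with a quadratic CGF bound (Legendre inverse $\sqrt{2Cc_x^2 x}$, followed by subadditivity of the square root and the reparametrization $u' = u^2/(2Cc_x^2)$) for the sub-Gaussian case, and with a linear-on-a-bounded-interval CGF bound (Legendre inverse $Cc_x^2(x+1)$) for the sub-Exponential case. The only difference is cosmetic -- the paper simply takes $\psi(\lambda)=\lambda Cc_x^2$ on $[0,1/c_x^2]$ and $+\infty$ elsewhere rather than worrying about which branch of a Bernstein-type dual applies -- so your approach is correct and equivalent.
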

\begin{proof}
  For the first inequality, the definition of a sub-Gaussian random variable is that $\log E e^{\lambda Z_n} \leq \lambda^2 c_x / 2 =: \psi(\lambda)$, which has $\psi^*(y) = y^2/(2 C c_x^2)$ and $\psi^{*-1}(x) = \sqrt{2Cc_x^2 x}$. We use the upper bound:
  $$ \psi^{*-1}(\log N + u) = \sqrt{2Cc_x^2 (\log N + u)} \leq \sqrt{2Cc_x^2 \log N} + \sqrt{2Cc_x^2 u},$$
  Using this upper bound with \cref{vhGeneralMax} and changing variables $u \mapsto u^2 /(2Cc_x^2)$ gives the first inequality.

For the second inequality, use the fact that $Z_n^2$ is sub-Exponential with parameter $c_x^2$ so that it satisfies $\log Ee^{\lambda Z_n^2} \leq \psi(\lambda),$ where:
$$ \psi(\lambda) :=
\begin{cases}
  \lambda Cc_x^2 , & 0 \leq t \leq 1/c_x^2 \\
  \infty , & \mathrm{o.w.}
\end{cases}.
$$
  For $x \geq 0$, this $\psi$ has inverse Legendre dual $\psi^{*-1}(x) = Cc_x^2 (x+1)$. Plugging into \cref{vhGeneralMax} gives the result.
\end{proof}

\begin{prop} \label{normBound}
  Let $x_1, \dots, x_N$ be random vectors in $\R^D$ with i.i.d.\ $c_x$-sub-Gaussian components and $E[x_{nd}^2] = 1$. Then:
  \begin{equation} \Pr\left[ \max_{n=1,\dots,N} \n{x_n}_2 \geq \sqrt{D} + \sqrt{2C c_x^4 \log N} + u \right] \leq e^{-C \frac{u^2}{2c_x^4}}, \end{equation}
  where $C>0$ is some global constant, independent of $c_x, D,$ and $N$.
\end{prop}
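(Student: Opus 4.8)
The plan is to recognize $\max_n \n{x_n}_2$ as a maximum of $N$ i.i.d.\ sub-Gaussian random variables and then apply the maximal inequality already proved in \cref{subEsubGMax}. The only nontrivial input is that a single $\n{x_n}_2$ concentrates around $\sqrt{D}$ with a sub-Gaussian parameter of order $c_x^2$ that is \emph{independent of $D$}; granting this, the rest is routine bookkeeping.

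First I would control one summand. Since the entries of $x_n$ are independent, $c_x$-sub-Gaussian in the sense of \cref{defineSubGaussian}, and satisfy $E[x_{nd}^2] = 1$, the concentration-of-norm theorem (Theorem 3.1.1 of \citet{vershynin:2017:hdpBook}) gives that $\n{x_n}_2 - \sqrt{D}$ is $C' c_x^2$-sub-Gaussian for a universal constant $C'$ that does not depend on $D$, $N$, or $c_x$. Recentering at the mean rather than at $\sqrt{D}$ only inflates this parameter by a constant factor, so $Z_n := \n{x_n}_2$ satisfies the hypothesis of \cref{subEsubGMax} with sub-Gaussian parameter $\tilde c := O(c_x^2)$, and the $Z_n$ are i.i.d.\ because the $x_n$ are.

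Next I would move the centering from $E[\n{x_n}_2]$ to $\sqrt{D}$. By Jensen's inequality and $E[x_{nd}^2] = 1$,
\[
  E[\n{x_n}_2] \leq \sqrt{E[\n{x_n}_2^2]} = \sqrt{\textstyle\sum_{d=1}^D E[x_{nd}^2]} = \sqrt{D},
\]
so for every $t \geq 0$ the event $\{\max_n \n{x_n}_2 \geq \sqrt{D} + t\}$ is contained in $\{\max_n \n{x_n}_2 \geq E[\n{x_n}_2] + t\}$. Applying the first inequality of \cref{subEsubGMax} to the $Z_n$ with parameter $\tilde c$ then bounds this probability once the additive correction is taken to be $\sqrt{2 C_0 \tilde c^2 \log N}$, where $C_0$ is the constant from \cref{subEsubGMax}. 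The crucial point is that both the additive $\log N$ term and the exponential tail are governed by the same factor $C_0 \tilde c^2 = O(c_x^4)$, so the two appearances of the constant stay tied together; absorbing $C_0$ and $C'$ into a single global constant $C$ yields exactly the $\sqrt{2 C c_x^4 \log N}$ threshold and the matching $c_x^4$ rate in the exponent, as claimed.

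The main obstacle is the dimension-free concentration in the first step. Its content is that $\n{x_n}_2^2 - D = \sum_{d=1}^D (x_{nd}^2 - 1)$ is a sum of independent, centered, $c_x^2$-sub-Exponential terms (each $x_{nd}^2$ is $c_x^2$-sub-Exponential, as noted before \cref{vhGeneralMax}), so a Bernstein-type bound gives fluctuations of order $\sqrt{D}$ in $\n{x_n}_2^2$; passing to $\n{x_n}_2$ via $\abs{\sqrt{a} - \sqrt{b}} \leq \abs{a - b}/\sqrt{b}$ converts these into $O(c_x^2)$ sub-Gaussian fluctuations in the norm itself, with no residual $D$-dependence. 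Because this is precisely the statement of the cited theorem, I would invoke it directly rather than reprove it; the remaining pieces---Jensen, the event inclusion, and the appeal to \cref{subEsubGMax}---are mechanical.
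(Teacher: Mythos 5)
Your proposal is correct and follows essentially the same route as the paper: invoke Theorem 3.1.1 of \citet{vershynin:2017:hdpBook} to get that $\n{x_n}_2 - \sqrt{D}$ is $Cc_x^2$-sub-Gaussian, then apply the first part of \cref{subEsubGMax}. The extra care you take with the centering (using Jensen to replace $E[\n{x_n}_2]$ by $\sqrt{D}$) is a detail the paper's two-line proof glosses over, and you handle it correctly.
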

\begin{proof}
  From Theorem 3.1.1 of \citet{vershynin:2017:hdpBook}, we have that $\n{x_n}_2 - \sqrt{D}$ is sub-Gaussian with parameter $C c_x^2$, where $C$ is some constant. Using the first part of \cref{subEsubGMax} gives the result.
\end{proof}

\subsection{Proof of \cref{linearRegressionTheorem} (Linear Regression)}

Recall \cref{assum:randomX,assum:linearRegressionY}: we assume a linear regression model $y_n = x_n^T \theta^* + \eps_n$, where $x_n \in \R^D$ has i.i.d.\ $c_x$-sub-Gaussian components with $E[x_{nd}^2] = 1$ and $\eps_n$ is $c_\eps$-sub-Gaussian. For notation throughout this section, we will let $C$ denote an absolute constant independent of any aspect of the problem ($N,D,\deff,c_x,$ or $c_\eps$) that will change from line to line (e.g.\ we may write $5C^2 = C$). We will frequently use $X_{\cdot, S}$ to denote the $N \times \deff$ matrix formed by taking the columns of $X$ that are in $S$, $x_{nS}$ to denote the coordinates of the $n$th vector of covariates $x_n$ that are in the set $S$, and $\xsn$ to denote the matrix $X_{\cdot, S}$ with the $n$th row removed. We will show the following theorem, stated more concisely as \cref{linearRegressionTheorem} in the main text:
\begin{thm}[Restated version of \cref{linearRegressionTheorem} from main text] \label{linearRegressionTheorem-app}
  Take \cref{assum:randomX,assum:incoherenceAssumption,assum:linearRegressionY,assum:DDeffScaling,assum:thetaMin-app}. Suppose the regularization parameter $\lambda$ satisfies:
  \begin{equation}
    \lambda \geq \frac{1}{\alpha - \MJLin}  \sqrt{ \frac{c_x^2 c_\eps^2 \log D}{NC} + \frac{25c_x^2 c_\eps^2}{NC}} +  \frac{4c_x c_\eps (\log(ND) + 26)}{N(\alpha - \MJLin)},
   \end{equation}
  where $C$ is a constant in $N,D,\deff, c_x$ and $c_\eps$, and $\MJLin$ is defined as:
  \begin{align}
  &\MJLin =  \nonumber \\
    &\quad \frac{C\deff \left( \sqrt{50c_x^2 } + \sqrt{2c_x^2  \log(N(D - \deff))}\right) \left( \sqrt{\deff} + \sqrt{50c_x^4} + \sqrt{2c_x^4 \log N} \right)}{N - 3c_x^2 \sqrt{N}\big(\sqrt{\deff } + 5\big)} + \nonumber \\
  &\quad \frac{C\deff \left(\deff + \deff c_x^2 (\log N + 26)\right) \left( \sqrt{N} + \sqrt{50c_x^4} + \sqrt{2c_x^4\log(D-\deff )}\right)\left(\sqrt{N\deff } + \sqrt{50c_x^4} \right)}{\big(N - 3c_x^2 \sqrt{N}\big(\sqrt{\deff } + 5\big)\big)^2} \label{MJDef}
\end{align}
  Then for $N$ sufficiently large, \cref{condition:supportStable} holds with probability at least $1 - 26e^{-25}$, where the probability is over the random data $\{(x_n, y_n)\}_{n=1}^N$.
\end{thm}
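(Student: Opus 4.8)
The plan is to invoke \cref{prop-supportRecovery}, which reduces the claim ``$\supp\thetan = \hat S = S$ for all $n$'' — and hence \cref{condition:supportStable}, since $\IJ$ and $\NS$ inherit the support of $\thetan$ by \cref{mainResult} — to verifying that each leave-one-out objective $\objn$ satisfies the six conditions \cref{assum:LSSC,assum:lambdaMin,assum:incoherence,assum:boundedGradient,assum:lambdaSmall,assum:thetaMin-app}. For linear regression two of these are immediate: the Hessian $\nabla^2\objn = (1/N)\sum_{m\neq n} x_m x_m^T$ is constant in $\theta$, so its third derivatives vanish and \cref{assum:LSSC} holds with LSSC constant $K=0$; and since $K=0$, \cref{assum:lambdaSmall} imposes no constraint on $\lambda$. \cref{assum:thetaMin-app} is assumed directly. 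It thus remains to verify, uniformly over $n$ and with high probability, the strong-convexity condition (\cref{assum:lambdaMin}), the incoherence condition (\cref{assum:incoherence}), and the bounded-gradient condition (\cref{assum:boundedGradient}); the stated lower bound on $\lambda$ is exactly what makes the bounded-gradient condition hold once the incoherence parameter $\gamma = \alpha - \MJLin$ has been pinned down.

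For strong convexity I would note $\nabla^2\objn(\theta^*)_{SS} = \sum_{m\neq n} x_{mS}x_{mS}^T$ (absorbing the $1/N$) and apply matrix concentration for the sub-Gaussian design of \cref{assum:randomX}: $X_{\cdot,S}^T X_{\cdot,S}$ concentrates around $N I_{\deff}$, so its smallest eigenvalue is at least $N - Cc_x^2\sqrt{N\deff}$ with high probability. Deleting a single rank-one term $x_{nS}x_{nS}^T$ only decreases eigenvalues, and \cref{normBound} controls $\max_n \n{x_{nS}}_2$ so that the decrease is uniform in $n$. This yields $L_{min} = N - 3c_x^2\sqrt{N}(\sqrt{\deff}+5)$ — precisely the factor appearing in the denominators of \cref{MJDef} — and in particular a uniform bound on $\n{(\nabla^2\objn(\theta^*)_{SS})\inv}_{op}$.

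The bounded-gradient condition reduces to bounding $\max_n \n{\nabla\objn(\theta^*)}_\infty$. Under \cref{assum:linearRegressionY}, $\nabla\objn(\theta^*) = -(1/N)\sum_{m\neq n} x_m\eps_m$, whose coordinates are averages of products $x_{md}\eps_m$ that are sub-Exponential (sub-Gaussian times sub-Gaussian). Applying the Bernstein-type tail of \cref{subEsubGMax} and union-bounding over $n\in[N]$ and $d\in[D]$ gives a bound of the form $\sqrt{c_x^2 c_\eps^2 \log(ND)/N} + c_x c_\eps \log(ND)/N$; requiring $\lambda \geq (4/\gamma)$ times this bound produces exactly the lower bound of \cref{linearRegressionLambda}.

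The hard part is incoherence. Here I would start from the high-probability full-data bound \cref{assum:incoherenceAssumption}, which controls $\n{\nabla^2 F(\theta^*)_{S^c,S}(\nabla^2 F(\theta^*)_{SS})\inv}_\infty$, and then bound the perturbation incurred by deleting the $n$th point, uniformly over $n$. Writing $\nabla^2\objn$ as $\nabla^2 F$ with the rank-one contributions $x_{nS^c}x_{nS}^T$ and $x_{nS}x_{nS}^T$ removed, the difference splits into a perturbation of the off-diagonal block $\nabla^2 F_{S^c,S}$ and a perturbation of the inverse $(\nabla^2 F_{SS})\inv$, the latter handled by a Sherman--Morrison / matrix-perturbation expansion controlled by the $L_{min}$ bound above. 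Bounding the resulting $\ell_\infty$ (max-row-sum) matrix norms requires controlling $\max_n \n{x_{nS}}_2$, $\max_n \n{x_{nS^c}}_\infty$, and related maxima via \cref{normBound,subEsubGMax}; collecting these yields the two-term expression $\MJLin$ of \cref{MJDef}, whose first term (denominator to the first power) comes from the off-diagonal perturbation and whose second term (squared denominator) comes from the inverse perturbation. The scaling constraints of \cref{assum:DDeffScaling} — especially $\deff^{3/2}\sqrt{\log D} = o(N)$ — are precisely what force $\MJLin = o(1)$, so that $\gamma = \alpha - \MJLin > 0$ for large $N$ and each leave-one-out incoherence condition holds. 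Taking $u=25$ in every concentration bound makes each failure event have probability at most $e^{-25}$, and a final union bound over the $O(N)$ and $O(ND)$ events (together with the full-data incoherence event) aggregates to the stated $1 - 26e^{-25}$.
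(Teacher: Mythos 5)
Your proposal follows essentially the same route as the paper: reduce to \cref{prop-supportRecovery}, note $K=0$ kills the LSSC and $\lambda$-small conditions, derive $L_{min}=N-3Cc_x^2\sqrt{N}(\sqrt{\deff}+5)$ via sub-Gaussian matrix concentration plus the rank-one deletion bound, control the leave-one-out incoherence by a Sherman--Morrison perturbation of the full-data condition (yielding the two terms of $\MJLin$), and union-bound everything at level $e^{-25}$. The only cosmetic difference is in the bounded-gradient step, where the paper splits $\nabla\objn(\theta^*)=\nabla F(\theta^*)-(1/N)\nabla f_n(\theta^*)$ and bounds the two pieces separately (giving the $\sqrt{\log D/N}$ leading term plus a $\log(ND)/N$ correction) rather than union-bounding Bernstein directly over all $(n,d)$ pairs as you suggest; your version gives $\sqrt{\log(ND)/N}$ instead, which is equivalent up to constants but does not reproduce the stated bound \emph{exactly}.
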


\begin{proof}
  For a fixed regularization parameter $\lambda$ and random data $\{x_n, y_n\}_{n=1}^N$, we are interested in the probability that any of \cref{assum:LSSC,assum:lambdaMin,assum:incoherence,assum:boundedGradient,assum:lambdaSmall} are violated, as \cref{prop-supportRecovery} then proves the result. For convenience in writing the incoherence condition, define $J_{nd} \in \R^D$, for $d \in S^c$, as:
  \begin{equation} J_{nd} := \left(\xsn^T \xsn\right)\inv \xsn^T X_{\bn, d}.  \label{JndDef} \end{equation}
  It is easiest to show that each of \cref{assum:LSSC,assum:lambdaMin,assum:incoherence,assum:boundedGradient,assum:lambdaSmall} hold with high probability separately, rather than all together, so we apply a union bound to get:
\begin{align*}
  \Pr& \left[ \mathrm{any\ assumption\ violated} \right] \leq \\
  & \Pr\left[ \min_n \lambda_{min}( \xsn^T \xsn ) = 0\right] \\
  & + \Pr\left[ \max_n \max_{d \in S^c} \n{J_{nd}}_1 \geq 1 \right] \\
  & + \Pr\left[ \max_n \n{\nabla F_{\bn}}_\infty > \frac{\lambda \big(1 - \max_n \max_{d\in S^c} \n{J_{nd}}_1 \big)}{4}\right] \\
  & + \Pr\left[ \frac{\min_n \lambda_{min}^2 (\xsn^T \xsn)}{4 \left( \big(1 - \max_n \max_{d\in S^c} \n{J_{nd}}_1\big) + 4 \right)^2} \frac{\big(1 - \max_n \max_{d\in S^c} \n{J_{nd}}_1\big)}{\deff K} \leq \lambda \right]
\end{align*}
We will bound each term by appealing to the Lemmas and Propositions proved below. Using \cref{linearRegressionLambdaMin} and \cref{linearRegressionBG}, the first and third terms are bounded by $16^{-25}$. As noted in \cref{linearRegressionLSSC}, we have $\Pr [K = 0] = 1$, so the final probablity is equal to zero (as the event reduces to $\infty < \lambda$). To bound the second probability, we have that \cref{linearRegressionIncohBound} says that:
$$ \Pr\left[ \max_n \max_{d \in S^c} \n{J_{nd}}_1 \geq 1 - \alpha + \MJLin \right] \leq 9e^{-25}. $$
As $\alpha > 0$, if  $\MJLin = o(1)$ as $N \to \infty$, we will we have that $1-\alpha + \MJLin < 1$ for large enough $N$. This would imply the third probability is $\leq 9e^{-25}$ for $N$ large enough. Under our conditions on the growth of $\deff$ and $D$, we can show that $\MJLin = o(1)$. We have, hiding constants and lower order terms in $N,D,$ and $\deff$:
\begin{align}
  \MJLin &= O\left( \frac{\deff \sqrt{\log(N) + \log (D)} \left( \sqrt{\deff} + \sqrt{\log(N)}\right)}{N-\sqrt{N\deff}} + \frac{\deff^{5/2} \log(N) \left(\sqrt{N} + \sqrt{\log(D)}\right) \sqrt{N}}{(N - \sqrt{N\deff})^2} \right) \nonumber \\
  &= O\left( \frac{\deff \left(\sqrt{\deff\log(N)} + \sqrt{\deff \log(D)} + \log(N) + \sqrt{\log(N) \log(D)}\right)}{N - \sqrt{N\deff}} + \frac{\deff^{5/2} N\log(N)}{(N - \sqrt{N\deff})^2} \right), \label{bigOMJ}
\end{align}
where the second statement follows from using $\sqrt{\log(N) + \log(D)} \leq \sqrt{\log(N)} + \sqrt{\log(D)}$ and $D = o(e^N)$. Now, given that $\deff = o([N/\log(N)]^{2/5})$, the second term in \cref{bigOMJ} is $o(1)$. The first term is also $o(1)$ by combining $\deff = o([N/\log(N)]^{2/5})$ with $\deff^{3/2}\sqrt{\log(D)} = o(N)$. Thus, $\MJLin = o(1)$, which completes the proof.
\end{proof}

What remains is to prove \cref{linearRegressionLambdaMin,linearRegressionBG,linearRegressionIncohBound,linearRegressionLSSC} needed to prove \cref{linearRegressionTheorem-app}. We do this in the following four subsections.

\subsection{Linear regression: minimum eigenvalue}

All we want to bound right now is the probability that the minimum eigenvalue is actually equal to zero; however, it will be useful later to show that it is $\Omega(N)$ with high probability. The lemma we prove in this section shows exactly this. We will start with two propositions.

\begin{prop}
  If $X_{\cdot, S}$ is an $N \times \deff $ matrix with independent $c_x$-sub-Gaussian entries with unit second moments, then:
  \begin{equation} \Pr \left[ \lambda_{min} (X_{\cdot, S}^T X_{\cdot, S}) \leq N - 2Cc_x^2 \sqrt{N}(\sqrt{\deff} + 5) \right] \leq 2e^{-25}, \end{equation}
  where $C > 0$ is a global constant.
\end{prop}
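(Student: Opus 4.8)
The plan is to recognize this as a standard lower bound on the smallest singular value of a tall random matrix with independent sub-Gaussian entries and to reduce it directly to a known concentration inequality for extreme singular values. Writing $s_{\min}(X_{\cdot, S})$ for the smallest singular value of $X_{\cdot, S}$, the key observation is the identity $\lambda_{min}(X_{\cdot, S}^T X_{\cdot, S}) = s_{\min}(X_{\cdot, S})^2$, so it suffices to prove a high-probability lower bound on $s_{\min}(X_{\cdot, S})$ and then square.

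First I would check the hypotheses needed to invoke a singular-value concentration bound for matrices with independent, isotropic, sub-Gaussian rows (e.g.\ Theorem 4.6.1 of \citet{vershynin:2017:hdpBook}). The rows of $X_{\cdot, S}$ are the vectors $x_{nS} \in \R^{\deff}$. Since their entries are independent, mean-zero, and satisfy $\mathbb{E}[x_{nd}^2] = 1$, each row is isotropic (covariance $I_{\deff}$). Moreover, because each coordinate is $c_x$-sub-Gaussian in the sense of \cref{defineSubGaussian}, a standard independent-sum argument shows every linear functional $\langle x_{nS}, u\rangle$ with $\n{u}_2 = 1$ is sub-Gaussian with parameter $O(c_x)$, so the row sub-Gaussian norm $K := \max_n \n{x_{nS}}_{\psi_2}$ satisfies $K \leq C_0 c_x$ for an absolute constant $C_0$.

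Applying the singular-value bound with deviation parameter $t = 5$ then gives, with probability at least $1 - 2e^{-t^2} = 1 - 2e^{-25}$,
\begin{equation*}
  s_{\min}(X_{\cdot, S}) \geq \sqrt{N} - C K^2 (\sqrt{\deff} + 5) \geq \sqrt{N} - C c_x^2 (\sqrt{\deff} + 5),
\end{equation*}
after absorbing $C_0^2$ into the global constant $C$. Squaring (we may assume the right-hand side is nonnegative, which holds for $N$ large, and otherwise the claimed bound is vacuous) and discarding the nonnegative cross term $C^2 c_x^4 (\sqrt{\deff} + 5)^2$ yields
\begin{equation*}
  \lambda_{min}(X_{\cdot, S}^T X_{\cdot, S}) \geq N - 2C c_x^2 \sqrt{N} (\sqrt{\deff} + 5)
\end{equation*}
on the same event. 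Passing to complements gives the stated bound with probability at most $2e^{-25}$.

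The only real care required is constant bookkeeping: verifying that the factor of $2$ in front of $C c_x^2 \sqrt{N}(\sqrt{\deff}+5)$ is exactly what survives after expanding $(\sqrt{N} - C c_x^2(\sqrt{\deff}+5))^2$ and dropping the square term, and confirming that the row sub-Gaussian norm is genuinely $O(c_x)$ under \cref{defineSubGaussian} rather than something larger. Neither is a substantive obstacle; the entire content of the argument is matching the normalization of the cited matrix-concentration result to the isotropic, unit-second-moment setup here and choosing $t = 5$ to produce the $e^{-25}$ failure probability (and the accompanying $+5$ in the bound).
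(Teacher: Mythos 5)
Your proposal is correct and follows essentially the same route as the paper: invoke Theorem 4.6.1 of \citet{vershynin:2017:hdpBook} for $s_{\min}(X_{\cdot,S})$ with $t=5$, square using $\lambda_{min}(X_{\cdot,S}^TX_{\cdot,S}) = s_{\min}(X_{\cdot,S})^2$, and drop the nonnegative $C^2c_x^4(\sqrt{\deff}+5)^2$ term. Your additional care in verifying isotropy of the rows and the $O(c_x)$ row sub-Gaussian norm, and in handling the degenerate case where the singular-value lower bound is negative, only makes the argument more complete than the paper's terse version.
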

\begin{proof}
  Theorem 4.6.1 of \citet{vershynin:2017:hdpBook} gives a concentration inequality for the minimum singular value, $s_{min}(X_{\cdot, S})$, of $X_{\cdot, S}$:
  \begin{equation} \Pr\left[ s_{min}(X_{\cdot, S}) \leq \sqrt{N} - Cc_x^2 (\sqrt{\deff} + t)\right] \leq 2e^{-t^2}. \end{equation}
  Using the fact that the minimum eigenvalue of $X_{\cdot, S}^TX_{\cdot, S}$ is the square of the minimum singular value of $X_{\cdot, S}$ and putting in $t = 5$:
  $$ \Pr\left[ \lambda_{min} (X_{\cdot, S}^T X_{\cdot, S}) \leq N -2Cc_x^2 \sqrt{N}(\sqrt{\deff} + 5) + C^2c_x^4 (\sqrt{\deff} + 5)^2 \right] \leq 2e^{-25}. $$
  Dropping the $C^2 c_x^4(\sqrt{\deff} + 5)^2$ gives the result.
\end{proof}

\begin{prop} \label{fullLambdaMinBound}
  If $\xsn$ is the $N-1 \times \deff$ matrix formed by removing the $n$th row from $X_{\cdot, S}$, we have:
  \begin{equation} \lambda_{min}(\xsn^T \xsn) \geq \lambda_{min}(X_{\cdot, S}^T X_{\cdot, S}) - \n{x_{nS}}_2^2, \end{equation}
  where $x_n$ is the $n$th row of $X_{\cdot, S}$.
\end{prop}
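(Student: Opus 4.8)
The plan is to recognize this as an elementary rank-one perturbation bound on the smallest eigenvalue. The key algebraic observation is that the Gram matrix decomposes as a sum of rank-one outer products over the rows of $X_{\cdot,S}$: writing $X_{\cdot,S}^T X_{\cdot,S} = \sum_{m=1}^N x_{mS} x_{mS}^T$, deleting the $n$th row simply removes one summand, so that $\xsn^T \xsn = X_{\cdot,S}^T X_{\cdot,S} - x_{nS} x_{nS}^T$. The proposition then amounts to controlling how far the minimum eigenvalue can fall when a single rank-one positive semidefinite matrix is subtracted.

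To carry this out, I would use the Courant--Fischer variational characterization of the minimum eigenvalue rather than invoking Weyl's inequality, as it keeps the argument self-contained. For any unit vector $v \in \R^{\deff}$ we have $v^T \xsn^T \xsn v = v^T X_{\cdot,S}^T X_{\cdot,S} v - (x_{nS}^T v)^2$, and Cauchy--Schwarz gives $(x_{nS}^T v)^2 \leq \n{x_{nS}}_2^2 \n{v}_2^2 = \n{x_{nS}}_2^2$. Hence for every such $v$, $v^T \xsn^T \xsn v \geq v^T X_{\cdot,S}^T X_{\cdot,S} v - \n{x_{nS}}_2^2 \geq \lambda_{min}(X_{\cdot,S}^T X_{\cdot,S}) - \n{x_{nS}}_2^2$. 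Minimizing the left-hand side over unit $v$ (which by Courant--Fischer equals $\lambda_{min}(\xsn^T \xsn)$) yields the claimed bound.

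There is essentially no obstacle here; the statement is a one-line consequence of eigenvalue interlacing for rank-one downdates. The only things to be careful about are the direction of the inequality (subtracting a positive semidefinite matrix can only decrease eigenvalues, so we must get a lower bound of the form ``full Gram eigenvalue minus the deleted row's squared norm'') and the identification that the single nonzero eigenvalue of the rank-one matrix $x_{nS} x_{nS}^T$ is exactly $\n{x_{nS}}_2^2$. I would note in passing that, combined with the preceding proposition lower-bounding $\lambda_{min}(X_{\cdot,S}^T X_{\cdot,S})$ and a tail bound on $\max_n \n{x_{nS}}_2^2$ via \cref{normBound}, this immediately gives the high-probability $\Omega(N)$ lower bound on $\min_n \lambda_{min}(\xsn^T \xsn)$ needed elsewhere in the proof of \cref{linearRegressionTheorem-app}.
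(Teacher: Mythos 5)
Your proof is correct and follows essentially the same route as the paper's: both write $\xsn^T\xsn = X_{\cdot,S}^T X_{\cdot,S} - x_{nS}x_{nS}^T$ and apply the variational characterization of $\lambda_{min}$, bounding the subtracted quadratic form by $\n{x_{nS}}_2^2$ (your Cauchy--Schwarz step is the paper's $\max_z z^T x_{nS}x_{nS}^T z = \n{x_{nS}}_2^2$). No gaps.
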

\begin{proof}
  Looking at the variational characterization of the minimum eigenvalue:
  \begin{align*}
    \lambda_{min}(\xsn^T \xsn) &= \min_{z \in \R^{\deff} \, : \, \n{z}_2 = 1} \bigg[ z^T X_{\cdot, S}^T X_{\cdot, S} z - z^T x_{nS} x_{nS}^T z \bigg] \\
    & \geq \min_{z} z^T X_{\cdot, S}^T X_{\cdot, S} z -  \max_z z^T x_{nS} x_{nS}^T z \\
    & = \lambda_{min}(X_{\cdot, S}^T X_{\cdot, S}) - \n{x_{nS}}_2^2.
  \end{align*}
\end{proof}

The above two propositions now allow us to prove the bound we want on $\min_n \lambda_{min}(\xsn^T \xsn)$. In the following lemma, we will assume that $\deff = o(N/\log(N))$. While we ultimately will have the more restrictive requirement that $\deff = o([N/\log(N)]^{2/5})$ in \cref{assum:DDeffScaling}, the current result can be stated with the less restrictive requirement of $o(N/\log(N))$.

\begin{lm} \label{linearRegressionLambdaMin}
  Suppose $X_{\cdot, S}$ is a $N \times \deff$ matrix with independent $c_x$-sub-Gaussian entries and $\deff$ is $o(N/\log(N))$ as function of $N$. Then we have for $N$ sufficiently large:
  \begin{equation} \Pr\left[ \min_{n=1,\dots N} \lambda_{min}(\xsn^T \xsn ) \leq N - 3Cc_x^2 \sqrt{N}\big( \sqrt{\deff} + 5\big) \right] \leq 3e^{-25} \label{eq-linearRegressionLambdaMin}\end{equation}
\end{lm}
\begin{proof}
 In what follows, and repeatedly throughout the rest of our proofs, we will make use of the following generic inequality for any events $A$ and $B$:
  \begin{align}
    \Pr[A] &= \Pr[A\mid B] \Pr[B] + \Pr[A \mid B^c] \Pr[B^c] \nonumber \\
    &\leq \Pr[A\mid B] Pr[B] + \Pr[B^c], \label{conditioningTrick}
  \end{align}
   Calling the probability on the left hand side of \cref{eq-linearRegressionLambdaMin} $P$, we can break $P$ down as, for some constant $L_{min}$:
  \begin{align*}
    P &\leq \\
    &\Pr\left[ \min_{n=1,\dots N} \lambda_{min}(\xsn^T \xsn ) \leq  N - 3Cc_x^2 \sqrt{N} \big( \sqrt{\deff} + 5 \big) \;\;\bigg\vert\;\; \lambda_{min}(X_{\cdot, S}^T X_{\cdot, S}) \geq L_{min} \right] \Pr\left[ \lambda_{min}(X_{\cdot,S}^T X_{\cdot, S}) \geq L_{min} \right] \\
    & \quad\quad\quad\quad\quad\quad + \Pr\left[\lambda_{min}(X_{\cdot, S}^T X_{\cdot, S}) \leq L_{min} \right] \\
    &\leq \Pr\left[\min_{n=1,\dots N} L_{min} - \n{x_{nS}}_2^2 \leq  N - 3Cc_x^2 \sqrt{N} \big(\sqrt{\deff} + 5\big) \;\;\bigg\vert\;\; \lambda_{min}(X_{\cdot, S}^T X_{\cdot, S}) \geq L_{min} \right] \Pr\left[ \lambda_{min}(X_{\cdot,S}^T X_{\cdot, S}) \geq L_{min} \right] \\ 
    & \quad\quad\quad\quad\quad\quad + \Pr\left[\lambda_{min}(X_{\cdot, S}^T X_{\cdot, S}) \leq L_{min} \right] \\
    &\leq \Pr\left[ \max_n \n{x_{nS}}^2_2 \geq L_{min} -  N + 3Cc_x^2 \sqrt{N} \big( \sqrt{\deff} + 5\big) \right] + \Pr\left[ \lambda_{min}(X_{\cdot, S}^T X_{\cdot, S}) \leq L_{min} \right]
  \end{align*}
  Picking $L_{min} = N - 2Cc_x^2 \sqrt{N}\big(\sqrt{\deff} + 5\big)$, we have that the second probability at most $2e^{-25}$ by \cref{fullLambdaMinBound}. Now to control the $\max_n \| x_{nS}\|_2^2$, note that $\|x_{nS}\|_2^2$ is $\deff c_x^2$-sub-Exponential, and choose $u = 25$ in the second statement of \cref{subEsubGMax}; this tells us that the first probability is at most $e^{-25}$ if $E[\n{x_{nS}}_2^2] + Cc_x^2(\log N + 26) = \deff + Cc_x^2(\log N + 26)$ is less than $Cc_x^2 \sqrt{N}\big( \sqrt{\deff} + 5\big)$, which, for $\deff$ being $o(N/\log(N))$, is satisfied for $N$ large enough.
\end{proof}

\subsection{Linear regression: incoherence}

The following proposition will be useful in proving \cref{linearRegressionIncohBound} below:

\begin{prop} \label{lemmaIncoherence}
  Let $z \in \R^N$ be any vector and $z_{\backslash n} \in \R^{N-1}$ the same vector with the $n$th coordinate removed. Also let $X_{\cdot, S} \in \R^{N \times \deff}$ be some matrix with $\xsn$ the same matrix with the $n$th row removed. Define, for any vector $z \in R^N$:
  \begin{equation} J_{nz} := \left(\xsn^T \xsn\right)\inv \xsn^T  z_{\backslash n}, \label{defJ} \end{equation}
  and $J_z$ the same but with no row removed. Then:
  \begin{align*}
    \n{J_{nz} - J_z}_1 \leq & \deff \frac{\abs{z_n} \n{x_{nS}}_2}{\lambda_{min}\big( \xsn^T \xsn \big)} \\
    & + \deff \frac{\n{x_{nS}}_2^2}{\lambda_{min}^2 \big( \xsn^T \xsn \big)} \n{z}_2 \n{X_{\cdot, S}}_2, 
  \end{align*}
  where $\n{X_{\cdot, S}}_2 := \sqrt{\sum_{n=1}^N \sum_{s \in S} X_{ns}^2}$.
\end{prop}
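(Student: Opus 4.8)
The plan is to reduce everything to the normal equations and relate the leave-one-out Gram matrix to the full one by a rank-one update. Write $A := X_{\cdot, S}^T X_{\cdot, S}$ and $A_n := \xsn^T \xsn$ for the full and leave-one-out Gram matrices, and $b := X_{\cdot, S}^T z$, $b_n := \xsn^T z_{\backslash n}$, so that $J_z = A\inv b$ and $J_{nz} = A_n\inv b_n$. Removing the $n$th row perturbs each object by a single rank-one term: since $X_{\cdot,S}^T X_{\cdot,S} = \xsn^T\xsn + x_{nS}x_{nS}^T$ and $X_{\cdot,S}^T z = \xsn^T z_{\backslash n} + z_n x_{nS}$, we have
\begin{equation*}
	A = A_n + x_{nS}x_{nS}^T, \qquad b = b_n + z_n x_{nS}.
\end{equation*}

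First I would decompose the difference $J_{nz} - J_z = A_n\inv b_n - A\inv b$ by adding and subtracting $A\inv b_n$, giving $(A_n\inv - A\inv) b_n + A\inv(b_n - b)$. For the first piece I use the resolvent identity $A_n\inv - A\inv = A_n\inv (A - A_n) A\inv = A_n\inv x_{nS} x_{nS}^T A\inv$, and for the second I use $b_n - b = -z_n x_{nS}$. This yields the clean two-term expression
\begin{equation*}
	J_{nz} - J_z = \big(x_{nS}^T A\inv b_n\big)\, A_n\inv x_{nS} \;-\; z_n\, A\inv x_{nS},
\end{equation*}
whose two summands will map exactly onto the two terms of the claimed bound (the second onto the $\abs{z_n}$ term, the first onto the $\n{z}_2\n{X_{\cdot,S}}_2$ term).

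The rest is norm bookkeeping. Applying the triangle inequality in $\ell_1$ together with the crude conversion $\n{v}_1 \le \deff \n{v}_\infty \le \deff\n{v}_2$ for $v \in \R^{\deff}$, the $z_n$-summand obeys $\n{z_n A\inv x_{nS}}_1 \le \deff \abs{z_n}\, \n{A\inv}_{op}\n{x_{nS}}_2$; since $A = A_n + x_{nS}x_{nS}^T \succeq A_n$, eigenvalue monotonicity gives $\lambda_{min}(A) \ge \lambda_{min}(A_n)$ and hence $\n{A\inv}_{op} = 1/\lambda_{min}(A) \le 1/\lambda_{min}(A_n)$, producing the first term of the bound. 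For the cross term I bound the scalar factor by $\abs{x_{nS}^T A\inv b_n} \le \n{x_{nS}}_2 \n{A\inv}_{op}\n{b_n}_2$ and the vector factor by $\n{A_n\inv x_{nS}}_1 \le \deff \n{x_{nS}}_2 / \lambda_{min}(A_n)$; finally $\n{b_n}_2 = \n{\xsn^T z_{\backslash n}}_2 \le \n{\xsn}_{op}\n{z_{\backslash n}}_2 \le \n{X_{\cdot,S}}_2 \n{z}_2$, using that the operator norm is at most the Frobenius norm $\n{X_{\cdot,S}}_2$ and that $\n{z_{\backslash n}}_2 \le \n{z}_2$. Multiplying these factors and again replacing $\lambda_{min}(A)$ by $\lambda_{min}(A_n)$ gives the second term.

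The computation is essentially mechanical; the one place to be careful is that the resolvent step naturally leaves $A\inv$ (the \emph{full} inverse) sandwiched inside the cross term, whereas the target is stated purely in terms of $\lambda_{min}(\xsn^T\xsn)$. The monotonicity $\lambda_{min}(A) \ge \lambda_{min}(A_n)$ under the PSD rank-one perturbation is exactly what licenses discarding $A\inv$ in favor of $A_n\inv$ throughout, and is the only non-formulaic ingredient.
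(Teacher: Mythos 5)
Your proof is correct, and every step checks out: the identities $X_{\cdot,S}^TX_{\cdot,S} = \xsn^T\xsn + x_{nS}x_{nS}^T$ and $X_{\cdot,S}^Tz = \xsn^Tz_{\backslash n} + z_nx_{nS}$ are right, the resolvent computation yields a valid two-term expression for $J_{nz}-J_z$, and the norm bookkeeping (including $\n{v}_1\le\deff\n{v}_2$, $\n{A\inv}_{op}=1/\lambda_{min}(A)$, and $\n{b_n}_2\le\n{X_{\cdot,S}}_2\n{z}_2$) delivers exactly the two claimed terms. The route differs from the paper's in one substantive way. The paper applies the Sherman--Morrison formula to write $(X_{\cdot,S}^TX_{\cdot,S})\inv$ in terms of $B\inv := (\xsn^T\xsn)\inv$, which produces a correction term carrying the denominator $1+x_{nS}^TB\inv x_{nS}$; that denominator is then discarded (it exceeds $1$), and the cross term is handled coordinate-by-coordinate via $e_i^T(\cdot)$ followed by Cauchy--Schwarz on $\sum_m\abs{z_m}\n{x_{mS}}_2$. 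Your resolvent identity $A_n\inv - A\inv = A_n\inv x_{nS}x_{nS}^TA\inv$ sidesteps Sherman--Morrison and its denominator entirely, at the cost of leaving the \emph{full} inverse $A\inv$ in both summands; the eigenvalue monotonicity $\lambda_{min}(A)\ge\lambda_{min}(A_n)$ under the PSD rank-one perturbation --- which you correctly flag as the one non-mechanical ingredient --- is what lets you restate everything in terms of $\lambda_{min}(\xsn^T\xsn)$ as the proposition requires. The two arguments land on the identical bound; yours is marginally cleaner since nothing needs to be ``dropped'' from a denominator, while the paper's keeps all quantities expressed in the leave-one-out inverse from the start.
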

\begin{proof}
  We can rewrite $J_z = (X_{\cdot, S}^T X_{\cdot, S})\inv X_{\cdot, S}^T z$ by noting that $X_{\cdot, S}^T X_{\cdot, S}$ and $\xsn^T \xsn$ differ by a rank one update and then applying the Sherman-Morrison formula:
\begin{align} J_z &= (X_{\cdot, S}^T X_{\cdot, S})\inv X_{\cdot, S}^T z \\
  &= \left( (\xsn^T \xsn)\inv - \frac{(\xsn^T \xsn)\inv x_{nS} x_{nS}^T (\xsn^T \xsn)\inv}{1 + x_{nS}^T (\xsn^T \xsn)\inv x_{nS}} \right) X_{\cdot, S}^T z \\
  &= \left( J_{nz} + (\xsn^T \xsn)\inv x_{nS} z_n \right) - \frac{(\xsn^T \xsn)\inv x_{nS} x_{nS}^T (\xsn^T \xsn)\inv}{1 + x_{nS}^T (\xsn^T \xsn)\inv x_{nS}} X_{\cdot, S}^T z
\end{align}
To cleanup notation a bit, let $B := \xsn^T \xsn$. We can continue to rewrite the above as:
\begin{align}
  &= \left( J_{nz} + B\inv x_{nS} z_n \right) - \frac{B\inv x_{nS} }{1 + x_{nS}^T B\inv x_{nS}} \sum_{m=1}^N z_m x_{nS}^T B\inv x_{mS}
\end{align}
Now, we are interested in $\n{J_{nz} - J_z}_1$, which we will bound by subtracting $J_{nz}$ from both sides of the above equation and then examine each coordinate by multiplying by the $i$th unit vector $e_i$:
\begin{align} \abs{e_i^T(J_{nz} - J_z)} &\leq \abs{e_i^T B\inv x_{nS}}\abs{z_n} + \frac{\abs{e_i^T B\inv x_{nS}}}{1 + x_{nS}^T B\inv x_{nS}} \sum_{m=1}^N \abs{z_m} \abs{x_{nS}^T B\inv x_{mS}} \\
  &\leq \abs{z_n} \lambda_{max}(B\inv) \n{x_{nS}}_2 + \frac{\lambda^2_{max}(B\inv) \n{x_{nS}}_2^2}{1 + \lambda_{min}(B\inv) \n{x_{nS}}^2_2} \sum_{m=1}^N \abs{z_m} \n{x_{mS}}_2
\end{align}
The $\lambda_{min}(B\inv) \n{x_{nS}}_2^2$ is strictly positive, so we can drop it from the denominator for a further upper bound. Using the fact that, for the positive semidefinite matrix $B$ we have $\lambda_{min}(B\inv) = 1/\lambda_{max}(B)$ and $\lambda_{max}(B\inv) = 1/\lambda_{min}(B)$, we get:
\begin{equation} \abs{e_i^T(J_{nz} - J_z)} \leq \frac{\abs{z_n} \n{x_{nS}}_2}{\lambda_{min}(B)} + \frac{\n{x_{nS}}_2^2}{\lambda^2_{min}(B)}  \sum_{m=1}^N \abs{z_m} \n{x_{mS}}_2. \end{equation}
Finally, use Cauchy-Schwarz to get $\sum_{m=1}^N \abs{z_m} \n{x_{mS}}_2 \leq \n{z}_2 \n{X_{\cdot, S}}_2$, where $\n{X_{\cdot,S}}_2 := \left(\sum_{m=1}^N \sum_{s \in S} x_{ms}^2\right)^{1/2}$. Notice that our upper bound is now independent of the index $i$; this means we have a bound on any coordinate $i$ of $\abs{(J_{nz} - J_z)}$. So, multiplying this bound by $\deff$ upper bounds $\n{J_{nz} - J_z}_1$, which gives the result.
\end{proof}

To get a high probability upper bound on $\n{J_{nd}}_1$, the idea will be to use $\| J_{nd}\|_1 \leq \| J_d \|_1 + \| J_{nd} - J_d\|_1$, and then put high probability bounds on the bound given by \cref{lemmaIncoherence}.

\begin{lm} \label{linearRegressionIncohBound}
  Take \cref{assum:randomX,assum:incoherenceAssumption,assum:linearRegressionY}. Then, for the scalar $\MJLin$ defined in \cref{linearRegressionTheorem}, we have:
  \begin{equation} \Pr\left[ \max_{n=1,\dots, N} \max_{d \in S^c} \n{J_{nd}}_1 \geq 1 - \alpha + \MJLin \right] \leq 10e^{-25}, \end{equation}
  where $J_{nd}$ is defined in \cref{JndDef} above.
\end{lm}
\begin{proof}
  First, for any $n$ and $d$, we have $\| J_{nd}\|_1 \leq \| J_d \|_1 + \| J_{nd} - J_d\|_1$. We can upper bound $\|J_{nd} - J_d\|_1$ using \cref{lemmaIncoherence} and then apply a high probability upper bound. Following the same idea of conditioning and peeling off terms as in the proof of \cref{linearRegressionLambdaMin}, we can condition on the following events, the complement of each of which has a small constant probability:
\begin{align}
  &\set{\min_n \lambda_{min}(\xsn^T \xsn) \geq N - 3Cc_x^2 \sqrt{N}\big(\sqrt{\deff} + 5\big)} \label{linearRegressionLambdaMinEvent}\\
  &\set{\n{X_{\cdot,S}}_2 \leq \sqrt{N\deff} + \sqrt{50Cc_x^4}} \\
  &\set{\max_n \n{x_{nS}}_2 \leq \sqrt{\deff} + \sqrt{50Cc_x^4} + \sqrt{2Cc_x^4 \log N}} \\
  &\set{\max_{d \in S^c} \n{X_{\cdot, d}}_2 \leq \sqrt{N} + \sqrt{50Cc_x^4} + \sqrt{2Cc_x^4 \log(D - \deff)}} \\
  &\set{\max_{n} \max_{d \in S^c} \abs{x_{n, d}} \leq \sqrt{50Cc_x^2} + \sqrt{2c_x^2 \log(N(D - \deff))}} \\
  &\set{\max_n \n{x_{nS}}_2^2 \leq \deff + c_x^2 \deff (\log N + 26)} 
\end{align}

The probability of the complement of the first event is $\leq 3e^{-25}$ by \cref{linearRegressionLambdaMin}, the second is $\leq e^{-25}$ by noting that $\n{X_{\cdot,S}}_2 - \sqrt{N\deff}$ is a $Cc_x^2$-sub-Gaussian random variable and applying a standard sub-Gaussian bound, the third is $\leq e^{-25}$ by applying \cref{normBound}, the fourth is $\leq e^{-25}$ by the same reasoning as the third, and the fifth is $\leq 2e^{-25}$ by the first part of \cref{subEsubGMax}. Finally, the sixth is $\leq e^{-25}$ by noting that $\| x_{nS}\|_2^2$ is a $c_x^2 \deff$-sub-Exponential random variable, to which we can apply \cref{subEsubGMax}. All in all, these probabilities sum up to $9e^{-25}$. Conditioned on all these events, we can upper bound the upper bound on $\n{J_{nd} - J_d}_1$ given by \cref{lemmaIncoherence} to get:
\begin{align*}
  &\n{J_{nd} - J_d}_1 \leq \\
  &\quad \frac{C\deff \left( \sqrt{50c_x^2 } + \sqrt{2c_x^2  \log(N(D - \deff))}\right) \left( \sqrt{\deff} + \sqrt{50c_x^4} + \sqrt{2c_x^4 \log N} \right)}{N - 3c_x^2 \sqrt{N}\big(\sqrt{\deff } + 5\big)} + \nonumber \\
  &\quad \frac{C\deff \left(\deff + \deff c_x^2 (\log N + 26)\right) \left( \sqrt{N} + \sqrt{50c_x^4} + \sqrt{2c_x^4\log(D-\deff )}\right)\left(\sqrt{N\deff } + \sqrt{50c_x^4} \right)}{\big(N - 3c_x^2 \sqrt{N}\big(\sqrt{\deff } + 5\big)\big)^2}
\end{align*}
Call the entire quantity on the right-hand side of this inequality $\MJLin$, and call the union of the above six events the event $F$. Then by conditioning on F and the event $\set{\max_{d\in S^c} \| J_{d}\|_1 < 1- \alpha}$, we get:
\begin{align}
  &\Pr\left[ \max_{n \in [N]} \max_{d \in S^c} \n{J_{nd}}_1 \geq 1 - \alpha +  \MJLin \right] \leq \\
  &\quad\quad \Pr\left[ \max_n \max_{d \in S^c} \n{J_{nd} - J_n}_1 \geq \MJLin  \mid F \right] + \Pr\left[F^c\right] + \Pr\left[ \max_{d \in S^c} \n{J_{d}}_1 \geq 1-\alpha\right]
\end{align}
By the definition of $\MJLin$ above, we know that the first probability is zero, by the argument above and a union bound we know $\Pr[F^c] \leq 9e^{-25}$, and the third is $\leq e^{-25}$ by \cref{assum:incoherenceAssumption}.
\end{proof}

\subsection{Linear regression: bounded gradient}

We need to bound the probability
\begin{align*}
  &\Pr \left[ \max_{n \in [N]} \n{\nabla \objn (\theta^*)}_\infty \geq \frac{\lambda \big(1 - \max_n \max_{d \in S^c} \n{J_{nd}}_1\big)  }{4} \right] \\
  \leq &\Pr\left[ \max_{n \in [N]} \left( \n{\nabla F(\theta^*)}_\infty + \n{\frac{1}{N} \nabla f(x_n^T\theta^*, y_n)}_\infty \right) \geq \frac{\lambda \big(1 -\max_n \max_{d \in S^c} \n{J_{nd}}_1\big) }{4} \right]
\end{align*}
Conditioning on the event that $\| \nabla F(\theta^*)\|_\infty \leq B_G$ for some number $B_G$ and the event that $\max_n \max_{d \in S^c} \n{J_{nd}}_1 \leq 1-\alpha + \MJLin$, we get that this probability is less than or equal to:
\begin{align}
  \leq &\Pr \left[ \max_{n=1,\dots, N} \n{\frac{1}{N} \nabla f(x_n^T \theta^*, y_n)}_\infty \geq \frac{\lambda (\alpha - \MJLin) }{4} - B_G \right] \nonumber \\
  & \;\;\;\;\;\;\;\;\;\; + \Pr\left[ \n{\nabla F(\theta^*)}_\infty \geq B_G \right] + \Pr\left[ \max_n \max_{d \in S^c} \n{J_{nd}}_1 \geq 1-\alpha + \MJLin \right] \label{linearRegressionInitBGBound}
\end{align}

The following proposition gives a reasonable value for $B_G$:
\begin{prop} \label{linearRegressionFullDataBG}
  In the above setup for linear regression,
  \begin{equation} \Pr\left[ \n{\nabla F(\theta^*)}_\infty \geq \left[\frac{c_x^2 c_\eps^2 \log D}{NC} + \frac{25 c_x^2 c_\eps^2}{NC} \right]^{1/2} \right] \leq e^{-25} \end{equation}
\end{prop}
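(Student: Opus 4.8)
The plan is to write the gradient explicitly and thereby reduce the claim to a maximal deviation bound for a sum of sub-Exponential random variables, for which the machinery already assembled in \cref{vhGeneralMax} applies. For linear regression with squared loss, $\nabla_\theta f(x_n^T\theta^*, y_n) = (x_n^T\theta^* - y_n)x_n = -\eps_n x_n$, since $x_n^T\theta^* - y_n = -\eps_n$ under \cref{assum:linearRegressionY}. Hence $\nabla F(\theta^*) = -(1/N)\sum_{n=1}^N \eps_n x_n$, whose $d$th coordinate is $Z_d := -(1/N)\sum_{n=1}^N \eps_n x_{nd}$, and $\n{\nabla F(\theta^*)}_\infty = \max_{d\in[D]}\abs{Z_d}$. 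So it suffices to produce a tail bound on this maximum at the stated level.

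The key structural observation is that each summand $\eps_n x_{nd}$ is a product of the independent $c_\eps$-sub-Gaussian variable $\eps_n$ and the $c_x$-sub-Gaussian variable $x_{nd}$, hence is sub-Exponential with parameter of order $c_x c_\eps$. Concretely, I would bound its moment generating function by first integrating out $\eps_n$ using its sub-Gaussian MGF, giving $E[\exp(s\eps_n x_{nd})] \le E[\exp(s^2 c_\eps^2 x_{nd}^2/2)]$, and then using the sub-Exponential MGF of $x_{nd}^2$ (valid for $\abs{s}$ up to order $1/(c_x c_\eps)$, exactly as in the proof of \cref{subEsubGMax}) to conclude $E[\exp(s\eps_n x_{nd})] \le \exp(C s^2 c_x^2 c_\eps^2)$ in that range. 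Since the data are independent across $n$, the MGF of $Z_d$ factorizes, yielding $\log E[\exp(\lambda Z_d)] \le \psi(\lambda)$ for a Bernstein-type $\psi$ that is quadratic, $\psi(\lambda) = C\lambda^2 c_x^2 c_\eps^2/N$, for $\lambda$ up to order $N/(c_x c_\eps)$, with $\psi(0)=\psi'(0)=0$.

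With this MGF control in hand, I would apply the maximal inequality \cref{vhGeneralMax} to the collection $\{Z_d\}_{d\in[D]}$ and, to pass from $\max_d Z_d$ to $\max_d \abs{Z_d}$, also to $\{-Z_d\}$, which satisfies the same bound by the two-sided nature of the sub-Gaussian MGF estimate; the resulting factor of two is absorbed into the constant. Taking $u=25$ in \cref{vhGeneralMax} gives a threshold $\psi^{*-1}(\log D + 25)$ with tail probability $e^{-25}$. For the quadratic $\psi$ above one computes $\psi^{*-1}(x) = \sqrt{2x c_x^2 c_\eps^2/(CN)}$, so that with $x = \log D + 25$ and after renaming the universal constant this threshold is exactly $[c_x^2 c_\eps^2\log D/(NC) + 25 c_x^2 c_\eps^2/(NC)]^{1/2}$, matching the statement.

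The main obstacle is ensuring that the value of $\lambda$ attaining the Legendre transform stays within the quadratic (sub-Gaussian) regime of $\psi$ rather than crossing into its linear part; otherwise the threshold would pick up an extra $O((\log D)/N)$ term and the clean form above would fail. This is precisely where \cref{assum:DDeffScaling}(1), $D = o(e^N)$, enters: it forces $\log D + 25 = o(N)$, so the deviation level is $o(c_x c_\eps)$ and the optimizing $\lambda$ indeed lies in the range where $\psi$ is quadratic, justifying the purely sub-Gaussian bound. The remaining bookkeeping -- tracking the exact constant $C$ through the product-MGF estimate and the Legendre dual -- is routine.
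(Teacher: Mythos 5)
Your proposal is correct and matches the paper's proof in all essentials: the paper likewise works coordinate-wise with $(\nabla F(\theta^*))_d = (1/N)\sum_n \eps_n x_{nd}$, notes each summand is $c_x c_\eps$-sub-Exponential, applies Bernstein's inequality (Theorem 2.8.1 of \citet{vershynin:2017:hdpBook}) at level $e^{-25-\log D}$, and union bounds over the $D$ coordinates --- which is precisely what your application of \cref{vhGeneralMax} to $\{\pm Z_d\}$ packages into a single step (your handling of the two-sided factor of two by absorbing it into the constant is also fine). The one substantive point you add is making explicit that the quadratic (sub-Gaussian) branch of the Bernstein-type bound must be the active one, which requires $\log D + 25 = o(N)$ via \cref{assum:DDeffScaling}; the paper's proof needs this too for the stated square-root form of the threshold but leaves it implicit.
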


\begin{proof}
  The $d$th coordinate of the gradient is $(\nabla F(\theta^*))_d = 1/N \sum_n \eps_n x_{nd}$. First, we have that $1/N \sum_n \eps_n x_{nd}$ is a $c_x c_\eps$-sub-Exponential random variable. By Bernstein's inequality (see Theorem 2.8.1 from \citet{vershynin:2017:hdpBook}), we have:
  $$ \Pr\left[ \frac{1}{N} \abs{\sum_{n=1}^N \eps_n x_{nd}} \geq \left[\frac{c_x^2 c_\eps^2 \log D}{NC} + \frac{25 c_x^2 c_\eps^2}{NC} \right]^{1/2} \right] \leq e^{-25 - \log D} $$
  If we union bound over the $D$ dimensions of $\nabla F(\theta^*)$, we get that the probability in the proposition's statement is $\leq De^{-25 - \log D}= e^{-25}$, as claimed.  
\end{proof}

Now we can prove the lemma we need, which bounds the probability that any $\n{\nabla \objn(\theta^*)}_\infty$ is large:

\begin{lm} \label{linearRegressionBG}
  For the above setup for linear regression and the $\lambda$ given in \cref{linearRegressionTheorem}, we have:
  \begin{equation} \Pr \left[ \max_{n=1,\dots, N} \n{\nabla \objn(\theta^*)}_\infty \geq \frac{\lambda \big(1 - \max_n \max_{d \in S^c} \n{J_{nd}}_1 \big) }{4} \right] \leq 13e^{-25} \end{equation}
\end{lm}
\begin{proof}
  We can first apply the bound worked out in \cref{linearRegressionInitBGBound}. Picking $B_G$ to be the value given in \cref{linearRegressionFullDataBG}, the second probability is $\leq e^{-25}$ by \cref{linearRegressionFullDataBG}, and the third is $\leq 10e^{-25}$ by \cref{linearRegressionIncohBound}. To analyze the first probability, note that we can write the event as:
  $$ \Pr\left[\frac{1}{N} \max_n \max_d \abs{\eps_n x_{nd}} \geq \frac{\lambda (\alpha - \MJLin) }{4} - B_G \right]. $$
  Looking at the form of $\lambda$ given in \cref{linearRegressionTheorem}, we get that this is equal to:
  $$ = \Pr\left[\frac{1}{N} \abs{\max_n \max_d \eps_n x_{nd}} \geq 4c_x c_\eps (\log(ND) + 26) \right]. $$
  The event we're considering is just the absolute value of the max of $ND$ sub-Exponential variables with parameter $c_x c_\eps$. Plugging into \cref{subEsubGMax} gives that this probability is $\leq 2e^{-25}$.
\end{proof}

\subsection{Linear regression: $\lambda$ small enough}

To check the bound in \cref{assum:lambdaSmall}, we need to know the LSSC constant $K$ for linear regression:

\begin{prop}[\citep{li:2015:sparsistency}] \label{linearRegressionLSSC}
  For the linear regression setup in \cref{linearRegressionTheorem-app}, the loss $F(\theta)$ satisfies the $(\theta^*, N_{\theta^*})$ LSSC with constant $K=0$ for any $\theta^*$, $N_{\theta^*}$, and any data $X,Y$.
\end{prop}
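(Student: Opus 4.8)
The plan is to observe that for linear regression the loss is quadratic in $\theta$, so its Hessian is constant and all third derivatives vanish identically; the LSSC bound then holds with $K = 0$ trivially, for any $\theta^*$, any neighborhood $N_{\theta^*}$, and any data. The only thing to recognize is that squared-error loss has a $\theta$-independent Hessian.

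First I would write down the loss explicitly. Under \cref{assum:linearRegressionY}, linear regression corresponds to $f(z, y) = \tfrac{1}{2}(z - y)^2$, so that $F(\theta) = \tfrac{1}{2N} \sum_{n=1}^N (x_n^T \theta - y_n)^2$. This is a quadratic polynomial in $\theta$. Differentiating twice gives $\nabla^2 F(\theta) = \tfrac{1}{N} \sum_{n=1}^N x_n x_n^T = \tfrac{1}{N} X^T X$, which does not depend on $\theta$ at all. The key point is that this holds for any data $X, Y$, since the Hessian of the squared-error objective never involves $Y$ and is constant in $\theta$ regardless of $X$.

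Next I would plug this into the definition of the third-order directional derivative. Since $\nabla^2 F(\theta + t u) = \nabla^2 F(\theta) = \tfrac{1}{N} X^T X$ for every $t$ and every direction $u$, the difference quotient defining $D^3 F(\theta)[u] = \lim_{t \to 0} \big( \nabla^2 F(\theta + t u) - \nabla^2 F(\theta) \big) / t$ is identically zero, so $D^3 F(\theta)[u]$ is the $D \times D$ zero matrix for all $u$. Contracting with $u$ and $e_j$ then yields $D^3 F(\theta^* + \delta)[u, u, e_j] = u^T \big( D^3 F(\theta^* + \delta)[u] \big) e_j = 0$ for every $u \in \R^D$, every coordinate index $j$, and every $\delta$ with $\theta^* + \delta \in N_{\theta^*}$. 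Hence $\abs{D^3 F(\theta^* + \delta)[u, u, e_j]} = 0 \leq K \n{u}_2^2$ is satisfied with $K = 0$, which is exactly the LSSC with constant zero.

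There is essentially no obstacle here: the entire content is the observation that the squared-error objective is quadratic, so its Hessian is constant and its third derivative vanishes. The result is immediate and, because $\tfrac{1}{N} X^T X$ is constant in $\theta$ no matter what $X$ and $Y$ are, it holds for arbitrary $\theta^*$, $N_{\theta^*}$, and data, as claimed. This is also consistent with the earlier remark (end of \cref{app-modifiedDerivation}) that $\initNS$ is exact on quadratic objectives such as $\ell_2$-regularized linear regression.
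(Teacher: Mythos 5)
Your proof is correct and follows the same route as the paper's: the squared-error loss is quadratic in $\theta$, so all third derivatives vanish and the LSSC inequality holds trivially with $K = 0$ for any $\theta^*$, $N_{\theta^*}$, and data. You merely spell out the constant-Hessian computation and the difference quotient in more detail than the paper's one-line argument.
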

\begin{proof}
  This follows from the fact that $F(\theta) = \frac{1}{2} \n{X\theta - Y}_2^2$ has zero third derivatives, implying that $D^3F(\theta)[u,u,e_j] = 0$ for any  $\theta, u \in \R^D$ and coordinate vector $e_j \in \R^D$.
\end{proof}

As linear regression has a LSSC constant $K$ that is deterministically equal to zero, the only constraint implied by the bound in \cref{assum:lambdaSmall} is that $\lambda < \infty$, which is always satisfied by the value of $\lambda$ given in \cref{linearRegressionTheorem-app}.


\subsection{Proof of \cref{logisticRegressionTheorem} (Logistic Regression)}

Recall \cref{assum:randomX,assum:logisticRegressionY}: we assume a logistic regression model such that the responses $y_n \in \set{-1,1}$ with $\Pr\left[y_n = 1\right] = 1/(1 + e^{-x_n^T\theta^*})$. The derivatives are slightly more complicated here than in the case of linear regression. In particular, defining:
\begin{equation} \dnonestar := \frac{-y_n}{1 + e^{y_n x_n^T \theta^*}}, \quad \dntwostar := \frac{e^{x_n^T\theta^*}}{(1 + e^{x_n^T\theta^*})^2}, \end{equation}
the derivatives of $F$ are:
\begin{equation} \nabla_\theta F(\theta^*) = \frac{1}{N} \sum_{n=1}^N \dnonestar x_n , \quad \nabla^2_\theta F(\theta^*) = \frac{1}{N} \sum_{n=1}^N \dntwostar x_n x_n^T. \end{equation}
For comparison, things were easier for linear regression because $\dntwostar = 1$ and $\dnonestar = \eps_n$ for some sub-Gaussian noise $\eps_n$. Still, we will be able to extend basically all our proof techniques for linear regression by using the fact that $\abs{\dntwostar}$ and $\abs{\dnonestar}$ are both $\leq 1$, allowing us to drop them in many of our upper bounds. This will allow us to prove a very similar result to \cref{linearRegressionTheorem}. Again, we will let $C$ denote an absolute constant independent of any aspect of the problem ($N,D,\deff,c_x,$) that will change from line to line (e.g.\ we may write $5C^2 = C$).

\begin{thm}
\label{logisticRegressionTheorem-app}
Take \cref{assum:randomX,assum:incoherenceAssumption,assum:logisticRegressionY,assum:DDeffScaling,assum:logisticRegressionLambdaMin,assum:thetaMin-app}. Suppose the regularization parameter is set as:
  \begin{align}
    \lambda \geq \frac{C}{\alpha - \MJLogr} \left( \sqrt{c_x^2 \frac{25 + \log D}{N}} + \frac{\sqrt{2c_x^2 \log(ND)} + \sqrt{50c_x^2}}{N} \right),
   \end{align}
  where $C$ is a constant in $N,D,$ and $c_x$, and $\MJLogr$ is defined similarly to $\MJLin$ from \cref{linearRegressionTheorem-app}, but with different denominators:
  \begin{align}
  &\MJLogr =  \nonumber \\
  &\quad  \frac{C \deff \left( \sqrt{50c_x^2} + \sqrt{2c_x^2  \log(N(D - \deff))}\right) \left( \sqrt{\deff} + \sqrt{50c_x^4} + \sqrt{2c_x^4 \log N} \right)}{L_{min} - c_x^2 \sqrt{N}\big(\sqrt{\deff } + 5\big)} + \nonumber \\
  &\quad \frac{C \deff \left(\deff + \deff c_x^2 (\log N + 26)\right) \left( \sqrt{N} + \sqrt{50c_x^4} + \sqrt{2c_x^4\log(D-\deff )}\right)\left(\sqrt{N\deff } + \sqrt{50c_x^4} \right)}{\big(L_{min} - c_x^2 \sqrt{N}\big(\sqrt{\deff } + 5\big)\big)^2} \label{MJLogr}
\end{align}
  Then for $N$ sufficiently large, \cref{condition:supportStable} holds with probability at least $1 - 43e^{-25}$, where the probability is over the random data $\set{(x_n, y_n)}_{n=1}^N$.
\end{thm}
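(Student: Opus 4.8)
The plan is to prove \cref{logisticRegressionTheorem-app} exactly as in the linear regression case: reduce \cref{condition:supportStable} to the five conditions from \citet{li:2015:sparsistency}. By \cref{prop-supportRecovery}, if \cref{assum:LSSC,assum:lambdaMin,assum:incoherence,assum:boundedGradient,assum:lambdaSmall} hold for every leave-one-out objective $\objn$, then $\supp\thetan = \hat S = S$ for all $n$, which is precisely \cref{condition:supportStable}. So I would union-bound over the probability that any of these conditions fails for some $n$, splitting the bad event into a minimum-eigenvalue term, an incoherence term, a bounded-gradient term, and the $\lambda$-small term, in direct analogy to the decomposition used in the proof of \cref{linearRegressionTheorem-app}. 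The organizing principle throughout is the observation already recorded in the excerpt: since $\abs{\dnonestar} \leq 1$ and $\abs{\dntwostar} \leq 1$, these weights can simply be dropped in every upper bound, so the sub-Gaussian and sub-Exponential machinery (\cref{subEsubGMax,normBound,lemmaIncoherence}) transfers essentially verbatim, at the cost of replacing the deterministic eigenvalue lower bound $N$ by the assumed $L_{min}$.

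For the eigenvalue step I cannot repeat the matrix-concentration argument of \cref{linearRegressionLambdaMin}, because the weighted Hessian $\nabla^2 F(\theta^*)_{SS} = (1/N)\sum_n \dntwostar x_{nS} x_{nS}^T$ can be nearly singular when many $\abs{x_n^T\theta^*}$ are large and the weights $\dntwostar$ shrink. This is exactly why I would invoke \cref{assum:logisticRegressionLambdaMin} directly to get $\lambda_{min}(\nabla^2 F(\theta^*)_{SS}) \geq L_{min}$ with probability at least $1 - e^{-25}$, and then pass to the leave-one-out Hessians by the same rank-one-deletion bound as in \cref{fullLambdaMinBound}: removing data point $n$ lowers the minimum eigenvalue by at most $\dntwostar \n{x_{nS}}_2^2 \leq \n{x_{nS}}_2^2$, which is controlled uniformly in $n$ by \cref{subEsubGMax}. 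For the incoherence step I would reuse \cref{lemmaIncoherence}, applied now with the weighted design, and drop the $\dntwostar$ factors; the same six high-probability events as in \cref{linearRegressionIncohBound} then yield the bound $\max_n \max_{d \in S^c} \n{J_{nd}}_1 \leq 1 - \alpha + \MJLogr$, where $\MJLogr$ in \cref{MJLogr} differs from $\MJLin$ only in that the denominators carry $L_{min} - c_x^2\sqrt{N}(\sqrt{\deff}+5)$ in place of $N - 3c_x^2\sqrt{N}(\sqrt{\deff}+5)$, reflecting the eigenvalue bound now coming from \cref{assum:logisticRegressionLambdaMin}.

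The bounded-gradient step is where the logistic structure genuinely enters. Writing $\nabla \objn(\theta^*) = \nabla F(\theta^*) - (1/N)\dnonestar x_n$, I would first bound $\n{\nabla F(\theta^*)}_\infty$ coordinatewise: each summand $\dnonestar x_{nd}$ is a bounded factor times a $c_x$-sub-Gaussian variable, hence $c_x$-sub-Gaussian, and crucially its mean is zero because $\theta^*$ minimizes the population loss, so $\mathbb{E}[\dnonestar x_{nd}] = 0$. A sub-Gaussian tail bound plus a union over the $D$ coordinates gives the $\sqrt{c_x^2(25 + \log D)/N}$ term in the required $\lambda$. The correction $(1/N)\dnonestar x_n$ contributes $\n{\cdot}_\infty \leq (1/N)\max_d \abs{x_{nd}}$, and taking the maximum over $ND$ sub-Gaussian entries via \cref{subEsubGMax} produces the $(\sqrt{2c_x^2\log(ND)} + \sqrt{50c_x^2})/N$ term. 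For the LSSC and $\lambda$-small conditions, the logistic LSSC constant $K$ is a bounded deterministic constant (the logistic analogue of \cref{linearRegressionLSSC}), so \cref{assum:lambdaSmall} requires $\lambda < L_{min}^2 \gamma / (4(\gamma+4)^2 \deff K)$; since \cref{assum:logisticRegressionLambdaMin} gives $L_{min} \geq CN$ while the prescribed $\lambda$ decays like $\sqrt{\log D / N}$, the right-hand side grows and this holds for $N$ large.

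Finally I would verify $\MJLogr = o(1)$. Because $L_{min} \geq CN$ for large $N$, each denominator in \cref{MJLogr} scales like $N$, matching \cref{MJDef}; the $o(1)$ computation from the proof of \cref{linearRegressionTheorem-app} — which used only $D = o(e^N)$, $\deff = o([N/\log N]^{2/5})$, and $\deff^{3/2}\sqrt{\log D} = o(N)$ from \cref{assum:DDeffScaling} — then applies unchanged. Tallying the failure probabilities across the eigenvalue, incoherence, gradient, and concentration events gives the stated $1 - 43e^{-25}$. I expect the main obstacle to be the minimum eigenvalue of the weighted Hessian: unlike linear regression, where $\dntwostar \equiv 1$ makes matrix concentration directly applicable, the logistic weights can be exponentially small, so no clean finite-sample lower bound is available and the difficulty is genuinely absorbed into \cref{assum:logisticRegressionLambdaMin}. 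The remaining real work is ensuring the mean-zero gradient concentration and the weight-dropping upper bounds are tight enough that $\MJLogr$ still vanishes under the same growth restrictions on $\deff$ and $D$.
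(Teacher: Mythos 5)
Your overall architecture matches the paper's proof: a union bound over the five conditions of \citet{li:2015:sparsistency} via \cref{prop-supportRecovery}, with $\abs{\dnonestar}, \abs{\dntwostar} \leq 1$ used to drop the logistic weights, \cref{assum:logisticRegressionLambdaMin} absorbing the minimum-eigenvalue difficulty (the paper agrees with you that no finite-sample matrix-concentration bound is available here and assumes it away), the rank-one deletion bound of \cref{fullLambdaMinBound} for the leave-one-out Hessians, and \cref{lemmaIncoherence} applied to the rescaled design $\bar x_n = \sqrt{\dntwostar}\, x_n$ with the $L_{min}$-based eigenvalue event replacing \cref{linearRegressionLambdaMinEvent} -- exactly as in \cref{logisticRegressionLambdaMin,logisticRegressionIncohBound}. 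Your $\MJLogr = o(1)$ verification via $L_{min} \geq CN$ also mirrors the paper. Your bounded-gradient step is a mild variant: you center $\dnonestar x_{nd}$ using the population first-order condition $\mathbb{E}[\dnonestar x_{nd}] = 0$ before applying a sub-Gaussian tail bound, whereas the paper's \cref{logisticRegressionBGProp} simply bounds $\abs{\sum_n \dnonestar x_{nd}} \leq \sum_n \abs{x_{nd}}$ and invokes Hoeffding; both routes target the same $\sqrt{c_x^2 (25 + \log D)/N}$ term, and your centered version is, if anything, the cleaner way to justify that rate.

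The genuine gap is in your LSSC/$\lambda$-small step. You assert that the logistic LSSC constant $K$ is ``a bounded deterministic constant,'' but it is not: by \cref{logisticRegressionLSSC}, $K = \frac{1}{4} \left(\max_n \n{x_n}_\infty\right) \left(\max_n \n{x_{nS}}_2^2\right)$ is data-dependent, and under \cref{assum:randomX} the covariates are sub-Gaussian and hence unbounded, so $K$ grows with $N$ -- with high probability like $\deff \log N \sqrt{\log(ND)}$. This is precisely why the paper needs the separate concentration result \cref{logisticRegressionKBound}, which bounds $K \leq \frac{1}{4}\left(\sqrt{2c_x^2 \log(ND)} + \sqrt{50c_x^2}\right)\left(\deff + c_x^2 \deff(\log N + 26)\right)$ with probability at least $1 - 3e^{-25}$, followed by a dedicated lemma verifying that the prescribed $\lambda$ lies below $L_{min}^2 \gamma / (4(\gamma+4)^2 \deff K)$ with this \emph{random} $K$ plugged in, using $L_{min} \geq CN$ and \cref{assum:DDeffScaling} so that the right-hand side scales like $N^2 / (\deff^2 \log N \sqrt{\log(ND)}) \to \infty$. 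Your qualitative conclusion survives this fix, since the bound still diverges while $\lambda \to 0$, but as written the step is false, and your failure-probability accounting cannot reach the stated $43e^{-25}$ without the events controlling $K$: the paper's tally is $3e^{-25}$ (eigenvalue, \cref{logisticRegressionLambdaMin}) $+\, 10e^{-25}$ (incoherence, \cref{logisticRegressionIncohBound}) $+\, 14e^{-25}$ (gradient, \cref{logisticRegressionBG}) $+\, 16e^{-25}$ ($\lambda$-small, which folds in the $K$ concentration), and treating $K$ as deterministic silently drops that last block.
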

\begin{proof}
  The proof is exactly the same as that of \cref{linearRegressionTheorem-app} -- we bound the probability that any of \cref{assum:LSSC,assum:lambdaMin,assum:incoherence,assum:boundedGradient,assum:lambdaSmall} are violated by a union bound -- except that we use \cref{logisticRegressionLambdaMin,logisticRegressionIncohBound,logisticRegressionBG,logisticRegressionLSSC} below to bound each term. Note that we have $\MJLogr = o(1)$ by \cref{assum:DDeffScaling,assum:logisticRegressionLambdaMin}.
\end{proof}

\subsection{Logistic regression: lambda min}

\begin{lm} \label{logisticRegressionLambdaMin}
Take \cref{assum:logisticRegressionLambdaMin}. Further suppose that $\deff$ grows as $o(N/\log(N))$. Then for $N$ sufficiently large:
\begin{equation} \Pr\left[\min_{n = 1,\dots, N}\lambda_{min}(\nabla_\theta^2 \objn(\theta^*)_{SS}) \leq L_{min} - Cc_x^2 \sqrt{N}\big(\sqrt{\deff} + 5 \big) \right] \leq 3e^{-25}, \end{equation}
where $L_{min}$ is the constant from \cref{assum:logisticRegressionLambdaMin}.
\end{lm}
\begin{proof}
  We have by \cref{fullLambdaMinBound} and the fact that $\abs{\dntwostar} \leq 1$ :
  \begin{align*}
    \lambda_{min}(\nabla_\theta^2 \objn(\theta^*)_{SS}) &\geq \lambda_{min}(\nabla_\theta^2 \obj(\theta^*)_{SS}) - \n{x_{nS}}_2^2 \abs{\dntwostar} \\
    &\geq \lambda_{min}(\nabla_\theta^2 \obj(\theta^*)_{SS}) - \n{x_{nS}}_2^2 .
  \end{align*} 
  The rest of the proof is now exactly the same as that of \cref{linearRegressionLambdaMin}.
\end{proof}

\subsection{Logistic regression: incoherence}

We can get exactly the same bound as in \cref{linearRegressionIncohBound}. To do so, we first note that \cref{lemmaIncoherence} is only written to deal with Hessians of the form $X^T X$; however, if we rewrite our data as $\bar x_n := \sqrt{\dntwostar} x_n$, the Hessian for logistic regression is equal to $\bar X^T \bar X$. We can further upper bound the upper bound in \cref{lemmaIncoherence} by noting that $\abs{\dntwostar} \leq 1 \implies \n{\bar x_n}_2 \leq \n{x_n}_2$. Applying this reasoning, we get an identical lemma to \cref{linearRegressionIncohBound}

\begin{lm} \label{logisticRegressionIncohBound}
  Take \cref{assum:randomX,assum:incoherenceAssumption,assum:logisticRegressionY,assum:logisticRegressionLambdaMin}. Then for the scalar $\MJLogr$ defined in \cref{logisticRegressionTheorem}, we have:
  \begin{equation} \Pr\left[ \max_{n=1,\dots, N} \max_{d \in S^c} \n{J_{nd}}_1 \geq 1 - \alpha + \MJLogr \right] \leq 10e^{-25}, \end{equation}
  where $J_{nd}$ is defined in \cref{JndDef}.
\end{lm}
\begin{proof}
  The proof is very similar to that of \cref{linearRegressionIncohBound}. To prove \cref{linearRegressionIncohBound}, we wrote $\|J_{nd}\|_1 \leq \| J_{d}\|_1 + \|J_{nd} - J_d\|_1$. To bound $\|J_d\|_1$ with high probability, we applied \cref{assum:incoherenceAssumption}. To bound $\|J_{nd} - J_d\|_1$, we used the bound from \cref{lemmaIncoherence}, and then conditioned on a number of high-probability events to give an overall bound. We can condition on all of the same events, except we replace the event in \cref{linearRegressionLambdaMinEvent} by:
  \begin{equation}
    \set{\min_{n=1,\dots, N} \lambda_{min}\left( \nabla_\theta^2 \objn_{SS} \right) \geq L_{min} - Cc_x^2 \sqrt{N} (\sqrt{\deff} + 5)}.
  \end{equation}
  By \cref{logisticRegressionLambdaMin}, the complement of this event has probability at most $3e^{-25}$. We condition on the rest of the events in the proof of \cref{linearRegressionIncohBound} and finish the proof along the same lines.
\end{proof}

\subsection{Logistic regression: bounded gradient}

Again, we are interested in bounding:
$$ \Pr\left[ \max_{n = 1, \dots, N} \n{\nabla\objn(\theta^*)}_\infty \geq \frac{\lambda \big(1 - \max_n \max_{d\in S^c} \n{J_{nd}}_1\big)}{4} \right] $$
The same reasoning that led to \cref{linearRegressionInitBGBound} gives us the same bound:
\begin{align}
  \leq &\Pr \left[ \max_{n=1,\dots, N} \n{\frac{1}{N} \nabla_\theta f(x_n^T \theta^*, y_n)}_\infty \geq \frac{\lambda (\alpha - \MJLogr) }{4} - B_G \right] \nonumber \\
  & \;\;\;\;\;\;\;\;\;\; + \Pr\left[ \n{\nabla F(\theta^*)}_\infty \geq B_G \right] + \Pr\left[ \max_n \max_{d \in S^c} \n{J_{nd}}_1 \geq 1-\alpha + \MJLogr \right] \label{logisticRegressionInitBGBound}
\end{align}

Just as in the case of linear regression, we can first pick a reasonable value for $B_G$:

\begin{prop} \label{logisticRegressionBGProp} For the logistic regression setup above, we have:
  \begin{equation}\Pr\left[ \n{\nabla F(\theta^*)}_\infty \geq c_x \sqrt{\frac{25 + \log D}{CN}} \right] \leq 2e^{-25}. \end{equation}
\end{prop}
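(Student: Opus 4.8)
The plan is to mirror the linear-regression argument of \cref{linearRegressionFullDataBG}, but to exploit the deterministic bound $\abs{\dnonestar} \leq 1$ so as to obtain clean \emph{sub-Gaussian} (rather than sub-Exponential) concentration; this is exactly why the target bound has the single-term form $c_x\sqrt{(25+\log D)/(CN)}$ instead of the two-term Bernstein form that appeared for linear regression. Writing the $d$th coordinate as $\left(\nabla F(\theta^*)\right)_d = \frac{1}{N}\sum_{n=1}^N \dnonestar x_{nd}$, I would first record two facts about each summand $\dnonestar x_{nd}$. First, it is mean-zero: conditioning on $x_n$ and using \cref{assum:logisticRegressionY}, a short computation gives $\mathbb{E}[\dnonestar \mid x_n] = 0$ (equivalently, $\theta^*$ being the population minimizer forces $\mathbb{E}[\nabla_\theta f(x^T\theta^*,y)] = 0$), so $\mathbb{E}[\dnonestar x_{nd}] = 0$. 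Second, since $y_n \in \{\pm 1\}$ we have $\abs{\dnonestar} = 1/(1+e^{y_n x_n^T\theta^*}) \leq 1$.

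The key step is to show each summand is $c_x$-sub-Gaussian in the sense of \cref{defineSubGaussian}. Because $\abs{\dnonestar} \leq 1$ we have $(\dnonestar x_{nd})^2 \leq x_{nd}^2$ pointwise, and therefore $\mathbb{E}[\exp((\dnonestar x_{nd})^2/c_x^2)] \leq \mathbb{E}[\exp(x_{nd}^2/c_x^2)] \leq 2$ by \cref{assum:randomX}, which is exactly the claim. This is the logistic analogue of the remark that $\dnonestar$ and $\dntwostar$ can simply be dropped from upper bounds. Since the summands are i.i.d., mean-zero, and $c_x$-sub-Gaussian, their standardized average $\frac{1}{N}\sum_n \dnonestar x_{nd}$ is mean-zero and $Cc_x/\sqrt{N}$-sub-Gaussian for an absolute constant $C$ (sub-Gaussian parameters add in quadrature under independence, and the $1/N$ scaling supplies the $1/\sqrt{N}$).

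From here the argument is routine. A standard two-sided sub-Gaussian tail bound gives, for each fixed $d$, $\Pr[\,\abs{(\nabla F(\theta^*))_d} \geq t\,] \leq 2\exp(-Nt^2/(Cc_x^2))$. Choosing $t = c_x\sqrt{(25+\log D)/(CN)}$ makes the exponent equal to $-(25 + \log D)$, so a union bound over the $D$ coordinates yields $\Pr[\n{\nabla F(\theta^*)}_\infty \geq t] \leq D\cdot 2e^{-25-\log D} = 2e^{-25}$, which is the claimed bound (with $C$ absorbing the various absolute constants). The only genuinely logistic-specific step — and the one requiring care — is verifying mean-zero of $\dnonestar x_{nd}$ from \cref{assum:logisticRegressionY}; everything else is the generic sub-Gaussian maximal inequality already used for linear regression, made cleaner here by the deterministic bound $\abs{\dnonestar} \leq 1$.
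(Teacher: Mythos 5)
Your proposal is correct and uses the same tools as the paper -- exploit $\abs{\dnonestar} \leq 1$, get per-coordinate concentration via Hoeffding, and union bound over the $D$ coordinates -- but it differs at the key step, and in a way worth spelling out. The paper's written proof first bounds $\abs{\sum_n \dnonestar x_{nd}}$ by $\sum_n \abs{x_{nd}}$ and then invokes Hoeffding for that sum of absolute values; taken literally, this step is too lossy: $\sum_n \abs{x_{nd}}$ concentrates around $N\,\mathbb{E}\abs{x_{nd}} = \Theta(N)$ (the variables $\abs{x_{nd}}$ are not mean-zero), and since $\log D = o(N)$ under \cref{assum:DDeffScaling}, the threshold $c_x\sqrt{N(25+\log D)/C}$ is $o(N)$, so the intermediate probability in the paper's display tends to one rather than being bounded by $2e^{-25-\log D}$. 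The argument the paper evidently intends -- and the one you give -- keeps the signs: verify $\mathbb{E}[\dnonestar \mid x_n] = 0$ (which you correctly derive from \cref{assum:logisticRegressionY}, or equivalently from first-order optimality of $\theta^*$), observe that $(\dnonestar x_{nd})^2 \leq x_{nd}^2$ pointwise makes each summand $c_x$-sub-Gaussian in the sense of \cref{defineSubGaussian}, and apply the general Hoeffding inequality for mean-zero sub-Gaussian sums directly to $\sum_n \dnonestar x_{nd}$ before union bounding. Your proof also supplies the mean-zero check that the paper never states but implicitly needs for Hoeffding to apply at all. In short: same strategy, but your rendering repairs the one step in the paper's write-up that does not survive a literal reading, and should be viewed as the correct version of the intended argument.
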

\begin{proof}
  The $d$th coordinate of the gradient is $(\nabla F(\theta^*))_d = 1/N \sum_n \dnonestar x_{nd}$, where
  $$ \dnonestar = \frac{-y_n}{1 + e^{y_n x_n^T \theta^*}}.$$
  Noting that this satisfies $\abs{\dnonestar} \leq 1$:
  \begin{align*}
    \Pr &\left[ \lvert \frac{1}{N}\sum_{n=1}^N \dnonestar x_{nd} \rvert \geq c_x \sqrt{\frac{25 + \log D}{CN}} \right] \\
    &\leq \Pr\left[ \sum_{n=1}^N \abs{x_{nd}} \geq c_x \sqrt{N \frac{25 + \log D}{C}} \right] \\
    &\leq 2e^{-25 - \log D} ,
  \end{align*}
  where the final inequality comes from noting that $\abs{x_{nd}}$ is also $c_x$-sub-Gaussian and using Hoeffding's inequality (Theorem 2.6.2 from \citet{vershynin:2017:hdpBook}). Union bounding over all $D$ dimensions of $\nabla F(\theta^*)$ gives the result.
\end{proof}
\begin{lm} \label{logisticRegressionBG}
  For the above setup for logistic regression and the $\lambda$ given in \cref{logisticRegressionTheorem}, we have:
  \begin{equation} \Pr \left[ \max_{n \in [N]} \n{\nabla \objn(\theta^*)}_\infty \geq \frac{\lambda \big(1 - \max_n \max_{d \in S^c} \n{J_{nd}}_1 \big) }{4} \right] \leq 14e^{-25} \end{equation}
\end{lm}
\begin{proof}
  Just as in the proof of \cref{linearRegressionBG}, we will apply the upper bound in \cref{logisticRegressionInitBGBound} and then bound each term. The second probability in \cref{logisticRegressionInitBGBound} is $\leq 10e^{-25}$ by \cref{logisticRegressionIncohBound}. The second term is $\leq 2e^{-25}$ by \cref{logisticRegressionBGProp}. We now just need to analyze the first term:
  \begin{align*}
    \Pr\left[\max_{n \in [N]}\n{\frac{1}{N} \nabla_\theta f(x_n^T\theta^*, y_n)}_\infty \geq \frac{\lambda (\alpha - \MJLogr)}{4} - B_G \right].
  \end{align*}
  Plugging in the $\lambda$ given in \cref{logisticRegressionTheorem-app}, and using $\| \nabla_\theta f(x_n^T\theta^*, y_n)\|_\infty \leq \| x_n \|_\infty$, we can further upper bound this probability:
  $$ \leq \Pr\left[ \max_{n\in [N]} \n{x_n}_\infty \geq \left( \sqrt{2Cc_x^2 \log N} + \sqrt{50Cc_x^2} \right)\right].$$
  By part 1 of \cref{subEsubGMax}, this probability is $\leq 2e^{-25}$.
\end{proof}

\subsection{Logistic regression: $\lambda$ small enough}

In the case of linear regression, the LSSC held with $K=0$, so there was no work to be done in checking the bound in \cref{assum:lambdaSmall}; this is not the case for logistic regression. \citet{li:2015:sparsistency} prove that the LSSC holds here:

\begin{prop}[\citep{li:2015:sparsistency}] \label{logisticRegressionLSSC}
  The logistic regression model given above satisfies the $(\theta^*, N_{\theta^*})$ LSSC for any $\theta^*$ and $N_{\theta^*} = \R^D$ with a data-dependent constant $K = 1/4 (\max_n \n{x_n}_\infty)(\max_n \n{x_{nS}}_2^2)$. 
\end{prop}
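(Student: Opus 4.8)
The plan is to compute the third derivative of the logistic loss directly and bound it uniformly in $z$, exploiting the GLM structure of $F$. Writing $f(z,y) = \log(1 + e^{-yz})$ so that $F(\theta) = (1/N)\sum_n f(x_n^T\theta, y_n)$, the second derivative gives $\nabla^2_\theta F(\theta) = (1/N)\sum_n f''(x_n^T\theta, y_n)\, x_n x_n^T$. Differentiating along $u$ (the chain rule pulls out a factor $x_n^T u$) and then contracting with $u$ and $e_j$ as in the definition of $D^3$,
\begin{equation*}
D^3 F(\theta)[u] = \frac{1}{N}\sum_{n=1}^N f'''(x_n^T\theta, y_n)\,(x_n^T u)\, x_n x_n^T, \qquad D^3 F(\theta)[u,u,e_j] = \frac{1}{N}\sum_{n=1}^N f'''(x_n^T\theta, y_n)\,(x_n^T u)^2\, x_{nj},
\end{equation*}
where $f''' := d^3 f(z,y_n)/dz^3$ is evaluated at $z = x_n^T\theta = x_n^T(\theta^* + \delta)$. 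Because the bound below on $f'''$ is global in $z$, the shift $\delta$ will play no role, which is exactly why we may take $N_{\theta^*} = \R^D$.

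The first key step is to bound $\abs{f'''}$. With $\sigma(t) := 1/(1+e^{-t})$ one has $f''(z,y) = \sigma(yz)(1 - \sigma(yz))$ (this is even in $y$, matching the form of $\dntwostar$), and differentiating once more gives $f'''(z,y) = y\,\sigma(yz)(1-\sigma(yz))(1 - 2\sigma(yz))$. Since $y \in \{\pm 1\}$, $\sigma(1-\sigma) \leq 1/4$, and $\abs{1 - 2\sigma} \leq 1$, we conclude $\abs{f'''(z,y)} \leq 1/4$ for every $z$.

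The second key step is to split the two remaining factors according to the \emph{structured} nature of the LSSC. As the footnote to the definition records, the original condition of \citet{li:2015:sparsistency} restricts the quadratic-form direction $u$ to the subspace of vectors supported on $S$, while the test coordinate $e_j$ is unrestricted over $[D]$. For such $u$ we have $x_n^T u = x_{nS}^T u$, so Cauchy--Schwarz gives $(x_n^T u)^2 \leq \n{x_{nS}}_2^2 \n{u}_2^2$, and trivially $\abs{x_{nj}} \leq \n{x_n}_\infty$. Combining the three bounds,
\begin{equation*}
\abs{D^3 F(\theta)[u,u,e_j]} \leq \frac{1}{4}\cdot\frac{1}{N}\sum_{n=1}^N \n{x_{nS}}_2^2\,\n{x_n}_\infty\,\n{u}_2^2 \leq \frac{1}{4}\Big(\max_n \n{x_{nS}}_2^2\Big)\Big(\max_n \n{x_n}_\infty\Big)\n{u}_2^2,
\end{equation*}
replacing the per-sample average by its maximum and then factoring the product of nonnegative terms. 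This is precisely the LSSC with $K = \tfrac14 (\max_n \n{x_n}_\infty)(\max_n \n{x_{nS}}_2^2)$.

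I expect the only genuine subtlety to be this second step: reconciling the restricted norm $\n{x_{nS}}_2^2$ appearing in the claimed constant with the ``for any $u \in \R^D$'' phrasing of the simplified definition stated in the excerpt. The resolution is to appeal to the original structured form of the LSSC from \citet{li:2015:sparsistency} (their Proposition 3.1 establishes the equivalence between the two formulations), in which the direction $u$ lies in the support subspace while $e_j$ ranges freely; this asymmetry is exactly what yields the product $\max_n\n{x_{nS}}_2^2 \cdot \max_n\n{x_n}_\infty$ rather than the coarser $\max_n\n{x_n}_2^2 \cdot \max_n\n{x_n}_\infty$. Everything else reduces to the one-line derivative identity and the scalar bound $\abs{f'''} \leq 1/4$.
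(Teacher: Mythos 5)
Your computation is correct and is, in substance, exactly the argument this paper omits: the paper's entire proof of \cref{logisticRegressionLSSC} is the one-line citation to Section 6.2 of \citet{li:2015:sparsistency}, so your derivation -- the chain-rule identity $D^3 F(\theta)[u,u,e_j] = \frac{1}{N}\sum_{n} f'''(x_n^T\theta, y_n)(x_n^T u)^2 x_{nj}$, the global scalar bound $\abs{f'''} = \abs{\sigma(1-\sigma)(1-2\sigma)} \leq 1/4$ (which is precisely what lets you take $N_{\theta^*} = \R^D$), and Cauchy--Schwarz on the support -- is a faithful self-contained reconstruction of the cited source rather than a genuinely different route. What your write-up buys over the paper is that it makes visible the one real subtlety, which you correctly identify: the claimed constant $K = \tfrac{1}{4}(\max_n \|x_n\|_\infty)(\max_n \|x_{nS}\|_2^2)$ is \emph{not} compatible with the simplified ``for any $u \in \R^D$'' phrasing of the definition as stated in \cref{app-LSSC}; for unrestricted $u$ one only obtains constants of the form $\max_n\|x_n\|_\infty$ times the largest eigenvalue of $\frac{1}{N}X^T X$, which can vastly exceed $\max_n\|x_{nS}\|_2^2$ when $D \gg N$. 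The resolution you give is the right one -- in \citet{li:2015:sparsistency} the third derivative is only ever applied with direction equal to the primal--dual-witness perturbation $\tilde\theta - \theta^*$, which is supported on $S$, and this is what produces the restricted norm. One small correction in your attribution: the footnote attached to \cref{assum:LSSC} concerns only the choice of neighborhood $N_{\theta^*} = \R^D$, not any restriction on the direction $u$; the restriction on $u$ comes from the original formulation in \citet{li:2015:sparsistency} (the paper itself notes in \cref{app-LSSC} that its stated definition differs from the original, citing their Proposition 3.1), so you should lean on that source rather than the footnote for this step. With that attribution fixed, your proof is complete and matches the cited derivation.
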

\begin{proof}
  This is proved in Section 6.2 of \citet{li:2015:sparsistency}.
\end{proof}

We first show that this random $K$ is not too large with high probability under our random design:
\begin{prop} \label{logisticRegressionKBound}
  For $x_n \in \R^D$ comprised of i.i.d.\ $c_x$-sub-Gaussian random variables, the random variable $K = 1/4 (\max_n \n{x_n}_\infty)(\max_n \n{x_{nS}}_2^2)$ satisfies:
  \begin{equation}\Pr\left[ K \geq \frac{1}{4} \left(\sqrt{2c_x^2  \log(ND)} + \sqrt{50c_x^2} \right)\left( \deff + c_x^2 \deff (\log N + 26)\right) \right] \leq 3e^{-25}  \end{equation}
\end{prop}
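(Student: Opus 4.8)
The plan is to bound $K$ by controlling its two factors, $\max_n \n{x_n}_\infty$ and $\max_n \n{x_{nS}}_2^2$, separately with high probability and then combining via a union bound, relying throughout on \cref{subEsubGMax}.

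First I would handle the factor $\max_n \n{x_n}_\infty = \max_{n \in [N],\, d \in [D]} \abs{x_{nd}}$, which is a maximum over $ND$ entries. Since each $x_{nd}$ is zero-mean $c_x$-sub-Gaussian (\cref{assum:randomX}), I would write $\max_{n,d}\abs{x_{nd}} = \max\left(\max_{n,d} x_{nd},\, \max_{n,d}(-x_{nd})\right)$ and apply the first inequality of \cref{subEsubGMax} to each of the two one-sided maxima over $ND$ sub-Gaussian variables. Because the entries are centered, the $E[Z_n]$ term vanishes; choosing the free parameter $u$ so that the resulting tail equals $e^{-25}$ produces the additive constant $\sqrt{50 c_x^2}$ and the leading term $\sqrt{2 c_x^2 \log(ND)}$. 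A union bound over the two one-sided maxima then gives
\begin{equation*}
  \Pr\left[\max_n \n{x_n}_\infty \geq \sqrt{2c_x^2 \log(ND)} + \sqrt{50 c_x^2}\right] \leq 2e^{-25}.
\end{equation*}

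Next I would bound $\max_n \n{x_{nS}}_2^2$. Each $\n{x_{nS}}_2^2 = \sum_{s \in S} x_{ns}^2$ is a sum of $\deff$ squared $c_x$-sub-Gaussian variables, hence $c_x^2\deff$-sub-Exponential, with mean $E[\n{x_{nS}}_2^2] = \sum_{s\in S} E[x_{ns}^2] = \deff$ by the unit-second-moment part of \cref{assum:randomX}. Applying the second inequality of \cref{subEsubGMax} (with sub-Exponential parameter $c_x^2\deff$) and setting $u = 25$ gives, after absorbing the global constant,
\begin{equation*}
  \Pr\left[\max_n \n{x_{nS}}_2^2 \geq \deff + c_x^2 \deff(\log N + 26)\right] \leq e^{-25}.
\end{equation*}

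Finally I would intersect the two complementary events. On their intersection the product bound
\begin{equation*}
  K = \tfrac{1}{4}\left(\max_n \n{x_n}_\infty\right)\left(\max_n \n{x_{nS}}_2^2\right) \leq \tfrac{1}{4}\left(\sqrt{2c_x^2\log(ND)} + \sqrt{50c_x^2}\right)\left(\deff + c_x^2\deff(\log N + 26)\right)
\end{equation*}
holds deterministically, and a union bound over the two failure events yields the claimed total failure probability $2e^{-25} + e^{-25} = 3e^{-25}$. The calculations are routine given \cref{subEsubGMax}; the only point requiring care is the first factor, where I must use the two-sided decomposition over all $ND$ centered entries so that the mean term drops out and the factor of two in the probability appears correctly — matching both the constant $\sqrt{50c_x^2}$ and the $2e^{-25}$ bookkeeping used elsewhere in the paper.
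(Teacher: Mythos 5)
Your proposal is correct and follows essentially the same route as the paper's proof: bound $\max_n \n{x_n}_\infty$ via the first part of \cref{subEsubGMax} over the $ND$ centered entries (failure probability $2e^{-25}$), bound $\max_n \n{x_{nS}}_2^2$ as a $\deff c_x^2$-sub-Exponential maximum via the second part with $u=25$ (failure probability $e^{-25}$), and combine by a union bound to get $3e^{-25}$. Your explicit two-sided decomposition of the sup-norm maximum and the remark about absorbing the global constant simply make transparent steps the paper leaves implicit.
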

\begin{proof}
  First, \cref{subEsubGMax} implies that $\max_n\n{x_n}_\infty \geq \sqrt{2c_x^2  \log(ND)} + \sqrt{50c_x^2 }$ with probability at most $2e^{-25}$, so the probability we are interested in is bounded by:
  \begin{equation} \leq \Pr\left[ \max_n \n{x_{nS}}_2^2 \geq \deff + c_x^2 \deff(\log N + 26) \right] + 2e^{-25}.  \label{lrTmp} \end{equation}
  Noting that $\n{x_{nS}}_2^2$ is the sum of $\deff$ $c_x^2$-sub-Exponential random variables, $\n{x_{nS}}_2^2$ is a $\deff c_x^2$-sub-Exponential random variable. \cref{subEsubGMax} then gives us that \cref{lrTmp} is bounded above by $3e^{-25}$.
\end{proof}

We can now prove the result we need, which is that $\lambda$ satisfies the upper bound in \cref{assum:lambdaSmall} with high probability.

\begin{lm}
  Take \cref{assum:randomX,assum:incoherenceAssumption,assum:DDeffScaling,assum:logisticRegressionLambdaMin}. Then, for the logistic regression setup in \cref{assum:logisticRegressionY} and $\lambda$ as given in \cref{logisticRegressionTheorem} and large enough $N$, we have:
  \begin{equation} \Pr\left[ \lambda \geq \frac{\min_n \lambda_{min}^2(\nabla^2_\theta \objn(\theta^*)_{SS})}{4 \left( \big(1- \max_n \max_{d \in S^c} \n{J_{nd}}_1\big) + 4\right)^2 }\frac{4\big(1 - \max_{d \in S^c} \n{J_{nd}}_1\big)}{K} \right] \leq 16e^{-25}  \end{equation}
\end{lm}
\begin{proof}
  Using \cref{logisticRegressionLambdaMin}, \cref{logisticRegressionIncohBound}, and \cref{logisticRegressionKBound}, the desired probability is $\leq 16e^{-25}$ if the following deterministic inequality holds:
  \begin{equation} \lambda \leq \frac{4(\alpha - \MJLogr)}{4(\alpha - \MJLogr + 4)^2} \frac{(L_{min} - Cc_x^2 \sqrt{\deff N})^2}{\left(\sqrt{2c_x^2 \log(ND)} + \sqrt{50c_x^2} \right)\left( \deff + c_x^2 \deff (\log N + 26)\right)} \label{lambdaMinCondition} \end{equation}
    We will lower bound the right hand side and show that $\lambda$ is less than this lower bound. Throughout, $C$ will be a generic constant that changes from line-to-line. First, as noted in the proof of \cref{logisticRegressionTheorem}, $\MJLogr = o(1)$ as $N\to\infty$, so that for large enough $N$, we have $(\alpha-\MJLogr)/(\alpha-\MJLogr + 4)^2 \geq (\alpha/2) / (\alpha/2 +4)^2$. Next, for large enough $N$, \cref{logisticRegressionLambdaMin} implies the denominator is greater than $CN$. Also for large enough $N$, the denominator is less than $C\deff \log N \sqrt{\log(ND)}$. We are left with checking the condition:
  \begin{equation}
    \lambda \leq C \frac{\alpha/2}{(\alpha/2 + 4)^2}  \frac{N^2}{\deff \log N \sqrt{\log(ND)}}. 
  \end{equation}
  Under \cref{assum:DDeffScaling}, we can upper bound the denominator to get a further lower bound on the right hand side:
  \begin{equation}
    \lambda \leq C \frac{\alpha/2}{(\alpha/2 + 4)^2} \frac{\log^{2/5}(N) N^2}{N^{2/5} \log N \sqrt{\log(N) + N}}. 
  \end{equation}
  Now, the right hand side goes to infinity as $N$ gets large, while the $\lambda$ given in \cref{logisticRegressionTheorem} goes to 0 as $N$ gets large. Thus, for sufficiently large $N$, \cref{lambdaMinCondition} holds.
\end{proof}

\end{document}